\theoremstyle{plain}
\newtheorem{theorem}{Theorem}[section]
\newtheorem{lemma}[theorem]{Lemma}
\newtheorem{corollary}[theorem]{Corollary}
\theoremstyle{definition}
\newtheorem{definition}[theorem]{Definition}
\newtheorem{remark}[theorem]{Remark}
\definecolor{linkcolor}{RGB}{83,83,182}
\title{Differentially Private Coordinate Descent \\
  for Composite Empirical Risk Minimization}
\date{}
\author[1]{Paul Mangold}
\author[1]{Aurélien Bellet}
\author[2,3]{Joseph Salmon}
\author[4]{Marc Tommasi}
\affil[1]{Univ. Lille, Inria,  CNRS, Centrale Lille, UMR 9189 - CRIStAL, F-59000 Lille, France}
\affil[2]{IMAG, Univ Montpellier, CNRS, Montpellier, France}
\affil[3]{Institut Universitaire de France (IUF)}
\affil[4]{Univ. Lille, CNRS, Inria, Centrale Lille,  UMR 9189 - CRIStAL, F-59000 Lille, France}
\titleformat*{\subparagraph}{\itshape}
\newcommand{\paul}[1]{\todo{\textbf{Paul:} #1}}
\newcommand{\aurelien}[1]{\todo[color=green]{\textbf{Aurelien:} #1}}
\newcommand{\marc}[2][]{\todo[color=red!60,#1]{\textbf{Marc:} #2}}
\newcommand{\jo}[2][]{\todo[color=blue!60,#1]{\textbf{Jo:} #2}}
\newcommand{\jrs}[1]{\textcolor{red}{#1}}
\renewcommand{\paul}[1]{}
\renewcommand{\aurelien}[1]{}
\renewcommand{\marc}[2][]{}
\renewcommand{\jo}[2][]{}
\renewcommand{\jrs}[1]{}
\newtheorem{claim}{Claim}
\newcommand*{\sketchproofname}{Sketch of Proof}
\newenvironment{sketchproof}[1][\sketchproofname]{
  \begin{proof}[#1] } {
  \end{proof}
}
\theoremstyle{theorem}
\newenvironment{restate-theorem}[1]
{\innercustomthm}
{\endinnercustomthm}
\newcommand{\norm}[1]{\left \lVert #1 \right \rVert}
\newcommand{\scalar}[2]{\left\langle #1, #2 \right\rangle}
\newcommand{\abs}[1]{\left| #1 \right|}
\newcommand{\card}[1]{\left| #1 \right|}
\DeclareMathOperator{\tr}{tr}
\DeclareMathOperator{\diag}{diag}
\DeclareMathOperator{\prox}{prox}
\DeclareMathOperator{\clip}{clip}
\DeclareMathOperator{\conv}{conv}
\DeclareMathOperator*{\argmin}{arg\,min}
\newcommand{\expec}[2]{\mathbb E_{#1} \hspace{-0.29em} \left[ #2 \right]}
\newcommand{\condexpec}[3]{\mathbb E_{#1} \hspace{-0.29em} \left[ #2 \middle\vert #3 \right]}
\newcommand{\prob}[1]{\textnormal{Pr} \left[ #1 \right]}
\newcommand{\NN}{\mathbb N}
\newcommand{\RR}{\mathbb R}
\newcommand{\cA}{\mathcal A}
\newcommand{\cC}{\mathcal C}
\newcommand{\cD}{\mathcal D}
\newcommand{\cF}{\mathcal F}
\newcommand{\cJ}{\mathcal J}
\newcommand{\cM}{\mathcal M}
\newcommand{\cQ}{\mathcal Q}
\newcommand{\cW}{\mathcal W}
\newcommand{\cX}{\mathcal X}
\newcommand{\bbE}{\mathbb E}
\newcommand{\ie}{{\em i.e.,~}}
\newcommand{\eg}{{\em e.g.,~}}
\newcommand{\wrt}{{\em w.r.t.~}}
\begin{document}

\maketitle

\begin{abstract}
  Machine learning models can leak information about the data used to
  train them. To mitigate this issue, Differentially Private (DP) variants of
  optimization
  algorithms like Stochastic Gradient Descent (DP-SGD) have been
  designed to trade-off utility for privacy in Empirical Risk
  Minimization (ERM) problems. In this paper, we propose
  Differentially Private proximal Coordinate Descent (DP-CD), a new
  method to solve composite DP-ERM problems. We derive utility
  guarantees through a novel theoretical analysis of inexact
  coordinate descent. Our results show that, thanks to larger step
  sizes, DP-CD can exploit imbalance in gradient coordinates to
  outperform DP-SGD. We also prove new lower bounds for composite
  DP-ERM under coordinate-wise regularity assumptions, that are nearly
  matched by DP-CD. For practical implementations, we propose to clip
  gradients using coordinate-wise thresholds that emerge
  from our theory, avoiding costly hyperparameter tuning. Experiments
  on real and synthetic data support our results, and show that DP-CD
  compares favorably with DP-SGD.
\end{abstract}

\section{Introduction}

Machine learning fundamentally relies on the availability of data, which can
be sensitive or confidential.
It is now well-known that preventing learned models from leaking information
about individual training points requires particular attention
\citep{shokri2017Membership}.
A standard approach for training models while provably controlling the amount of
leakage is to solve an empirical risk minimization (ERM) problem
under a differential privacy (DP) constraint \citep{chaudhuri2011Differentially}.
In this work, we aim to design a differentially private algorithm which
approximates the solution to a composite ERM problem of the form:
\begin{align}
  \label{eq:dp-erm}
  w^* \in
  \argmin_{w \in \mathbb{R}^p}
  \left\{
  \frac{1}{n} \sum_{i=1}^n \ell(w; d_i) + \psi(w)
  \right\}
  \enspace,
\end{align}
where $D = (d_1, \dots, d_n)$
is a dataset of $n$ samples drawn from a universe $\cX$,
$\ell: \RR^p \times \cX \rightarrow \RR$ is a loss function which is convex
and smooth in $w$, and
$\psi: \RR^p \rightarrow \RR$ is a convex regularizer which is separable (\ie
$\psi(w) = \sum_{j=1}^p \psi_j(w_j)$) and typically nonsmooth (\eg
$\ell_1$-norm).

Differential privacy constraints induce a trade-off between the privacy and
the utility (i.e., optimization error) of the solution of~\eqref{eq:dp-erm}.
This trade-off was made explicit by \citet{bassily2014Private}, who derived
lower bounds on the achievable error given a fixed privacy budget.
To solve the DP-ERM problem in practice, the most popular approaches are based
on Differentially Private variants of Stochastic Gradient Descent (DP-SGD)
\citep{bassily2014Private,abadi2016Deep,wang2017Differentially}, in which
random perturbations are added to the (stochastic) gradients.
\citet{bassily2014Private} analyzed DP-SGD in the non-smooth DP-ERM setting,
and \citet{wang2017Differentially} then proposed an efficient DP-SVRG
algorithm for composite DP-ERM.
Both algorithms match known lower bounds.
SGD-style algorithms perform well in a wide variety of settings, but
also have some flaws: they either require small (or decreasing) step
sizes or variance reduction schemes to guarantee convergence, and they
can be slow when gradients' coordinates are imbalanced.
These flaws propagate to the private counterparts of these
algorithms.
Despite a few attempts at designing other differentially private solvers for
ERM under different setups
\citep{talwar2015Nearly,damaskinos2021Differentially}, the differentially
private optimization toolbox remains limited, which undoubtedly restricts the
resolution of practical problems.

In this paper, we propose and analyze a Differentially Private proximal
Coordinate
Descent algorithm (DP-CD), which performs updates based on perturbed
coordinate-wise gradients (\ie partial derivatives).  Coordinate
Descent (CD) methods have encountered a large success in non-private
machine learning due to their simplicity and effectiveness
\citep{liu2009Blockwise,friedman2010Regularization,chang2008Coordinate,sardy2000Block},
and have seen a surge of practical and theoretical interest in the
last decade \citep{Nesterov12,wright2015Coordinate,shi2017Primer,
  richtarik2014Iteration,fercoq2014Accelerated,tappenden2016Inexact,
  hanzely2020Variance,nutini2015Coordinate,karimireddy2019Efficient}.
In contrast to SGD, they converge with constant step sizes that adapt to
the coordinate-wise smoothness of
the objective. Additionally, CD updates naturally tend to
have a lower sensitivity. Operating with partial gradients thus enables
our private algorithm to reduce the perturbation required to
guarantee privacy without resorting to
amplification by
subsampling \citep{Balle_subsampling,mironov2019Enyi}.

We propose a novel analysis of proximal CD with perturbed gradients to
derive optimal upper bounds on the privacy-utility trade-off achieved
by DP-CD.
We prove a
recursion on distances of CD iterates to an optimal point that keeps track of
coordinate-wise regularity
constants in a tight manner and allows to use
large, constant step sizes that
yield high utility. Our results highlight the fact that DP-CD
can exploit imbalanced gradient coordinates to outperform DP-SGD.
They also improve upon known convergence rates for inexact CD in the
non-private setting
\citep{tappenden2016Inexact}.
We assess the optimality of DP-CD by deriving lower bounds
that capture coordinate-wise Lipschitz regularity measures, and show that
DP-CD matches those bounds up to logarithmic factors.
Our lower bounds also suggest interesting perspectives for future work on
DP-CD algorithms.

Our theoretical results
have important consequences for practical
implementations, which heavily rely on gradient clipping to achieve good
utility.
In contrast to DP-SGD, DP-CD requires to set \emph{coordinate-wise} clipping
thresholds, which can lead to impractical coordinate-wise hyperparameter tuning.
We instead propose a simple rule for adapting these thresholds from a
single hyperparameter. We also show how the coordinate-wise smoothness
constants used by DP-CD can be
estimated privately. We validate our theory with numerical
experiments on real and synthetic datasets. These experiments further
show that even in balanced problems, DP-CD can still improve over
DP-SGD, confirming the relevance of DP-CD for DP-ERM.

Our main contributions can be summarized as follows:
\begin{enumerate}
  \item We propose the first proximal CD algorithm for composite DP-ERM,
        formally prove its utility, and highlight regimes where it outperforms DP-SGD.
  \item We show matching lower bounds under coordinate-wise regularity
        assumptions.
      \item We give practical guidelines to use DP-CD, and show its
        relevance through numerical experiments.
\end{enumerate}

The rest of this paper is organized as follows.
We first describe some mathematical background in
\Cref{sec:preliminaries}.
In \Cref{sec:diff-priv-coord}, we present our DP-CD algorithm,
show that it satisfies DP, establish utility guarantees, and
compare these guarantees with those of DP-SGD.
In \Cref{sec:utility-lower-bounds}, we derive lower bounds under
coordinate-wise regularity assumptions, and
show that DP-CD can match them. \Cref{sec:dp-cd-practice} discusses practical
questions related to gradient clipping and the private estimation of
smoothness constants.
\Cref{sec:numerical-experiments} presents our numerical experiments,
comparing DP-CD and DP-SGD on LASSO and $\ell_2$-regularized
logistic regression problems. %
Finally, we review existing work in
\Cref{sec:related-works}, and conclude with promising lines of future work in
\Cref{sec:conclusion-and-discussion}.

\section{Preliminaries}
\label{sec:preliminaries}

In this section, we introduce important technical notions that will be used
throughout the paper.

\paragraph{Norms.}
We start by defining two conjugate norms that will be crucial in our analysis,
for they allow to keep track of coordinate-wise quantities.
Let $\scalar{u}{v} = \sum_{j=1}^p u_i v_i$ be the Euclidean dot product, let $M = \diag(M_1, \dots, M_p)$ with $M_1, \dots, M_p > 0$, and
\begin{align*}
  \norm{w}_M = \sqrt{\scalar{Mw}{w}}\enspace,\quad\quad\quad
  \norm{w}_{M^{-1}} = \sqrt{\scalar{M^{-1}w}{w}} \enspace.
\end{align*}
When $M$ is the identity matrix $I$, the $I$-norm $\norm{\cdot}_I$ is the standard $\ell_2$-norm $\norm{\cdot}_2$.

\paragraph{Regularity assumptions.}
We recall classical regularity assumptions along with ones
specific to the coordinate-wise setting.
We denote by $\nabla f$ the gradient of
a differentiable function $f$, and by $\nabla_j f$ its $j$-th coordinate.
We denote by $e_j$ the $j$-th vector of $\RR^p$'s canonical basis.

\textit{Convexity:} a differentiable function $f : \RR^p
  \rightarrow \RR$ is convex if
for all $v, w \in \RR^p$,
$f(w) \ge f(v) + \scalar{\nabla f(v)}{w - v}$.

\textit{Strong convexity:} a differentiable function $f : \RR^p \rightarrow
  \RR$ is
$\mu_M$-strongly-convex \wrt the norm $\smash{\norm{\cdot}_M}$ if
for all $v, w \in \RR^p$,
$f(w) \ge f(v) + \scalar{\nabla f(v)}{w - v} + \frac{\mu_M}{2}\norm{w - v}_M^2$.
The case $M_1=\cdots=M_p=1$ recovers standard $\mu_I$-strong convexity \wrt
the $\ell_2$-norm.

\textit{Component Lipschitzness:} a function $f : \RR^p \rightarrow \RR$
is
$L$-component-Lipschitz for $L = (L_1,\dots,L_p)$ with $L_1,\dots,L_p > 0$ if
for all $w \in \RR^p$, $t \in \RR$ and $j \in [p]$,
$\abs{f(w + t e_j) - f(w)} \le L_j \abs{t}$.
It is $\Lambda$-Lipschitz if for all $v, w \in \RR^p$,
$\abs{f(v) - f(w)} \le \Lambda \norm{v - w}_2$.

\textit{Component smoothness:} a differentiable function $f : \RR^p
  \rightarrow \RR$ is
$M$-component-smooth for $M_1,\dots,M_p > 0$ if
for all $v, w \in \RR^p$,
$f(w) \le f(v) + \scalar{\nabla f(v)}{w - v} + \frac{1}{2}\norm{w - v}_{M}^2$.
When $M_1=\dots=M_p=\beta$, $f$ is said to be $\beta$-smooth.

The above component-wise regularity hypotheses are not restrictive:
$\Lambda$-Lipschitzness
implies $(\Lambda, \dots, \Lambda)$-component-Lipschitzness and
$\beta$-smoothness implies $(\beta, \dots, \beta)$-component-smoothness.
Yet, the actual component-wise constants of a function can be much
lower than what can be deduced from their global counterparts.
This will be crucial for our analysis and in the performance of DP-CD.

\begin{remark}
  \label{rmq:constrained-regularity-assumptions}
  When $\psi$ is the characteristic function of a convex set (with separable
  components), the regularity assumptions only need to hold on this
  set. This allows considering problem~\eqref{eq:dp-erm} with a smooth
  objective under box-constraints.
\end{remark}

\paragraph{Differential privacy (DP).}

Let $\cD$ be a set of datasets and $\cF$ a set of possible outcomes.
Two datasets $D, D' \in \cD$ are said \textit{neighboring}
(denoted by $D \sim D'$) if they differ on at most one element.

\begin{definition}[Differential Privacy, \citealt{dwork2006Differential}]
  A randomized algorithm
  $\cA : \mathcal D
    \rightarrow \mathcal F$ is $(\epsilon, \delta)$-differentially private if,
  for all neighboring datasets $D, D' \in \mathcal D$ and all
  $S \subseteq \mathcal F$ in the range of $\cA$:
  \begin{align*}
    \prob{\cA(D) \in S} \le \exp(\epsilon) \prob{\cA(D') \in S} + \delta \enspace.
  \end{align*}
\end{definition}
The value of a function $h: \mathcal D \rightarrow \mathbb R^p$ can be privately
released using the Gaussian mechanism, which adds centered Gaussian noise
to $h(D)$ before releasing it \citep{dwork2013Algorithmic}.
The scale of the noise is calibrated to the sensitivity $
  \Delta(h)
  = \sup_{D \sim D'} \norm{h(D) - h(D')}_2$ of $h$.
In our setting, we will perturb coordinate-wise gradients: we denote by
$\Delta(\nabla_j \ell)$ the sensitivity of the $j$\nobreakdash-th coordinate
of gradient of the loss function $\ell$ with respect to the data.
When $\ell(\cdot;d)$ is $L$-component-Lipschitz for all $d\in\mathcal{X}$, upper
bounds on these sensitivities are readily available: we have
$\Delta(\nabla_j\ell) \le 2L_j$ for any $j\in[p]$ (see \Cref{sec:lemma-sensitivity}).
The following quantity, relating the coordinate-wise sensitivities of gradients
to coordinate-wise smoothness is central in our analysis:
\begin{align}
  \label{eq:delta-lipschitz-norm}
  \Delta_{M^{-1}}(\nabla \ell)
  = \Big(\sum_{j=1}^p \frac{1}{M_j} \Delta (\nabla_j\ell)^2\Big)^{\frac{1}{2}}
  \leq \! 2 \norm{L}_{M^{-1}}\enspace.
\end{align}
In this paper, we consider the classic central model of DP, where a trusted
curator has access to the raw dataset and releases a model trained on this
dataset\footnote{In fact, our privacy guarantees hold even if all
  intermediate iterates are released (not just the final model).}.

\section{Differentially Private Coordinate Descent}
\label{sec:diff-priv-coord}

In this section, we introduce the
Differentially Private proximal Coordinate Descent (DP-CD) algorithm to
solve problem~\eqref{eq:dp-erm} under $(\epsilon,\delta)$-DP constraints.
We first describe our algorithm, show how to parameterize it to
satisfy the desired privacy constraint, and prove corresponding utility
results.
Finally, we compare these utility guarantees with DP-SGD.

\subsection{Private Proximal Coordinate Descent}

Let $D = \{d_1, \dots, d_n\} \in \cX^n$ be a dataset.
We denote by $f(w) = \frac{1}{n}\sum_{i=1}^n \ell(w; d_i)$ the $M$-component-smooth
part of \eqref{eq:dp-erm},
by $\psi(w) = \sum_{j=1}^p \psi_j(w_j)$ its separable part,
and let $F(w) = f(w) + \psi(w)$.
Proximal coordinate descent methods \cite{richtarik2014Iteration}
solve problem~\eqref{eq:dp-erm} through iterative proximal gradient
steps along each coordinate of~$F$.
Formally, given $w \in \RR^p$ and $j \in [p]$, the $j$-th coordinate of
$w$ is updated as follows:
\begin{align}
  \label{eq:proximal-update-nonoise}
  w_j^+ = \prox_{\gamma_j\psi_j} \big(w_j - \gamma_j
  \nabla_j f(w_t)\big)\enspace,
\end{align}
where $\gamma_j>0$ is the step size and $\prox_{\gamma_j\psi_j}
(w)= \smash{\argmin_{v\in\RR^p} \big
\{ \frac{1}{2} \norm{v - w}_2^2 + \gamma_j\psi_j(v) \big\}}$ is the proximal
operator associated with $\psi_j$ \citep{parikh2014Proximal}.

\begin{algorithm*}[t]
  \caption{Differentially Private Proximal Coordinate Descent Algorithm
    (DP-CD).}
  \label{algo:dp-cd}
  \textbf{Input}:
  noise scales $\sigma = (\sigma_1, \dots, \sigma_p)$ for $\sigma_1,\dots,\sigma_p > 0$;
  step sizes $\gamma_1,\dots,\gamma_p > 0$;
  initial point $\bar w^0 \in \mathbb{R}^p$;
  iteration budgets $T, K > 0$.
  \begin{algorithmic}[1]
    \For{$t = 0, \dots, T-1$}
    \State Set $\theta^0 = \bar w^t$
    \For{$k = 0, \dots, K-1$}
    \State Pick $j$ from $\{1, \dots, p\}$ uniformly at random
    \State Draw $\eta_j \sim \mathcal N(0, \sigma_j^2)$
    \label{algo-line:noise-generation}
    \State Set $\theta^{k+1} = \theta^k$
    \State Set $\theta^{k+1}_{j} = \prox_{\gamma_{j}\psi_{j}} (\theta^{k}_{j} -
      \gamma_{j} (\nabla_{j} f(\theta^k) + \eta_j))$
    \label{algo-line:coordinate-minimization-update}
    \vspace*{-.02cm}
    \EndFor
    \State Set $\bar w_{t+1} = \frac 1K \sum_{k=1}^K \theta^k$
    \label{algo-line:periodic_avg}
    \EndFor
    \State \Return $ w_{priv} = \bar w_T$
  \end{algorithmic}
\end{algorithm*}
Update \eqref{eq:proximal-update-nonoise} only requires the computation of the
$j$-th entry of the gradient. To satisfy differential privacy, we perturb this
gradient entry with additive Gaussian noise of variance $\sigma_j^2$.
The complete DP-CD procedure is shown in \Cref{algo:dp-cd}.
At each iteration, we pick a coordinate uniformly at random
and update according to~\eqref{eq:proximal-update-nonoise}, albeit with noise
addition (see line \ref{algo-line:coordinate-minimization-update}).
For technical reasons related to our analysis, we use a
periodic averaging scheme (line~\ref{algo-line:periodic_avg}).
This scheme is similar to DP-SVRG \citep{johnson2013Accelerating}, although
no variance reduction is required since DP-CD computes coordinate gradients
over the whole dataset.

\subsection{Privacy Guarantees}
\label{sec:privacy-guarantees}

For \Cref{algo:dp-cd} to satisfy $(\epsilon,\delta)$-DP, the noise scales
$\sigma=(\sigma_1,\dots,\sigma_p)$ can be calibrated as given in \Cref
{thm:dp-cd-privacy}. %

\begin{theorem}
  \label{thm:dp-cd-privacy}
  Assume $\ell(\cdot;d)$ is $L$-component-Lipschitz $\forall d\in\cX$.
  Let $\epsilon \leq 1$ and $\delta < 1/3$.
  If $\sigma_j^2 = \frac{12L_j^2 TK \log(1/\delta)}{n^2\epsilon^2}$
  for all $j \in [p]$,
  then \Cref{algo:dp-cd} satisfies $(\epsilon, \delta)$-DP.
\end{theorem}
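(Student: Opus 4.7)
The plan is to reduce the privacy analysis to a sequence of applications of the Gaussian mechanism and control the cumulative loss via zero-concentrated DP (zCDP) composition. First, by post-processing, it suffices to bound the privacy of the sequence of $TK$ noisy partial gradients produced on line \ref{algo-line:noise-generation}: everything else (the proximal step, the averaging on line \ref{algo-line:periodic_avg}, and the coordinate selection, which is data-independent) depends on the dataset only through those quantities. Each iteration thus reduces to releasing $\nabla_j f(\theta^k) + \eta_j$ where $\nabla_j f(\theta^k) = \frac{1}{n}\sum_{i=1}^n \nabla_j\ell(\theta^k; d_i)$.

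Next, I would compute the per-step sensitivity. By the $L$-component-Lipschitz assumption, $\abs{\nabla_j \ell(w;d) - \nabla_j \ell(w;d')} \le 2L_j$ (the lemma announced in \Cref{sec:lemma-sensitivity}), so the $\ell_2$-sensitivity of $D \mapsto \nabla_j f(\theta^k; D)$ is at most $2L_j/n$. Hence the $k$-th step, conditionally on the drawn coordinate $j$, is $\rho_j$-zCDP with $\rho_j = (2L_j/n)^2 / (2\sigma_j^2)$ via the standard zCDP guarantee of the Gaussian mechanism. The key simplification is that the prescribed choice $\sigma_j^2 = 12 L_j^2 T K \log(1/\delta)/(n^2\epsilon^2)$ makes $L_j$ cancel, giving $\rho_j = \epsilon^2 / (6 T K \log(1/\delta))$ independent of $j$, so the coordinate-sampling randomness becomes irrelevant for privacy accounting.

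Finally, I would apply zCDP composition across the $TK$ iterations to obtain $\rho = \epsilon^2/(6\log(1/\delta))$-zCDP and convert back to approximate DP via the standard implication $\rho\text{-zCDP} \Rightarrow (\rho + 2\sqrt{\rho \log(1/\delta)}, \delta)\text{-DP}$. It then remains to verify
\[
\frac{\epsilon^2}{6\log(1/\delta)} + \epsilon\sqrt{\tfrac{2}{3}} \le \epsilon,
\]
which under $\epsilon \le 1$ and $\delta < 1/3$ (so $\log(1/\delta) > \log 3$) reduces to the numerical check $\tfrac{1}{6\log 3} + \sqrt{2/3} < 1$.

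The proof is therefore mostly bookkeeping; the main subtlety is calibrating the constant $12$ so that the zCDP composition plus zCDP-to-$(\epsilon,\delta)$-DP conversion fits inside the budget under $\epsilon \le 1$, and scaling $\sigma_j$ proportionally to $L_j$ so that all per-step contributions $\rho_j$ are equal and composition is insensitive to which coordinate is drawn at each step. If a tighter constant were desired, one could instead use advanced composition of $(\epsilon_0, \delta_0)$-DP Gaussian mechanisms directly, but the zCDP route yields the cleanest constants.
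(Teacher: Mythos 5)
Your proposal is correct and follows essentially the same route as the paper: per-step sensitivity $2L_j/n$ from component-Lipschitzness, Gaussian-mechanism accounting with noise proportional to $L_j$ so every step contributes the same privacy parameter, composition over the $TK$ releases, and conversion to $(\epsilon,\delta)$-DP with the identical budget check $\tfrac{\epsilon^2}{6\log(1/\delta)}+\sqrt{2/3}\,\epsilon\le\epsilon$. The only cosmetic difference is that you phrase the accounting in zCDP while the paper uses R\'enyi DP, which for the Gaussian mechanism (linear $\epsilon(\alpha)$, optimized over $\alpha$) yields exactly the same bound, as the paper itself remarks.
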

\begin{sketchproof}(Complete proof in \Cref{sec:proof-privacy}).
  We track the privacy loss using Rényi differential privacy (RDP),
  which gives better guarantees than $(\epsilon,\delta)$-DP for the
  composition
  of Gaussian mechanisms \citep{mironov2017Renyi}.  The
  $j$\nobreakdash-th entry of $\nabla f$ has sensitivity
  $\Delta(\nabla_j f) = {\Delta(\nabla_j \ell)}/{n} \le {2L_j}/{n}$.
  By the Gaussian mechanism each iteration of DP-CD is
  $(\alpha, \frac{2 L_j^2 \alpha}{n^2\sigma_j^2})$-RDP for all
  $\alpha > 1$. The composition theorem for RDP gives a global RDP
  guarantee for DP-CD, that we convert to $(\epsilon,\delta)$-DP using
  Proposition~3 of \citet{mironov2017Renyi}. Choosing $\alpha$
  carefully finally proves the result.
\end{sketchproof}

The dependence of the noise scales on $\epsilon$, $\delta$, $n$ and
$TK$ (the number of updates) in \Cref{thm:dp-cd-privacy} is standard
in DP-ERM. However, the noise is calibrated to the loss function's
\emph{component}-Lipschitz constants. These can be much lower their
global counterpart, the latter being used to calibrate the noise
in DP-SGD algorithms. This will be crucial for DP-CD to achieve better utility
than DP-SGD in some regimes.
We also note that, unlike DP-SGD, DP-CD does not rely on privacy
amplification by subsampling \citep{Balle_subsampling,mironov2019Enyi}, and
thereby avoids the approximations required by these
schemes to bound the privacy loss.

\begin{remark}
  Theorem~\ref{thm:dp-cd-privacy} assumes $\epsilon \in (0,1]$ to give
  a simple closed form for the noise scales. In practice we compute
  tighter values numerically using Rényi DP formulas directly (see
  Eq.~\ref{eq:full_privacy_formula} in \Cref{sec:proof-privacy}),
  removing the need for this assumption.
\end{remark}

\subsection{Utility Guarantees}
\label{sec:utiltity-analysis-cd}

We now state our central result on the utility of DP-CD
for the composite DP-ERM problem.
As done in previous work, we use the asymptotic notation $\widetilde O$ to hide
non-significant logarithmic factors. Non-asymptotic utility bounds can be
found in \Cref {sec-app:proof-utility}.
\begin{theorem}
  \label{thm:cd-utility}
  Let $\ell(\cdot; d)$ be a convex and $L$-component-Lipschitz loss
  function for all $d \in \cX$, and $f$ be convex and
  $M$-component-smooth. Let
  $\psi : \RR^p \rightarrow \RR$ be a convex and separable function.
  Let $\epsilon \leq 1, \delta < 1/3$ be the privacy budget.  Let $w^*$
  be a minimizer of $F$ and $F^* = F(w^*)$.
  Let $w_{priv}\in\mathbb{R}^p$ be the output of \Cref{algo:dp-cd} with step
  sizes $\gamma_j = {1}/{M_j}$, and noise
  scales $\sigma_1,\dots,\sigma_p$ set as in Theorem~\ref{thm:dp-cd-privacy} (with $T$ and $K$ chosen below) to ensure
  $(\epsilon,\delta)$-DP.
  Then, the following holds:
  \begin{enumerate}[leftmargin=12pt]
    \item For $F$ convex, $K=O\left( \frac{R_M \sqrt{p} n \epsilon}{\norm{L}_{M^{-1}}} \right)$, and $T = 1$, then:
          \begin{align*}
            \expec{}{F(w_{priv}) - F^*}
            = \widetilde O\bigg(\frac{\sqrt{p \log(1/\delta)}}{n\epsilon}
            \norm{L}_{M^{-1}} R_M\bigg)\enspace,
          \end{align*}
          where $R_M = \max(\sqrt{F(w^0) - F(w^*)}, \norm{w^0 - w^*}_M)$
          and more simply $R_M = \norm{w^0 - w^*}_M$ when $\psi = 0$.
    \item For $F$ $\mu_M$-strongly convex w.r.t. $\smash{\norm{\cdot}_M}$,
          $K = O\left(p/\mu_M\right)$, and
          $T = O\left( \log(n\epsilon \mu_M/p \norm{L}_{M^{-1}}) \right)$, then:
          \begin{align*}
            \expec{}{F(w_{priv}) - F^*}
            = \widetilde O\bigg(\frac{p\log(1/\delta)}{n^2 \epsilon^2}
            \frac{\norm{L}_{M^{-1}}^2}{\mu_M}
            \bigg)\enspace.
          \end{align*}
  \end{enumerate}%
  Expectations are over the randomness of the algorithm.
\end{theorem}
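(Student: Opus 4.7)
The plan is to (i) prove a one-step progress inequality in the weighted norm $\norm{\cdot}_M$ for the noisy proximal update, (ii) telescope over the $K$ inner steps and combine with Jensen's inequality on the periodic average to obtain a bound on $\bbE[F(\bar w^{t+1}) - F^*]$, and (iii) in the strongly convex case, upgrade this bound to a contraction on $\bbE\norm{\bar w^t - w^*}_M^2$ and iterate $T$ times.

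For step (i), I fix $\theta^k$ and the random coordinate $j$, and combine the $M$-component smoothness of $f$ (perfectly matched to the step $\gamma_j=1/M_j$) with the convexity of $\psi_j$ and the first-order optimality of the prox. Writing the noisy partial gradient as $\nabla_j f(\theta^k)+\eta_j$ and taking expectation over $\eta_j \sim \mathcal N(0,\sigma_j^2)$ eliminates the linear-in-noise cross term and contributes a quadratic term $\sigma_j^2/M_j$. Averaging over $j$ uniform on $[p]$ then yields a recursion of the form
\[
\bbE\bigl[\norm{\theta^{k+1}-w^*}_M^2 \mid \theta^k\bigr]
\le \norm{\theta^k-w^*}_M^2 - \tfrac{2}{p}\bigl(F(\theta^k)-F^*\bigr) + \tfrac{1}{p}\norm{\sigma}_{M^{-1}}^2 \enspace,
\]
where $\norm{\sigma}_{M^{-1}}^2 = \sum_{j=1}^p \sigma_j^2/M_j = \widetilde O\bigl(TK\,\norm{L}_{M^{-1}}^2/(n^2\epsilon^2)\bigr)$ after plugging in the noise scales of Theorem~\ref{thm:dp-cd-privacy}.

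Telescoping this inequality over $k=0,\dots,K-1$ inside one outer iteration (with $\theta^0=\bar w^t$), using convexity of $F$ and Jensen's inequality on $\bar w^{t+1}=\tfrac{1}{K}\sum_{k=1}^K\theta^k$, gives the outer-step bound
\[
\bbE\bigl[F(\bar w^{t+1})-F^*\bigr]
\le \frac{p\,\bbE\norm{\bar w^t-w^*}_M^2}{2K} + \frac{1}{2}\norm{\sigma}_{M^{-1}}^2 \enspace.
\]
In the plain convex case with $T=1$, balancing the two terms in $K$ gives $K=\Theta(R_M\sqrt{p}\,n\epsilon/\norm{L}_{M^{-1}})$ and the stated bound. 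The appearance of $\sqrt{F(w^0)-F^*}$ inside $R_M$ when $\psi\ne 0$ comes from the fact that starting at $\bar w^0=w^0$ one cannot in general control $\norm{w^0-w^*}_M^2$ purely through distance; a standard Bregman-type inequality on the smooth part lets the initial suboptimality play that role instead. For the strongly convex case, the bound $\tfrac{\mu_M}{2}\norm{\bar w^{t+1}-w^*}_M^2 \le F(\bar w^{t+1})-F^*$ upgrades the outer-step bound to $\bbE\norm{\bar w^{t+1}-w^*}_M^2 \le \tfrac{1}{2}\bbE\norm{\bar w^t-w^*}_M^2 + O(\norm{\sigma}_{M^{-1}}^2/\mu_M)$ with $K=\Theta(p/\mu_M)$; iterating $T$ times and choosing $T=\Theta(\log(n\epsilon\mu_M/(p\norm{L}_{M^{-1}})))$ so that the initial distance term falls below the noise floor yields the claimed $\widetilde O(p\log(1/\delta)\norm{L}_{M^{-1}}^2/(n^2\epsilon^2\mu_M))$ bound.

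The main obstacle I anticipate is step (i): deriving the one-step recursion in the weighted $M$-norm while cleanly handling the non-smooth composite term $\psi$. This requires a composite three-point lemma for the proximal operator, adapted to the coordinate-wise step sizes $\gamma_j=1/M_j$, so that after averaging over the random coordinate the function-value gap $F(\theta^k)-F^*$ appears with the correct $2/p$ coefficient on the right-hand side and the noise contribution separates cleanly into $\norm{\sigma}_{M^{-1}}^2/p$. Once that recursion is in hand, the telescoping, the Jensen step, and the tuning of $T$ and $K$ are essentially standard.
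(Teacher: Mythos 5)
Your overall plan (inner-loop telescoping in the $M$-norm, Jensen on the periodic average, then strong convexity to get a contraction for the outer loop, then tuning $K$ and $T$) is the same as the paper's. But your step (i), the one-step recursion, is wrong as stated, and it is precisely the step the paper's proof is built to get right. A pure distance recursion of the form $\bbE\bigl[\norm{\theta^{k+1}-w^*}_M^2\mid\theta^k\bigr]\le\norm{\theta^k-w^*}_M^2-\tfrac{2}{p}\bigl(F(\theta^k)-F^*\bigr)+\tfrac{1}{p}\norm{\sigma}_{M^{-1}}^2$, with the function gap at the \emph{old} iterate, fails for composite objectives: take $f=0$, $p=1$, $\psi(v)=\abs{v}$, $\theta^k=10$, $w^*=0$, no noise; then $\theta^{k+1}=\prox_{\psi}(10)=9$ and the claimed inequality reads $81\le 100-2\cdot 10=80$, which is false. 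The three-point/prox inequality only yields the gap at the \emph{new} point, and in randomized composite CD the expectation over $j$ produces the mixed recursion $\condexpec{j,\eta}{F(\theta^{k+1})-F^*}{\theta^k}-\tfrac{p-1}{p}\bigl(F(\theta^k)-F^*\bigr)\le\tfrac12\norm{\theta^k-w^*}_M^2-\tfrac12\condexpec{j,\eta}{\norm{\theta^{k+1}-w^*}_M^2}{\theta^k}+\tfrac1p\norm{\sigma}_{M^{-1}}^2$ (the paper's descent lemma), which cannot be collapsed to your form without assuming the objective decreases at each step --- exactly the monotonicity assumption the paper avoids because it is incompatible with noisy (DP) updates. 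Telescoping the mixed recursion also leaves a boundary term $F(\bar w^t)-F^*$, which is the true origin of the $\max(\sqrt{F(w^0)-F^*},\norm{w^0-w^*}_M)$ in $R_M$; your ``Bregman-type inequality on the smooth part'' explanation misattributes this.

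A second, related gap is your noise treatment: you claim the linear-in-noise cross term vanishes in expectation, but with a nontrivial $\psi$ the updated coordinate is $\prox_{\gamma_j\psi_j}(\theta^k_j-\gamma_j(\nabla_j f(\theta^k)+\eta_j))$, so the point at which the linear noise term is evaluated depends (nonlinearly) on $\eta_j$ and its expectation is not zero. The paper handles this by comparing the noisy update $g_j$ to its noiseless counterpart $\tilde g_j$, using that $\eta_j$ is independent of $\tilde g_j$ for the zero-mean part, and then the non-expansiveness of the proximal operator, $\abs{\tilde g_j-g_j}\le\abs{\eta_j}$, to bound the residual by $\sum_j\gamma_j\sigma_j^2=\norm{\sigma}_{M^{-1}}^2$. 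Your argument is fine when $\psi=0$, but for the composite case both the descent lemma and this prox non-expansiveness step need to be supplied; once they are, your telescoping, Jensen step, strong-convexity upgrade (whether applied to distances, as you propose, or to function values, as the paper does) and the choices of $K$ and $T$ go through as you describe.
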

\begin{sketchproof}(Complete proof in \Cref{sec-app:proof-utility}).
  Existing analyses of CD fail to track the noise tightly
  across coordinates when adapted to the private setting. Contrary to
  these classical analyses, we prove a recursion on
  $\mathbb{E}\norm{\theta^k - w^*}_M^2$, rather than on
  $\expec{}{F(\theta^{k}) - F(w^*)}$. Our key technical result is a descent
  lemma
  (\Cref{lemma:descent-lemma}) allowing us to obtain
  \begin{align}
    \label{eq:sketch-proof:cd-utility:first-ineq}
     & \expec{}{F(\theta^{k+1}) - F^*} - \frac{p - 1}{p} \expec{}{F(\theta^k) - F^*}
       \le \mathbb{E}\norm{\theta^k - w^*}_M^2 - \mathbb{E}\norm{\theta^{k+1} - w^*}_M^2
       + \frac{\norm{\sigma}_M^2}{p}
       \enspace.
  \end{align}
  The above inequality shows that coordinate-wise updates leave a fraction
  $\frac{p-1}{p}$ of the function ``unchanged'', while the remaining
  part decreases (up to additive noise). Importantly, all quantities are
  measured
  in $M$-norm. When summing
  \eqref{eq:sketch-proof:cd-utility:first-ineq} for $k=0,\dots,K-1$,
  its left hand side simplifies and its right hand side is simplified
  as a telescoping sum:
  \begin{align}
    \label{eq:sketch-proof:cd-utility:second-ineq}
     & \frac{1}{p}\sum_{k=1}^{K} \expec{}{F(\theta^{k}) - F^*}
      \le \expec{}{F(\bar w^t) - F^*} + \mathbb{E}\norm{\bar w^t - w^*}_M^2
    + \frac{K}{p}\norm{\sigma}_{M^{-1}}^2\enspace,
  \end{align}
  where $\bar w^t$ comes from $\theta^0 = \bar w^t$. As
  $\bar w^{t+1} = \sum_{k=1}^K \frac{\theta^k}{K}$ and $F$ is convex,
  we have
  $F(\bar w^{t+1}) - F^* \le \frac{1}{K} \sum_{k=1}^K F(\theta^k) -
    F^*$. This proves the sub-linear convergence (up to an additive
  noise term) of the inner loop. The result in the convex case follows
  directly (since $T=1$, only one inner loop is run).  For strongly
  convex $F$, it further holds that
  $\mathbb{E}\norm{\bar w^t - w^*}_M^2 \le
    \frac{2}{\mu_M}\expec{}{F(\bar w^t) - F(w^*)}$.  Replacing in
  \eqref{eq:sketch-proof:cd-utility:second-ineq} with large enough $K$
  gives
  $\expec{}{F(\bar w^{t+1}) - F^*} \le \tfrac{1}{2} \expec{}{F(\bar
      w^{t}) - F^*} + \norm{\sigma}_{M^{-1}}^2,$ and linear convergence
  (up to an additive noise term) follows. Finally, $K$ and $T$ are chosen to
  balance the ``optimization'' and the ``privacy'' errors.
\end{sketchproof}

\begin{remark}
  \label{rmq:improvement-inexact-coordinate-descent}
  Our novel convergence proof of CD is also useful in the non-private
  setting. In particular, we improve upon known convergence rates for
  inexact CD methods with additive error \citep{tappenden2016Inexact},
  under the hypothesis that gradients are noisy and unbiased. In their
  formalism, we have $\alpha = 0$ and
  $\beta = \norm{\sigma}_{M^{-1}}^2/p$. With our analysis, the
  algorithm requires $2pR_M^2 / (\xi - p\beta)$ (resp.
  $4p/\mu_M \log((F(w^0) - F^*) / (\xi - p\beta))$) iterations to
  achieve expected precision $\xi > p\beta$ when $F$ is convex
  (resp. $\mu_M$-strongly-convex \wrt $\norm{\cdot}_M$), improving
  upon \citet{tappenden2016Inexact}'s results by a factor
  $\sqrt{p\beta / 2R_M^2}$ (resp. $\mu_M/2$). See \Cref
  {sec:proof-remark-1} for details.  Moreover, unlike this prior work,
  our analysis does not require the objective to decrease at each
  iteration, which is essential to guarantee DP.
\end{remark}

Our utility guarantees stated in \Cref{thm:cd-utility} directly depend on
precise coordinate-wise regularity measures of the objective function.
In particular, the initial distance to optimal, the strong convexity parameter
and the overall sensitivity of the loss function are measured in the norms
$\smash{\norm{\cdot}_M}$ and $\smash{\norm{\cdot}_{M^{-1}}}$
(\ie weighted by coordinate-wise smoothness constants or their inverse).
In the remainder of this section, we thoroughly compare our utility results
with existing ones for DP-SGD.
We will show the optimality of our utility guarantees in
Section~\ref{sec:utility-lower-bounds}.

\begin{table*}[t]
  \centering
  \caption{
    Utility guarantees for DP-CD, DP-SGD, and DP-SVRG for $L$-component-Lipschitz, $\Lambda$-Lipschitz loss.
  } \label{table:utility-cd-sgd}
    \begin{tabular*}{\textwidth}{c @{\extracolsep{\fill}} c c c}
      \toprule
      & Convex
      & Strongly-convex \\
      \midrule
      DP-CD (this paper)
      & $\displaystyle \widetilde O\left(\frac{\sqrt{p \log(1/\delta)}}{n\epsilon}\norm{L}_{M^{-1}} R_{ M}\right)$
      & $\displaystyle \widetilde O\left(\frac{p \log(1/\delta) }{n^2 \epsilon^2}\frac{\norm{L}_{M^{-1}}^2}{\mu_{ M}} \right)$\\
      \midrule
      \makecell{DP-SGD \citep{bassily2014Private} \\ DP-SVRG \citep{wang2017Differentially}}
      & $\displaystyle \widetilde O\left(\frac{\sqrt{p\log(1/\delta)}}{n\epsilon}\Lambda R_I\right)$
      & $\displaystyle \widetilde O\left(\frac{p \log(1/\delta)}{n^2 \epsilon^2} \frac{ \Lambda^2}{\mu_I}\right)$ \\
      \bottomrule
    \end{tabular*}
  \vskip -0.1in
\end{table*}

\subsection{Comparison with DP-SGD and DP-SVRG}
\label{sec:comparison-with-dp-sgd}

We now compare DP-CD with DP-SGD and DP-SVRG, for which
\citet{bassily2014Private} and \citet{wang2017Differentially} proved utility
guarantees.
In this section, we assume that the loss function $\ell$ satisfies the hypotheses
of \Cref{thm:cd-utility}, and is $\Lambda$-Lipschitz.
We denote by $\mu_I$ the strong convexity parameter
of $\ell(\cdot, d)$ \wrt $\norm{\cdot}_2$ and $R_I$ the equivalent of $R_M$ when
$M$ is the identity matrix $I$.
As can be seen from \Cref{table:utility-cd-sgd}, comparing
DP-CD and DP-SGD boils down to comparing
$\smash{\norm{L}_{M^{-1}} R_{M}}$ with $\Lambda R_I$ for
convex functions and ${\norm{L}_{M^{-1}}^2 }/\mu_M$
with ${\Lambda^2 }/{\mu_I}$ for strongly-convex functions.
We compare these terms in two scenarios, depending on the distribution of
coordinate-wise smoothness constants.
To ease the comparison, we assume that $\smash{R_M = \norm{w^0 - w^*}_M}$ and $\smash{R_I = \norm{w^0 - w^*}_I}$ (which is notably the case when $\psi = 0$), and that $F$ has a unique minimizer $w^*$.

\paragraph{Balanced.}
When the smoothness constants $M$ are all equal,
$\norm{L}_{M^{-1}} R_{ M} = \norm{L}_{2} R_I$ and
${\norm{L}_{M^{-1}}^2 }/{\mu_M} = {\norm{L}_{2}^2 }/{\mu_I}$.  This
boils down to comparing $\norm{L}_{2}$ to $\Lambda$. As
$\Lambda \le \norm{L}_{2} \le \sqrt{p}\Lambda$, DP-CD can be up to $p$
times worse than DP-SGD. This can only happen when features are extremely
correlated, which is generally not the case in machine learning.  We
show empirically in \Cref{sec:stand-datas} that, even in balanced regimes,
DP-CD can still significantly outperform DP-SGD.
\paul{is it really p times worse?}

\paragraph{Unbalanced.}
More favorable regimes exist when smoothness constants are imbalanced.
To illustrate this, consider the case where the first
coordinate of the loss function $\ell$ dominates others.
There, $M_{\max} \!=\! M_1 \!\gg\! M_{\min} \!=\! M_j$ and
$L_{\max} \!=\! L_1 \!\gg\! L_{\min}\!=\! L_j $ for all $j\neq 1$, so that
$L_1^2/M_1$ dominates the other terms of $\norm{L}_{M^{-1}}^2$.  This
yields
$\norm{L}_{M^{-1}}^2 \approx L_1^2 / M_1 \approx \Lambda / M_{\max}$,
and $\mu_M = \mu_I M_{\min}$.
Moreover, if the first coordinate of $w^*$ is already well estimated
by $w^0$ (which is common for sparse models), then
$R_M \approx M_{\min}
  R_I$. %
We obtain that
$\norm{L}_{M^{-1}} R_M \approx \sqrt{{M_{\min}}/{M_{\max}}} \Lambda
  R_I$ for convex losses and
$\frac{\norm{L}_{M^{-1}}}{\mu_M} \approx
  \frac{M_{\min}}{M_{\max}}\frac{\Lambda^2}{\mu_I}$ for strongly-convex
ones.
In both cases, DP-CD can perform arbitrarily better than DP-SGD,
depending on the ratio between the smallest and largest
coordinate-wise smoothness constants of the loss function.  This is
due to the inability of DP-SGD to adapt its step size to each coordinate.
DP-CD thus converges quicker than DP-SGD on coordinates with
smaller-scale gradients, requiring fewer accesses to the dataset, and
in turn less noise addition. We give more details on this comparison
in \Cref{sec-app:comparison-with-dp}, and complement it with an empirical
evaluation on synthetic and real-world data in
Section~\ref{sec:numerical-experiments}.

\section{Lower Bounds}
\label{sec:utility-lower-bounds}

We now prove a new lower bound on the error achievable for composite DP-ERM
with $L$-component-Lipschitz loss
functions. While our proof borrows some ideas from the lower bounds known for
constrained ERM with $\Lambda$-Lipschitz losses \citep{bassily2014Private},
deriving our lower
bounds requires to address a number of specific challenges.
First, we cannot use an $\ell_2$ norm constraint as in
\citet{bassily2014Private} in the design of the worst-case problem instances:
we can only
rely on \emph{separable} regularizers. Second, imbalanced coordinate-wise
Lipschitz constants prevent lower-bounding the distance between an arbitrary
point and the solution. This leads us to revisit
the construction of a ``reidentifiable dataset'' from
\citet{bun2014Fingerprinting} so that we
have $L$-component-Lipschitzness while the sum of each
column is large enough, which is crucial in our proof. The full proof is given
in \Cref
{sec:utility-lower-bounds-1}.
\begin{theorem}
  \label{thm:utility-lower-bounds}
  Let $n, p > 0$, $\epsilon > 0$, $\delta = o(\frac{1}{n})$,
  $L_1, \dots, L_p > 0$, such that for all $\cJ \subseteq [p]$ of size at least
  $\lceil \frac{p}{75} \rceil$, $\sum_{j\in\cJ} L_j^2 = \Omega(\norm{L}_2^2)$.
  Let $\cX = \prod_{j=1}^p \{\pm L_j\}$ and consider any
  $(\epsilon, \delta)$-differentially private algorithm that outputs $w^{priv}$.
  In each of the two following cases there exists a dataset $D \in \cX^n$,
  a $L$-component-Lipschitz loss $\ell(\cdot, d)$ for all $d \in D$ and a
  regularizer $\psi$ so that, with $F$ the objective of~\eqref{eq:dp-erm}
  minimal at $w^* \in \RR^p$:
  \begin{enumerate}
    \item If $F$ is convex:
          \begin{align*}
            \expec{}{F(w^{priv};D) - F(w^*)} = \Omega\Big( \frac{\sqrt{p} \norm{L}_2
              \norm{w^*}_2}{n\epsilon} \Big)\enspace.
          \end{align*}
    \item If $F$ is $\mu_I$-strongly-convex \wrt $\norm{\cdot}_2$:
          \begin{align*}
            \expec{}{F(w^{priv};D) - F(w^*)} = \Omega\Big( \frac{p \norm{L}_2^2}{\mu_I
              n^2\epsilon^2} \Big)\enspace.
          \end{align*}
  \end{enumerate}
\end{theorem}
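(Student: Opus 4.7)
The plan is to follow the Bassily--Smith--Thakurta (2014) strategy for DP-ERM lower bounds, instantiated with a linear loss on a carefully chosen dataset space and a separable regularizer, and powered by a coordinate-wise adaptation of the reidentifiable-dataset construction of Bun--Ullman--Vadhan (2014). The hard instance I would use is the linear loss $\ell(w; d) = -\scalar{w}{d}$ on $\cX = \prod_{j=1}^p \{\pm L_j\}$; since $\abs{\ell(w+t e_j;d) - \ell(w;d)} = L_j\abs{t}$, this is exactly $L$-component-Lipschitz. For the convex case, I would pair it with a separable box indicator $\psi(w) = \sum_j \psi_j(w_j)$ where $\psi_j$ is the convex indicator of $[-c_j,c_j]$ and the radii $c_j$ are chosen so that $\norm{w^*}_2 = (\sum_j c_j^2)^{1/2}$ realizes the declared quantity; the minimizer is then $w^*_j = c_j\,\mathrm{sign}(s_j)$ with $s_j := \sum_i d_{i,j}$, and a direct computation reduces the excess loss to $F(w^{priv}) - F^* = \tfrac{1}{n}\sum_j s_j\bigl(w^*_j - w^{priv}_j\bigr)$. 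For the strongly-convex case, I would pair it instead with the separable quadratic $\psi(w) = \tfrac{\mu_I}{2}\norm{w}_2^2$, yielding $w^*_j = s_j/(\mu_I n)$ and $F(w^{priv}) - F^* = \tfrac{\mu_I}{2}\norm{w^{priv} - w^*}_2^2$. In both settings, the excess-loss lower bound reduces to a lower bound on how accurately any $(\epsilon,\delta)$-DP algorithm can track the rescaled column sums of the dataset.

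Next I would revisit the BUV reidentifiable-dataset construction to accommodate the component-wise scaling. Starting from a standard fingerprinting distribution on $\{\pm 1\}^{n\times p}$, I would rescale column $j$ by $L_j$ so as to land in $\cX^n$, and re-derive the tracing guarantee in this scaled setting: with constant probability over the dataset distribution, there exists a subset $\cJ \subseteq [p]$ of size at least $\lceil p/75 \rceil$ on which (i) the column sums $s_j$ have the required amplitude relative to $L_j$, and (ii) any $(\epsilon,\delta)$-DP algorithm with $\delta = o(1/n)$ must incur, on a constant fraction of $j\in\cJ$, the appropriate per-coordinate error---a full-amplitude sign error in the convex case, and the fingerprinting-scale squared error in the strongly-convex case. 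The subset hypothesis $\sum_{j\in\cJ}L_j^2 = \Omega(\norm{L}_2^2)$ for every $\cJ$ of size at least $\lceil p/75 \rceil$ is precisely what guarantees that, after restricting to the ``good'' coordinates on which tracing succeeds, a constant fraction of $\norm{L}_2^2$ is preserved; the threshold $\lceil p/75\rceil$ is dictated by the constant fraction of coordinates on which the BUV tracing attack is guaranteed to produce an error.

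Finally I would plug these coordinate-wise bounds into the closed-form excess-loss expressions. In the convex case, choosing $c_j = \norm{w^*}_2/\sqrt{p}$ and combining the sign-error guarantee with the fingerprinting amplitude of $\abs{s_j}/n$ on the good coordinates gives a per-coordinate contribution of order $L_j \norm{w^*}_2/(\sqrt{p}\,\epsilon)$; summing over $\cJ$ and applying Cauchy--Schwarz together with the subset hypothesis (using $\norm{L}_1 \ge \norm{L}_2$ on any subset with preserved $\ell_2$ mass) yields the claimed $\Omega(\sqrt{p}\norm{L}_2\norm{w^*}_2/(n\epsilon))$. In the strongly-convex case, summing the per-coordinate squared errors over $\cJ$ gives $\mathbb{E}\norm{w^{priv}-w^*}_2^2 = \Omega(p\norm{L}_2^2/(n\epsilon)^2)$, and multiplying by $\mu_I/2$ recovers the target $\Omega(p\norm{L}_2^2/(\mu_I n^2\epsilon^2))$. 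The main obstacle is the second step: off-the-shelf fingerprinting codes live on the uniform $\pm 1$ alphabet and guarantee tracing at a single common scale, so the technical heart of the proof is the column-wise rescaling---tracking how both the tracing statistic and its $\ell_2$ sensitivity change under the $L_j$ weighting, and verifying that the resulting tracing lemma is strong enough to produce an additive lower bound over a subset of coordinates large enough for the subset hypothesis on $L$ to kick in. The remaining computations---translating per-coordinate errors into excess-loss bounds via the closed forms and calibrating the radii $c_j$---are then essentially routine.
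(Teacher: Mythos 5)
Your strongly-convex reduction is sound and close in spirit to the paper's: with the linear loss and $\psi(w)=\tfrac{\mu_I}{2}\norm{w}_2^2$ you get $w^*=\tfrac{1}{\mu_I n}\sum_i d_i$ and excess risk $\tfrac{\mu_I}{2}\norm{w^{priv}-w^*}_2^2$, so an accurate optimizer would yield (after multiplying by $\mu_I$) an accurate estimator of the one-way marginals, contradicting the $L$-rescaled fingerprinting bound; the paper does the same thing with $\ell(w;d)=\tfrac{\mu_I}{2}\norm{w-d}_2^2$ plus a box indicator. The $L$-rescaled marginal lower bound itself (your second paragraph) is also roughly the paper's route: rescale column $j$ by $L_j$ and use the subset hypothesis to keep an $\Omega(\norm{L}_2)$ error; the paper additionally augments the dataset (map $\chi_L$, adding $2n$ all-ones rows, or padding with $\pm c$ pairs when $n>n^*$) precisely to control $\norm{\sum_i d_i}_2$.

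The genuine gap is in your convex case. Fingerprinting/tracing (Bun--Ullman--Vadhan, and its rescaled version) lower-bounds the error of estimating the \emph{real-valued} marginals $q_j(D)=s_j/n$; it gives no ``full-amplitude sign error'' guarantee. With your box-indicator instance, a small excess risk $\tfrac1n\sum_j \abs{s_j}\,(c_j-\mathrm{sign}(s_j)w^{priv}_j)$ only forces $w^{priv}_j\approx c_j\,\mathrm{sign}(s_j)$ on heavy coordinates, i.e.\ it reveals signs but not magnitudes of $s_j$, so you cannot convert an accurate optimizer into an accurate marginal estimator and no contradiction with the fingerprinting bound follows; a hardness-of-sign-recovery statement would be a different theorem that you would have to prove, and it is actually incompatible with the very construction that gives the amplitude $\abs{s_j}=\Theta(\min(n,\sqrt p/\epsilon)L_j)$, since the all-ones augmentation (or $\pm c$ padding) makes the signs of the column sums deterministic and public. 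Moreover, even granting sign errors on a set $\cJ'$ with $\card{\cJ'}\ge\lceil p/75\rceil$, your accounting gives excess $\approx \tfrac{\norm{w^*}_2}{n\epsilon}\sum_{j\in\cJ'}L_j$, and the step ``$\norm{L}_1\ge\norm{L}_2$'' only yields $\Omega\bigl(\norm{L}_2\norm{w^*}_2/(n\epsilon)\bigr)$ --- a factor $\sqrt p$ short of the claim; recovering $\sum_{j\in\cJ'}L_j=\Omega(\sqrt p\,\norm{L}_2)$ needs a separate (and delicate, at the $1/75$ threshold) argument from the subset hypothesis. The paper avoids all of this by keeping the linear loss but taking the separable quadratic $\psi(w)=\tfrac{\norm{\sum_i d_i}_2}{\beta n}\norm{w}_2^2$, so that $w^*=\tfrac{\beta}{\norm{\sum_i d_i}_2}\sum_i d_i$, $\norm{w^*}_2=\beta$, and $F(w)-F(w^*)=\tfrac{\norm{\sum_i d_i}_2}{2\beta n}\norm{w-w^*}_2^2$: the $\ell_2$ marginal-estimation lower bound then transfers \emph{quadratically}, and the prefactor $\norm{\sum_i d_i}_2=\Omega(\sqrt p\,\norm{L}_2/\epsilon)$ is exactly what supplies the missing $\sqrt p$. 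Your convex construction would need to be replaced by such a quadratic-identity instance (or supplemented by a new sign-tracing lemma plus the $\ell_1$ argument) for the proof to go through.
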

We recover the lower bounds of \citet{bassily2014Private} for
$\Lambda$-Lipschitz
losses as a special case of ours by setting $L_1 = \cdots = L_p = {\Lambda}/{
  \sqrt{p}}$. In this case, the
loss function used in our proof is indeed $(\sum_{j=1}^p L_j^2)^{1/2}
  =\Lambda$-Lipschitz.
To relate these lower bounds to the performance of DP-CD, consider a
suboptimal version of
our algorithm where the step sizes are set to
$\gamma_1 = \cdots = \gamma_p = ({\max_j M_j})^{-1}$.
In this setting, results from \Cref{thm:cd-utility} still hold, and match the
lower bounds from \Cref{thm:utility-lower-bounds} up to logarithmic factors.
We leave open the question of the optimality of DP-CD under the additional
hypothesis of smoothness.

We note that the assumption on the sum of the $L_j$'s over a set of indices
$\cJ$ in \Cref{thm:utility-lower-bounds} can be eliminated at the cost of an
additional
factor of ${L_{\min}}/{L_{\max}}$ for convex losses and
$({L_{\min}}/{L_{\max}})^2$ for strongly-convex losses, making the bound looser.
Although the aforementioned assumption may seem solely technical, we
conjecture that better utility is possible when a few
coordinate-wise Lipschitz constants dominate the others.
We discuss this further in \Cref{sec:conclusion-and-discussion}.

\section{DP-CD in Practice}
\label{sec:dp-cd-practice}

We now discuss practical questions related to DP-CD. First, we show how
to implement coordinate-wise gradient clipping using a single hyperparameter.
Second, we explain how to privately estimate the smoothness constants.
Finally, we
discuss the possibility of standardizing the features and how this relates
to estimating smoothness constants for the important problem of fitting
generalized linear models.

\subsection{Coordinate-wise Gradient Clipping}
\label{sub:clipping}

To bound the sensitivity of coordinate-wise gradients, our analysis of
Section~\ref{sec:diff-priv-coord} relies on the knowledge of Lipschitz
constants for the loss function $\ell(\cdot;d)$ that must hold for all
possible data points
$d\in \cX$, see
inequality \eqref{eq:delta-lipschitz-norm} and the discussion above
it.
This is classic in the analysis of DP optimization algorithms \citep[see
  e.g.,][]
{bassily2014Private,wang2017Differentially}. In practice however, these
Lipschitz constants can be difficult to bound tightly and often
give largely pessimistic estimates of sensitivities, thereby making gradients
overly noisy. To overcome this problem, the common practice in concrete
deployments of DP-SGD algorithms is to \emph{clip per-sample gradients}
so that their norm does not exceed a fixed threshold parameter $C > 0$
\citep{abadi2016Deep}:
\begin{align}
  \label{eq:standard_clipping_rule}
  \clip(\nabla \ell(w), C)
  = \min\Big(1, \frac{C}{\norm{\nabla \ell(w)}}_2\Big)  \nabla \ell(w)\enspace.
\end{align}
This effectively ensures that the sensitivity $\Delta(\clip(\nabla \ell, C))$
of the clipped gradient is bounded by $2C$.

In DP-CD, gradients are released one coordinate at a time and should
thus be clipped in a coordinate-wise fashion. Using the same threshold for
each coordinate would ruin the ability of DP-CD to account for imbalance
across gradient coordinates, whereas tuning coordinate-wise thresholds as $p$
individual
hyperparameters $\{C_j\}_{j=1}^p$
is impractical.

Instead, we leverage the results of \Cref{thm:cd-utility} to adapt them from a
single hyperparameter.
We first remark that our utility guarantees are
invariant to the scale of the matrix $M$. After rescaling
$M$ to
$\widetilde M = \frac{p}{\tr(M)} M$ so that
$\tr(\widetilde M) = \tr(I) = p$, as proposed by
\citet{richtarik2014Iteration}, the key quantity
$\Delta_{\widetilde M^{-1}}(\nabla\ell)$ as defined in \eqref{eq:delta-lipschitz-norm} appears in
our utility
bounds instead of $\norm{L}_{M^{-1}}$. This suggests to parameterize the
$j$-th threshold as $C_j = \sqrt{{M_j}/{\tr(M)}} C$ for some $C > 0$,
ensuring that $\Delta_{\widetilde M^{-1}}(\{\clip(\nabla_j\ell, C_j)\}_{j=1}^p)
  \leq 2C$.
The parameter $C$ thus controls the overall sensitivity, allowing clipped
DP-CD to perform $p$ iterations for the same privacy budget as one iteration
of clipped DP-SGD.

\subsection{Private Smoothness Constants}
\label{sec:priv-smoothn}

DP-CD requires the knowledge of the coordinate-wise smoothness
constants $M_1,\dots,M_p$ of $f$ to set appropriate step sizes (see
\Cref{thm:cd-utility}) and clipping thresholds (see
above).\footnote{In fact, only $\smash{M_j/\sum_{j'} M_{j'}}$ is needed, as we
  tune the clipping threshold and scaling factor for the step sizes.
  See \Cref{sec:numerical-experiments}.}  In most problems, the
$M_j$'s depend on the dataset $D$ and must thus be estimated privately
using a fraction of the overall privacy budget.  Since $f$ is an
average of loss terms, its coordinate-wise smoothness constants are
the average of those of $\ell(\cdot, d)$ over $d\in D$.  These per-sample
quantities are easy to get for typical losses (see
\Cref{sec:data-standardization} for the case of linear models).
Privately estimating $M_1,\dots,M_p$ thus reduces to a classic private
mean estimation problem for which many methods exist.  For instance,
assuming that the practitioner knows a crude upper bound on per-sample
smoothness constants, he/she can compute the smoothness constants of
the $\ell(\cdot, d)$'s, clip them to the pre-defined upper bounds, and
privately estimate their mean using the Laplace mechanism
(see \Cref{sec:priv-estim-smoothn} for
details).  We show numerically in
\Cref{sec:numerical-experiments} that dedicating $10\%$ of the total
budget $\epsilon$ to this strategy allows DP-CD to effectively exploit
the imbalance across gradients' coordinates.

\subsection{Feature Standardization}
\label{sec:data-standardization}

CD algorithms are very popular to solve generalized linear
models \citep{friedman2010Regularization} and their regularized version (\eg
LASSO, logistic regression).
For these problems, the coordinate-wise smoothness constants are
$M_j \propto \frac 1n \norm{X_{:,j}}_2^2$, where $X_{:,j} \in \RR^{n}$
is the vector containing the value of the $j$-th feature. Therefore, standardizing the features to have zero mean and
unit variance (a standard preprocessing step) makes coordinate-wise smoothness
constants equal. However, this requires to compute the
mean and variance of each feature in $D$, which is more
costly than the smoothness
constants to estimate privately.\footnote{We note that the privacy cost of
standardization is rarely accounted for in practical evaluations.}
Moreover, while our theory suggests that DP-CD may not be superior to DP-SGD
when smoothness constants are all equal (see
Section~\ref{sec:comparison-with-dp-sgd}), the
numerical results of \Cref{sec:numerical-experiments} show that DP-CD often
outperforms DP-SGD even when features are standardized.

Finally, we emphasize that standardization is not always possible.
This notably happens when solving the problem at hand is a subroutine of another algorithm.
For instance, the Iteratively Reweighted Least Squares (IRLS) algorithm
\citep{holland1977Robust} finds the maximum likelihood estimate of a
generalized linear model by solving a sequence of linear
regression problems with reweighted features, proscribing standardization.
Similar situations happen when using reweighted $\ell_1$ methods for
non-convex sparse regression \citep{Candes_Wakin_Boyd08}, relying on convex (LASSO) solvers for the inner loop.
DP-CD is thus a method of choice to serve as subroutine in private versions of these algorithms.

\begin{figure}[t]
  \captionsetup[subfigure]{justification=centering}
  \centering
  \begin{subfigure}{0.048\linewidth}
    \centering
    \includegraphics[width=\linewidth]{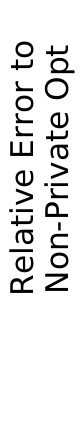}
    \begin{minipage}{.1cm}
      \vfill
    \end{minipage}
  \end{subfigure}%
  \begin{subfigure}{0.29\linewidth}
    \centering
    \includegraphics[width=\linewidth]{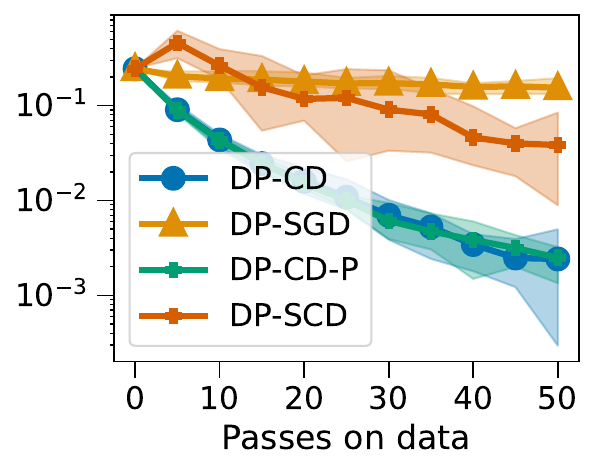}
    \caption{Electricity\\ Logistic, $\lambda=1/n$.}
    \label{fig:expe-logreg-constant}
  \end{subfigure}%
  \begin{subfigure}{0.29\linewidth}
    \centering
    \includegraphics[width=\linewidth]{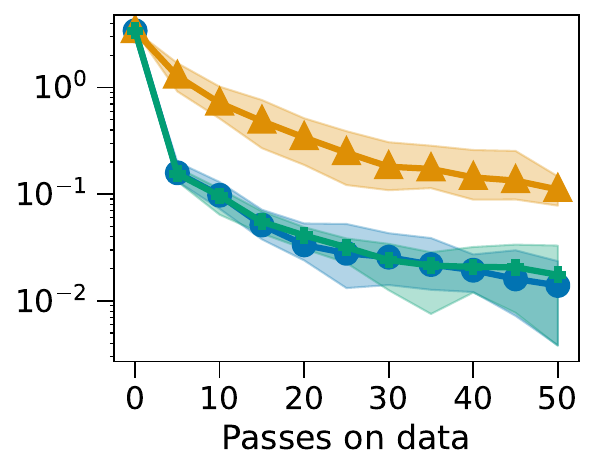}
    \caption{California\\ LASSO, $\lambda=1.0$.}
    \label{fig:expe-logreg-power-law-sparse}
  \end{subfigure}

  \caption{ Relative error to non-private optimal for DP-CD (blue), DP-CD with
  privately estimated coordinate-wise smoothness
    constants (green), DP-SGD (orange) and DP-SCD (red, only applicable to the
    smooth case) on
    two \emph{imbalanced} problems. The number of passes is
    tuned separately for each algorithm to achieve lowest error. We report
    min/mean/max values over 10~runs.}
  \label{fig:expe-raw}
\end{figure}

\section{Numerical Experiments}
\label{sec:numerical-experiments}

In this section, we assess the practical performance of DP-CD against
(proximal) DP-SGD on LASSO\footnote{\ie
  $\ell(w, (x,y)) = (w^\top x - y)^2$, $\psi(w) = \lambda\norm{w}_1$.}
and $\ell_2$-regularized logistic regression\footnote{\ie
  $\ell(w, (x, y)) = \log(1 + \exp(-y w^\top x))$,
  $\psi(w) = \!\frac{\lambda}{2}\!\norm{w}_2^2$.}. On the latter
problem, we also consider the dual private coordinate
descent algorithm of \citet{damaskinos2021Differentially}
(DP-SCD). For LASSO, we use the California dataset
\citep{kelleypace1997Sparse}, with $n=20,640$ records and $p=8$
features as well as a synthetic dataset (coined ``Sparse LASSO'') with
$n=1,000$ records and $p=1,000$ independent features that follow a
standard normal distribution.  The labels are then computed as a noisy
sparse linear combination of a subset of $10$ active features.  For
logistic regression, we consider the Electricity dataset
\citep{Electricity} with $45,312$ records and $8$ features.  On
California and Electricity, we set $\epsilon=1$ and $\delta=1/n^2$,
which is generally seen as a rather high privacy regime. The Sparse
LASSO dataset corresponds to a challenging setting for privacy
($n=p$), so we consider a low privacy regime with $\epsilon=10$,
$\delta=1/n^2$.  Privacy accounting for DP-SGD is done by numerically
evaluating the Rényi DP formula given by the sampled Gaussian
mechanism \citep{mironov2019Enyi}. Similarly for DP-CD, we do not use
the closed-form formula of Theorem~\ref{thm:dp-cd-privacy} but rather
numerically evaluate the tighter Rényi DP formula given in
Appendix~\ref{sec:proof-privacy}.

For DP-SGD, we use constant step sizes and standard gradient
clipping. For DP-CD, we adapt the coordinate-wise clipping thresholds
from one hyperparameter, as described in
\Cref{sub:clipping}. Similarly, coordinate-wise step sizes are set to
$\gamma_j = \gamma / M_j$, where $\gamma$ is a hyperparameter. When
the coordinate-wise smoothness constants are not all equal, we also
consider DP-CD with privately computed $M_j$'s, as described in
\Cref{sec:priv-smoothn}.  For each dataset and each algorithm, we
simultaneously tune the clipping threshold, the number of passes over
the dataset and, for DP-CD and DP-SGD, the step sizes.  After tuning
these parameters, we report the relative error to the (non-private)
optimal objective value.  The complete tuning procedure is described
in \Cref{sec:hyperp-tuning}, where we also give the best error for
various numbers of passes for each algorithm and dataset.  The code
used to obtain all our results is available in a public repository\footnote{
\url{https://gitlab.inria.fr/pmangold1/private-coordinate-descent/}}
and in the supplementary material.

\subsection{Imbalanced Datasets}
\label{sec:raw-datasets}

In the Electricity and California datasets, features are naturally
imbalanced. DP-CD can exploit this through the use of coordinate-wise
smoothness constants. We also consider a variant of DP-CD (DP-CD-P)
which dedicates $10\%$ of the privacy budget~$\epsilon$ to estimate
these constants (see \Cref{sec:priv-smoothn}) from a crude upper bound
on each feature (twice their maximal absolute value). It then uses the
resulting private smoothness constants in step sizes and clipping
thresholds. \Cref{fig:expe-raw} shows that DP-CD outperforms DP-SGD
and DP-SCD by an order of magnitude on both datasets, even when the
smoothness constants are estimated privately.

\subsection{Balanced Datasets}
\label{sec:stand-datas}

To assess the performance of DP-CD when coordinate-wise smoothness
constants are balanced, we standardize the Electricity and California
datasets (see Section~\ref{sec:data-standardization}).  As
standardization is done for all algorithms, we do not account for it
in the privacy budget. On standardized datasets, coordinate-wise
smoothness constants are all equal, removing the need of estimating
them privately. We report the results in \Cref{fig:expe-standardized}.
Although our theory suggests that DP-CD may do worse than DP-SGD in
balanced regimes, we observe that it still improves over DP-SGD (and
DP-SCD) in practice. Similar observations hold in our challenging
Sparse LASSO problem, where DP-SGD is barely able to make any
progress. We believe these results are in part due to the beneficial
effect of clipping in DP-CD, and the fact that DP-SGD relies on
amplification by subsampling, for which privacy accounting is not
perfectly tight. Additionally, CD methods are known to perform well on
fitting linear models: our results show that this transfers well to
private optimization.

\begin{figure*}[t]
  \captionsetup[subfigure]{justification=centering}
  \centering
  \begin{subfigure}{0.048\linewidth}
    \centering
    \includegraphics[width=\linewidth]{plots/xlegend.pdf}
    \begin{minipage}{.1cm}
      \vfill
    \end{minipage}
  \end{subfigure}%
  \begin{subfigure}{0.29\linewidth}
    \centering
    \includegraphics[width=\linewidth]{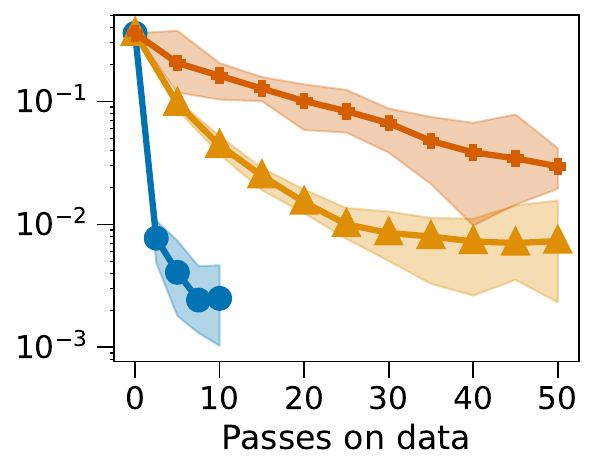}
    \caption{Electricity\\ Logistic, $\lambda=1/n$.}
    \label{subfig:expe-logreg-constant}
  \end{subfigure}%
  \begin{subfigure}{0.29\linewidth}
    \centering
    \includegraphics[width=\linewidth]{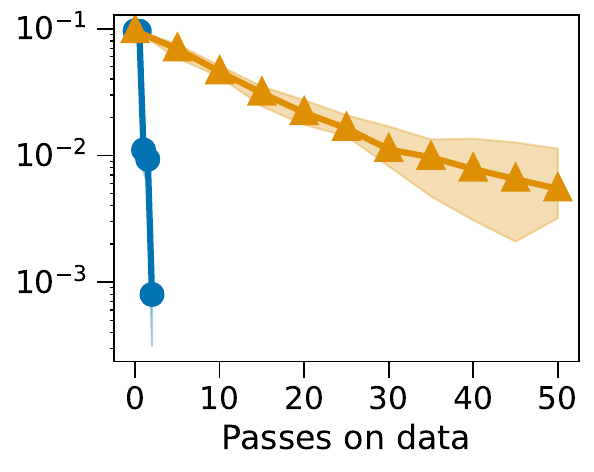}
    \caption{California\\ LASSO, $\lambda=0.1$.}
    \label{subfig:expe-logreg-power-law}
  \end{subfigure}
  \begin{subfigure}{0.29\linewidth}
    \centering
    \includegraphics[width=\linewidth]{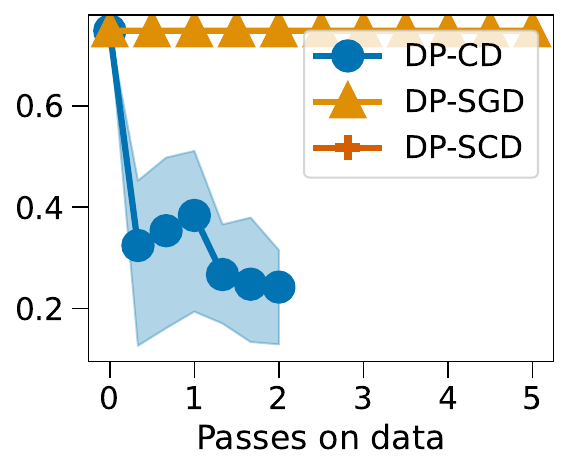}
    \caption{Sparse\\ LASSO, $\lambda=30$.}
    \label{fig:expe-logreg-power-law}
  \end{subfigure}

  \caption{
    Relative error to non-private optimal for  DP-CD (blue), DP-SGD (orange)
    and DP-SCD (red, only applicable to the smooth case) on three
    \emph{balanced} problems. The number of passes is tuned separately for
    each algorithm to achieve lowest error. We report min/mean/max values over
    10~runs.
  }
  \label{fig:expe-standardized}
\end{figure*}

\subsection{Running Time}
\label{sec:running-time}

The results above showed that DP-CD yields better
utility than DP-SGD. We also observe that DP-CD tends to reach
these
results in up to $10$~times fewer passes on the data than DP-SGD (see
\Cref{sec:hyperp-tuning} for detailed results). Additionally, when accounting
for running time, DP-CD
significantly outperforms DP-SGD: we refer to \Cref{sec:running-time-in} for
the counterparts of \Cref{fig:expe-raw} and \ref{fig:expe-standardized} as a
function of the running time instead of the number of passes.

\section{Related Work}
\label{sec:related-works}

\paragraph{DP-ERM.}
Differentially Private Empirical Risk Minimization was first studied by
\citet{chaudhuri2011Differentially}, using output perturbation (adding noise
to the solution of the non-private ERM problem) and objective perturbation
(adding noise to the ERM objective itself).
\citet{bassily2014Private} then proposed DP-SGD and proved its
near-optimality. %
\citet{wang2017Differentially} obtained faster convergence rates using a
DP version of the SVRG algorithm
\citep{johnson2013Accelerating,xiao2014Proximal}.
DP-SGD has
become the standard approach to DP-ERM.
In our work, we show that coordinate-wise updates can have lower sensitivity
than DP-SGD updates and propose a DP-CD algorithm achieving competitive
results.
A private variant of the Frank-Wolfe algorithm (DP-FW) was also
proposed to solve \emph{constrained} DP-ERM problems
\citep{talwar2015Nearly}.  Although these algorithms achieve a good
privacy-utility trade-off in theory, we are not aware of any empirical
evaluation.
DP-FW algorithms access gradients indirectly through a linear
optimization oracle over a constrained set.
Restricting to a constrained set is not necessary in DP-CD, allowing its use for a different family of problems.

\paragraph{DP-SCO.} Recent work has also studied algorithms and
utility guarantees for
stochastic convex optimization under differential privacy constraints, a
problem very similar to DP-ERM. \citet{bassily2019Private} \citep[following work
from][]{hardt2016Train,bassily2020Stability} extended results known
for DP-ERM to this setting, showing that the population risk of DP-SCO
is asymptotically equivalent to the one of non-private SCO. Efficient
algorithms for
DP-SCO
were proposed by \citet{feldman2020Private,wang2022Differentially},
and \citet{asi2021Private,bassily2021NonEuclidean} studied stochastic
variants of DP-FW. As detailed by
\citet{dwork2015Preserving,bassily2016Algorithmic,jung2021New} results
from DP-ERM can be converted to DP-SCO.

\paragraph{Coordinate descent.}
Coordinate descent (CD) algorithms have a long history in optimization.
\citet{Luo_Tseng1992,Tseng01,Tseng_Yun09} have shown convergence results for
(block) CD algorithms for nonsmooth optimization.
\citet{Nesterov12} later proved a global non-asymptotic $1/k$ convergence
rate for CD with random choice of coordinates for a convex, smooth objective.
Parallel, proximal variants were developed by
\citet{richtarik2014Iteration,fercoq2014Accelerated}, while
\citet{hanzely2018SEGA} further considered non-separable non-smooth parts.
\citet{shalev-shwartz2013Stochastic} introduced Dual CD algorithms
for smooth ERM, showing performance similar to SVRG.
We refer to \citet{wright2015Coordinate} and \citet{shi2017Primer} for
detailed reviews on CD.
Inexact CD was studied by \citet{tappenden2016Inexact}, but their analysis
requires updates not to increase the objective, which is hardly compatible
with DP.
We obtain tighter results for inexact CD with noisy gradients
(see Remark~\ref{rmq:improvement-inexact-coordinate-descent}).

\paragraph{Private coordinate descent.}
\citet{damaskinos2021Differentially} introduced a CD method to privately solve
the dual problem associated with generalized linear models with $\ell_2$
regularization. Dual CD is tightly related to SGD, as each
coordinate in the dual is associated with one data point.
The authors briefly mention the possibility of performing primal coordinate
descent but discard it on account of the seemingly large sensitivity of its
updates.
We show that primal DP-CD is in fact quite effective, and can be used to solve more general problems than considered by \citet{damaskinos2021Differentially}.
Primal CD was successfully used by \citet{bellet2018Personalized} to privately learn personalized models from decentralized datasets.
For the smooth objective they consider,
each coordinate depends only on a subset of the full dataset, which directly
yields low coordinate-wise sensitivity updates.
In contrast, we introduce a general algorithm for composite DP-ERM, for which
a novel utility analysis was required.

\section{Conclusion and Discussion}
\label{sec:conclusion-and-discussion}

We presented the first differentially private proximal coordinate descent
algorithm for composite DP-ERM.
Using an original approach to analyze proximal CD with perturbed
gradients, we derived optimal upper bounds on the privacy-utility trade-off
achieved by DP-CD. We also prove new lower bounds under a
component-Lipschitzness
assumption,
and showed that DP-CD matches these bounds.
Our results demonstrate that DP-CD strongly outperforms DP-SGD when
gradients' coordinates are imbalanced. Numerical experiments show that DP-CD
also performs very well in balanced regimes.
The choice of coordinate-wise clipping
thresholds is crucial for
DP-CD to achieve good utility in practice, and we provided a simple rule to
set them.

Although DP-CD already achieves good utility when most coordinates have small
sensitivity, our lower bounds suggest that even better utility could be
achieved by dynamically allocating more privacy budget to
coordinates with largest sensitivities.
A promising direction is to design DP-CD algorithms that leverage active set
methods
\citep{yuan2010Comparison,lewis2016Proximal,nutini2017Let,desantis2016Fast,Massias_Gramfort_Salmon18}, which could provide practical alternatives to recent DP-SGD approaches that use a
subspace assumption \citep{zhou2021Bypassing,kairouz2021Nearly}.
Finally, we believe that adaptive clipping techniques
\citep{pichapati2019AdaCliP,thakkar2019Differentially} may help to further
improve the practical performance of DP-CD when coordinate-wise
smoothness constants are more balanced.

\section*{Acknowledgments}

The authors would like to thank the anonymous reviewers who provided
useful feedback on previous versions of this work, which helped to improve
the paper.

This work was supported in part by the Inria Exploratory Action FLAMED and
by the French National Research Agency (ANR) through grant ANR-20-CE23-0015
(Project PRIDE) and ANR-20-CHIA-0001-01 (Chaire IA CaMeLOt).

\printbibliography

\appendix %
\newpage

\onecolumn

\section{Lemmas on Sensitivity}
\label{sec:lemma-sensitivity}

In this section, we let $\cX$ be the universe where the data is drawn from.
To upper bound the sensitivities of a function's gradient, we start by
recalling in \Cref{lemma:gradient-upper-bound} that (coordinate) gradients are
bounded by (coordinate-wise-)Lipschitz constants.
We then link this upper bound with gradients' sensitivities in
\Cref{lemma:gradient-sensitivity-upper-bound}.

\begin{lemma}
  \label{lemma:gradient-upper-bound}
  Let $\ell : \RR^p \times \cX \rightarrow \RR$ be convex and differentiable
  in its first argument,
  $\Lambda > 0$ and $L_1, \dots, L_p > 0$.
  \begin{enumerate}
  \item If $\ell(\cdot; d)$ is $\Lambda$-Lipschitz for all $d \in \cX$, then
    $\norm{\nabla \ell(w; d)}_2 \le \Lambda$ for all $w \in \RR^p$ and $d \in \cX$.
  \item If $\ell(\cdot; d)$ is $L$-component-Lipschitz for all $d \in \cX$, then
    $\abs{\nabla_j \ell(w; d)} \le L_j$ for all $w \in \RR^p$, $d \in \cX$
    and $j \in [p]$.
  \end{enumerate}
\end{lemma}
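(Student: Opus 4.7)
The plan is to derive both gradient bounds from the definition of (component-)Lipschitzness by taking a directional derivative limit, which converts the finite-difference Lipschitz inequality into a bound on the gradient.

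For part~1, I would fix $w \in \RR^p$, $d \in \cX$, and a unit vector $v \in \RR^p$. Applying $\Lambda$-Lipschitzness of $\ell(\cdot;d)$ to the points $w+tv$ and $w$ yields $|\ell(w+tv;d) - \ell(w;d)| \le \Lambda |t|$ for all $t \in \RR$. Dividing by $t \neq 0$ and letting $t \to 0$, the differentiability of $\ell(\cdot;d)$ gives $|\scalar{\nabla \ell(w;d)}{v}| \le \Lambda$. Since this holds for every unit $v$, I would choose $v = \nabla \ell(w;d)/\norm{\nabla \ell(w;d)}_2$ when the gradient is nonzero (and the bound is trivial otherwise) to conclude $\norm{\nabla \ell(w;d)}_2 \le \Lambda$.

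For part~2, I would repeat the same argument but specialize the perturbation direction to the canonical basis vector $e_j$. The $L$-component-Lipschitz hypothesis with $w$ and $w + t e_j$ gives $|\ell(w+te_j;d) - \ell(w;d)| \le L_j |t|$, and dividing by $t$ and letting $t \to 0$ yields $|\nabla_j \ell(w;d)| \le L_j$, which is exactly the claim.

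No real obstacle here: the statement is a textbook consequence of Lipschitz continuity plus differentiability. The only care needed is to handle the trivial case where the gradient (or its $j$-th coordinate) vanishes, and to justify interchanging the limit with the absolute value, both of which are immediate from the definition of the (partial) derivative. Convexity is not actually used in either direction; it appears in the hypothesis presumably for consistency with the rest of the paper's setting.
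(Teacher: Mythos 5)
Your proof is correct, but it takes a different route from the paper. You argue via directional derivatives: divide the Lipschitz inequality $\abs{\ell(w+tv;d)-\ell(w;d)}\le \Lambda\abs{t}$ by $t$ and let $t\to 0$ to get $\abs{\scalar{\nabla\ell(w;d)}{v}}\le\Lambda$ for every unit $v$, then choose $v$ along the gradient (and analogously $v=e_j$ for part 2). The paper instead exploits convexity directly: it plugs the specific point $w+\nabla\ell(w;d)$ (resp.\ $w+\nabla_j\ell(w;d)e_j$) into the first-order convexity inequality to get $\norm{\nabla\ell(w;d)}_2^2 \le \ell(w+\nabla\ell(w;d);d)-\ell(w;d) \le \Lambda\norm{\nabla\ell(w;d)}_2$, and divides by the gradient norm. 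Your observation that convexity is not actually needed is accurate: your limiting argument only uses differentiability and Lipschitzness, so it proves a slightly more general statement, at the small cost of handling the limit interchange (which, as you note, is immediate from the definition of the derivative and continuity of the absolute value). The paper's argument buys a purely algebraic, limit-free derivation, but only under the convexity hypothesis that is anyway assumed throughout the paper. Both handle the degenerate zero-gradient case the same way.
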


\begin{proof}
  Let $d \in \cX$.
  We start by proving the first statement.
  First, if $\nabla \ell(w;d) = 0$, $\norm{\nabla \ell(w;d)}_2 = 0 \le \Lambda$ and
  the
  result holds.
  Second, we focus on the case where $\nabla \ell(w;d) \not= 0$.
  The convexity of $\ell$ gives, for $w \in \RR^p$, $d \in \cX$:
  \begin{align}
    \ell(w + \nabla \ell(w;d);d)
    \ge \ell(w;d) + \scalar{\nabla \ell(w;d)}{\nabla \ell(w;d)}
    = \ell(w;d) + \norm{\nabla \ell(w;d)}_2^2\enspace,
  \end{align}
  then, reorganizing the terms and using $\Lambda$-Lipschitzness of $\ell$
  yields
  \begin{align}
    \norm{\nabla \ell(w;d)}_2^2
    \le \ell(w + \nabla \ell(w;d);d) - \ell(w;d)
    \le \abs{\ell(w + \nabla \ell(w;d);d) - \ell(w;d)}
    \le \Lambda \norm{\nabla \ell(w;d)}_2\enspace,
  \end{align}
  and the result follows after dividing by $\norm{\nabla \ell(w;d)}_2$.
  To prove the second statement, we set $j \in [p]$, and $w \in \RR^p$,
  and remark that if $\nabla_j \ell(w;d)=0$, then $\abs{\nabla_j \ell(w;d)} \le
  L_j$.
  When $\nabla_j \ell(w;d) \neq 0$, the convexity of $\ell$ yields
  \begin{align}
    \ell(w + \nabla_j \ell(w;d) e_j;d)
    \ge \ell(w;d) + \scalar{\nabla \ell(w;d)}{\nabla_j \ell(w;d) e_j}
    = \ell(w;d) + \nabla_j \ell(w;d)^2\enspace.
  \end{align}
  Reorganizing the terms and using $L$-component-Lipschitzness of $\ell$ gives
  \begin{align}
    \nabla_j \ell(w;d)^2
    \le \ell(w + \nabla_j \ell(w;d) e_j;d) - \ell(w;d)
    \le \abs{\ell(w + \nabla_j \ell(w;d) e_j;d) - \ell(w;d)}
    \le L_j \abs{\nabla_j\ell(w;d)}\enspace,
  \end{align}
  and we get the result after dividing by $\abs{\nabla_j \ell(w;d)}$.
\end{proof}

\begin{lemma}
  \label{lemma:gradient-sensitivity-upper-bound}
  Let $\ell : \RR^p \times \cX \rightarrow \RR$ be convex and differentiable
  in its 1st argument,
  $\Lambda > 0$ and $L_1, \dots, L_p > 0$.
  \begin{enumerate}
  \item If $\ell(\cdot; d)$ is $\Lambda$-Lipschitz for all $d \in \cX$, then
    $\Delta(\nabla \ell) \le 2\Lambda$.
  \item If $\ell(\cdot; d)$ is $L$-component-Lipschitz for all $d \in \cX$, then
    $\Delta(\nabla_j \ell) \le L_j$ for all $j \in [p]$.
  \end{enumerate}
\end{lemma}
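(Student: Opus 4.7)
The plan is to reduce both items to two short applications of \Cref{lemma:gradient-upper-bound}, which already controls the pointwise magnitudes of $\nabla\ell(w;d)$ and of its $j$-th coordinate. Since $\ell$ is a per-sample loss and $\Delta$ is defined on functions of the dataset, the first step in each case is to make explicit how the per-sample bound translates into a sensitivity bound under the neighboring relation used in the paper.

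For item~1, I would unroll the sensitivity as the change in the single-sample contribution under substitution: $\Delta(\nabla\ell) = \sup_{d,d'\in\cX}\norm{\nabla\ell(w;d) - \nabla\ell(w;d')}_2$. A one-line triangle inequality then reduces the problem to bounding each term separately, and \Cref{lemma:gradient-upper-bound} (item~1) gives $\norm{\nabla\ell(w;\cdot)}_2 \le \Lambda$, yielding $\Delta(\nabla\ell)\le 2\Lambda$.

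For item~2, the plan is to leverage the sharper pointwise bound $|\nabla_j\ell(w;d)|\le L_j$ directly rather than go through a triangle inequality that would cost a superfluous factor $2$. Interpreting $\Delta(\nabla_j\ell)$ as the worst-case change in the per-sample coordinate contribution (the convention implicit in the downstream identity $\Delta(\nabla_j f)=\Delta(\nabla_j\ell)/n$ used in the privacy proof), we have $\Delta(\nabla_j\ell)\le \sup_{d\in\cX}|\nabla_j\ell(w;d)|$, which is at most $L_j$ by item~2 of \Cref{lemma:gradient-upper-bound}. Both parts therefore collapse to one invocation of the preceding lemma.

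The only real obstacle is presentational rather than mathematical: one must state carefully how sensitivity is evaluated on a per-sample loss, so that the coordinate-wise bound reads $L_j$ and not $2L_j$. Once the convention is fixed to match the one used in the main text's privacy accounting, both items are immediate, and no additional machinery (convexity, smoothness, or properties of the proximal map) is required beyond \Cref{lemma:gradient-upper-bound}.
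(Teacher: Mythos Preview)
Your argument for item~1 is essentially the paper's: a triangle inequality followed by \Cref{lemma:gradient-upper-bound}.

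For item~2 there is a genuine gap. Under the substitution neighboring relation used throughout the paper (two datasets differ on at most one element), the per-sample sensitivity unrolls as $\Delta(\nabla_j\ell)=\sup_{w}\sup_{d,d'\in\cX}\abs{\nabla_j\ell(w;d)-\nabla_j\ell(w;d')}$, and this quantity is \emph{not} bounded by $\sup_{d}\abs{\nabla_j\ell(w;d)}$: the supremum of a difference can reach twice the supremum of each term (think of $\nabla_j\ell(w;d)=L_j$ and $\nabla_j\ell(w;d')=-L_j$). Your proposed inequality $\Delta(\nabla_j\ell)\le \sup_d\abs{\nabla_j\ell(w;d)}$ would need an add/remove neighboring model, and even then it is not consistent with the identity $\Delta(\nabla_j f)=\Delta(\nabla_j\ell)/n$ you invoke, since under add/remove the normalization $1/n$ itself changes between the two datasets.

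In fact the paper's own proof of item~2 uses exactly the same triangle-inequality route as item~1 and concludes $\abs{\nabla_j\ell(w;d)-\nabla_j\ell(w';d')}\le 2L_j$, not $L_j$. The ``$L_j$'' appearing in the lemma statement is a typo: the corollary immediately following the lemma bounds $\Delta(\nabla_j\ell)^2$ by $4L_j^2$, and the main text (\Cref{sec:preliminaries}) states $\Delta(\nabla_j\ell)\le 2L_j$. So the correct approach for item~2 mirrors item~1 exactly, and the factor $2$ is unavoidable under substitution neighbors.
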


\begin{proof}
  We start by proving the first statement.
  Let $w, w' \in \RR^p$, $d, d' \in \cX$.
  From the triangle inequality and \Cref{lemma:gradient-upper-bound}, we get
  the following
  upper bounds:
  \begin{align}
    \norm{\nabla \ell(w; d) - \nabla \ell(w'; d')}_2
    \le \abs{\nabla \ell(w; d)} + \abs{\nabla \ell(w'; d')}
    \le 2\Lambda\enspace,
  \end{align}
  which is the claim of the first statement.
  To prove the second statement, we proceed similarly: the triangle inequality
  and \Cref{lemma:gradient-upper-bound} give the following upper bounds:
  \begin{align}
    \abs{\nabla_j \ell(w; d) - \nabla_j \ell(w'; d')}
    \le \abs{\nabla_j \ell(w; d)} + \abs{\nabla_j \ell(w'; d')}
    \le 2L_j\enspace,
  \end{align}
  which is the desired result.
\end{proof}

We obtain the inequality \eqref{eq:delta-lipschitz-norm} stated in
Section~\ref{sec:preliminaries} as a corollary.

\begin{corollary}
  Let $L_1, \dots, L_p > 0$.
  Let $\ell(\cdot; d) : \RR^p \rightarrow \RR$ be a convex,
  $L$-component-Lipschitz function for all $d \in \cX$.
  Then
  \begin{align}
    \Delta_{M^{-1}}(\nabla \ell)
    = \Big(\sum_{j=1}^p \frac{1}{M_j} \Delta (\nabla_j\ell)^2\Big)^{\frac{1}
    {2}}
    \le \Big(\sum_{j=1}^p \frac{4}{M_j} L_j^2\Big)^{\frac{1}{2}}
    = 2 \norm{L}_{M^{-1}}\enspace.
  \end{align}
\end{corollary}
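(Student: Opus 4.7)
The plan is to observe that this corollary is essentially a direct repackaging of the second statement of Lemma~\ref{lemma:gradient-sensitivity-upper-bound} (more precisely, of the bound $\Delta(\nabla_j \ell) \le 2L_j$ that actually appears at the end of its proof, via the triangle inequality). There is no substantive new work; the only thing to do is to plug the per-coordinate bound into the definition of the weighted sensitivity $\Delta_{M^{-1}}(\nabla \ell)$ and simplify.

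Concretely, I would first fix $j \in [p]$ and invoke Lemma~\ref{lemma:gradient-sensitivity-upper-bound} (using the $L$-component-Lipschitz hypothesis on $\ell(\cdot;d)$, which holds uniformly in $d \in \cX$) to get $\Delta(\nabla_j \ell) \le 2L_j$. Squaring yields $\Delta(\nabla_j \ell)^2 \le 4 L_j^2$, and since $1/M_j > 0$ the inequality is preserved under the positive weighting $1/M_j$.

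Next, I would start from the definition
\begin{align*}
\Delta_{M^{-1}}(\nabla \ell) = \Big(\sum_{j=1}^p \tfrac{1}{M_j} \Delta(\nabla_j \ell)^2 \Big)^{1/2},
\end{align*}
substitute the per-coordinate bound term-by-term inside the sum, and use monotonicity of the square root to obtain $\Delta_{M^{-1}}(\nabla \ell) \le \bigl(\sum_{j=1}^p \tfrac{4}{M_j} L_j^2\bigr)^{1/2}$. Factoring out the $4$ and recognizing $\sum_{j=1}^p \tfrac{1}{M_j} L_j^2 = \scalar{M^{-1}L}{L} = \norm{L}_{M^{-1}}^2$ from the definition of the $M^{-1}$-norm given in Section~\ref{sec:preliminaries} then gives the announced $2\norm{L}_{M^{-1}}$.

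There is no real obstacle here: the only thing to be careful about is that the per-coordinate sensitivity bound is uniform in $d, d' \in \cX$ (so that taking the supremum over neighboring datasets inside each term of the sum is legitimate), which is exactly what the component-Lipschitz assumption provides via Lemma~\ref{lemma:gradient-sensitivity-upper-bound}. Everything else is arithmetic.
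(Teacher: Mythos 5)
Your proposal is correct and follows essentially the same route as the paper: it plugs the per-coordinate sensitivity bound $\Delta(\nabla_j \ell) \le 2L_j$ from Lemma~\ref{lemma:gradient-sensitivity-upper-bound} into the definition of $\Delta_{M^{-1}}(\nabla \ell)$, sums with the positive weights $1/M_j$, and identifies the result with $2\norm{L}_{M^{-1}}$. You also rightly noted that the bound used is the $2L_j$ established in that lemma's proof (the lemma's statement contains a typo saying $L_j$), which is exactly the constant the corollary's factor $4L_j^2$ requires.
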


\section{Proof of \Cref{thm:dp-cd-privacy}}
\label{sec:proof-privacy}

To track the privacy loss of an adaptive composition of $K$ Gaussian
mechanisms, we use Rényi Differential Privacy
\citep[RDP]{mironov2017Renyi}. We note that similar results are
obtained with zero Concentrated Differential Privacy
\citep{bun2016Concentrated}. This flavor of differential privacy,
gives tighter privacy guarantees in that setting, as it reduces the
noise variance by a multiplicative factor of $\log(K/\delta)$ in
comparison to the usual advanced composition theorem of differential
privacy \citep{dwork2006Calibrating}.  Importantly, RDP can be
translated back to differential privacy.

In this section, we recall the definition and main properties of zCDP.
We denote by $\cD$ the set of all datasets over a universe $\cX$
and by $\cF$ the set of possible outcomes of the randomized
algorithms we consider.

\subsection{Rényi Differential Privacy}

We will use the Rényi divergence (\Cref{def:renyi-div}), which gives
a distribution-oriented vision of privacy.

\begin{definition}[Rényi divergence, \citealt{vanerven2014Renyi}]
  \label{def:renyi-div}
  For two random variables $Y$ and $Z$ with values in the same domain $\cC$,
  the Rényi divergence is, for $\alpha > 1$,
  \begin{align}
    D_\alpha(Y||Z)
    = \frac{1}{\alpha - 1} \log \int_{\cC} \prob{Y=z}^\alpha \prob{Z=z}^{1 - \alpha} dz\enspace.
  \end{align}
\end{definition}

We now define RDP in \Cref{def:rdp}. RDP provides a strong privacy
guarantee that can be converted to classical differential privacy
(\Cref{lemma:rdp-to-dp} and \Cref{cor:rdp-to-dp}).

\begin{definition}[Rényi Differential Privacy,
\citealt{mironov2017Renyi}]
  \label{def:rdp}
  A randomized algorithm $\cA : \mathcal D \rightarrow \mathcal F$ is
  $(\alpha, \epsilon)$-Rényi-differentially private (RDP) if, for all
  all datasets $D, D' \in \mathcal D$ differing on at most one
  element,
  \begin{align}
    D_\alpha(\cA(D) || \cA(D')) \le \epsilon\enspace.
  \end{align}
\end{definition}

\begin{lemma}[{\citealt[Proposition~3]{mironov2017Renyi}}]
  \label{lemma:rdp-to-dp}
  If a randomized algorithm $\cA : \mathcal D \rightarrow \mathcal F$
  is $(\alpha, \epsilon)$-RDP, then it is
  $(\epsilon + \frac{\log(1/\delta)}{\alpha - 1}, \delta)$-differentially
  private for all $0 < \delta < 1$.
\end{lemma}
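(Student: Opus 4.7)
The plan is to convert the Rényi divergence bound into a tail bound on the privacy loss via Markov's inequality, then translate the tail bound into the $(\epsilon',\delta)$-DP inequality. Fix neighboring $D, D' \in \cD$; assuming absolute continuity (otherwise $D_\alpha = +\infty$ and the hypothesis fails), define the privacy loss $Z(z) = \log\bigl(p_{\cA(D)}(z)/p_{\cA(D')}(z)\bigr)$. A direct algebraic manipulation shows the RDP hypothesis $D_\alpha(\cA(D)\|\cA(D')) \le \epsilon$ is equivalent to the moment generating function bound
\begin{align*}
\expec{z \sim \cA(D)}{e^{(\alpha-1)Z(z)}}
= \int p_{\cA(D)}(z)^\alpha\, p_{\cA(D')}(z)^{1-\alpha}\, dz
\le e^{(\alpha-1)\epsilon} \enspace.
\end{align*}

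Next I would invoke Markov's inequality on the nonnegative variable $e^{(\alpha-1)Z}$ under $z \sim \cA(D)$. For any $t > 0$,
\begin{align*}
\Pr_{z \sim \cA(D)}\!\bigl[Z(z) > t\bigr]
\le e^{-(\alpha-1)t}\, \expec{z \sim \cA(D)}{e^{(\alpha-1)Z(z)}}
\le e^{(\alpha-1)(\epsilon - t)} \enspace.
\end{align*}
Choosing $t = \epsilon + \tfrac{\log(1/\delta)}{\alpha - 1}$ makes the right-hand side exactly $\delta$, so the privacy loss exceeds $t$ with probability at most $\delta$ under $\cA(D)$.

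It remains to convert this tail bound on $Z$ into the $(\epsilon', \delta)$-DP inequality with $\epsilon' = t$. For any measurable $S \subseteq \cF$, split the output event according to whether the privacy loss is small:
\begin{align*}
\Pr[\cA(D) \in S]
&\le \int_{S \cap \{Z \le t\}} p_{\cA(D)}(z)\, dz + \Pr_{z \sim \cA(D)}\!\bigl[Z(z) > t\bigr] \\
&\le e^{t} \int_{S} p_{\cA(D')}(z)\, dz + \delta
= e^{t}\, \Pr[\cA(D') \in S] + \delta \enspace,
\end{align*}
using the pointwise bound $p_{\cA(D)}(z) \le e^{t} p_{\cA(D')}(z)$ on the event $\{Z \le t\}$ for the first term and the Markov tail estimate for the second. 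This is exactly $(\epsilon + \tfrac{\log(1/\delta)}{\alpha-1}, \delta)$-DP.

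The main obstacle is bookkeeping rather than conceptual: handling the general (non-discrete) output setting requires interpreting the density ratio as a Radon--Nikodym derivative, verifying measurability of $\{Z \le t\}$, and ensuring the $\cA(D')$-null set where the denominator vanishes contributes no $\cA(D)$-mass (which is automatic from finiteness of $D_\alpha$). Once these measure-theoretic points are in place, the argument reduces to a one-line application of Markov's inequality to the RDP moment bound and a standard pointwise comparison on the good event.
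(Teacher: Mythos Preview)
Your proof is correct and is essentially the standard argument: bound the moment generating function of the privacy loss via the RDP hypothesis, apply Markov's inequality to get a tail bound, then split any event $S$ according to whether the privacy loss exceeds the threshold. Note that the paper does not actually prove this lemma; it simply cites \citet[Proposition~3]{mironov2017Renyi}, whose original proof follows exactly the route you take.
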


\begin{remark}
  \label{rmq:rdp-to-dp}
  The above $(\alpha,\epsilon)$-RDP guarantees hold for multiple
  values of $\alpha,\epsilon$. As such, $\epsilon = \epsilon(\alpha)$
  can be seen as a function of $\alpha$, and \Cref{lemma:rdp-to-dp}
  ensures that the algorithm is $(\epsilon', \delta)$-DP for
  \begin{align}
    \epsilon' = \min_{\alpha > 1} \left\{ \epsilon(\alpha) + \frac{\log(1/\delta)}{\alpha - 1} \right\}\enspace.
  \end{align}
\end{remark}

We can now restate in \Cref{thm:rdp-composition} the composition
theorem of RDP, which is key in designing private iterative
algorithms.

\begin{theorem}[{\citealt[Proposition~1]{mironov2017Renyi}}]
  \label{thm:rdp-composition}
  Let $\cA_1, \dots, \cA_K : \cD \rightarrow \cF$ be $K > 0$
  randomized algorithms, such that for $1 \le k \le K$, $\cA_k$
  is $(\alpha, \epsilon_k(\alpha))$-RDP,
  where these algorithms can be chosen adaptively (i.e., $\cA_k$ can use
  to the output of $\cA_{k'}$ for all $k' < k$).
  Let $\cA : \mathcal D \rightarrow \mathcal F^K$ such that for $D \in \cD$,
  $\cA(D) = (\cA_1(D), \dots, \cA_K(D))$.
  Then $\cA$ is $\left(\alpha, \sum_{k=1}^K \epsilon_k(\alpha)\right)$-RDP.
\end{theorem}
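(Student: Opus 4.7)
The plan is to proceed by induction on $K$. The case $K=1$ is just the hypothesis. For the inductive step, the mechanism $(\cA_1,\dots,\cA_{K-1})$ viewed jointly is $(\alpha,\sum_{k=1}^{K-1}\epsilon_k(\alpha))$-RDP by the induction hypothesis, and composing with $\cA_K$ using the two-fold adaptive composition result gives the claimed bound. So the whole argument reduces to proving the $K=2$ case, and the rest of my plan focuses there.

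For $K=2$, fix neighboring $D\sim D'$ and set $Y_1=\cA_1(D)$, $Y_2=\cA_2(D)$, $Z_1=\cA_1(D')$, $Z_2=\cA_2(D')$. I would unfold the Rényi divergence directly from \Cref{def:renyi-div}, writing the joint density as the product of the marginal density of $Y_1$ (resp.\ $Z_1$) and the conditional density of $Y_2\mid Y_1$ (resp.\ $Z_2\mid Z_1$). Tonelli's theorem then lets me split the resulting double integral into an outer integral over $y_1$ and an inner integral over $y_2$:
\begin{align*}
\exp\bigl((\alpha-1)D_\alpha(Y_1,Y_2\Vert Z_1,Z_2)\bigr)
= \int P_{Y_1}(y_1)^\alpha P_{Z_1}(y_1)^{1-\alpha}\!\left(\int P_{Y_2\mid Y_1=y_1}(y_2)^\alpha P_{Z_2\mid Z_1=y_1}(y_2)^{1-\alpha}\,dy_2\right)dy_1.
\end{align*}

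The key step is controlling the inner integral. For each fixed realization $y_1$, the conditional mechanism $D\mapsto \cA_2(D)$ given that the first output equals $y_1$ is a genuine randomized mechanism on $\cD$, and the adaptivity hypothesis is exactly the statement that this conditional mechanism is $(\alpha,\epsilon_2(\alpha))$-RDP with a bound that is uniform in $y_1$. Hence the inner integral is at most $\exp\bigl((\alpha-1)\epsilon_2(\alpha)\bigr)$ pointwise. Pulling this constant out of the outer integral and recognizing the remainder as $\exp\bigl((\alpha-1)D_\alpha(Y_1\Vert Z_1)\bigr)\le \exp\bigl((\alpha-1)\epsilon_1(\alpha)\bigr)$, then taking logarithms and dividing by $\alpha-1$, yields the desired $\epsilon_1(\alpha)+\epsilon_2(\alpha)$ bound.

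The main obstacle is the careful formalization of adaptivity: one must verify that for \emph{every} admissible conditioning value $y_1$ the conditional mechanism still meets the same RDP bound $\epsilon_2(\alpha)$ on neighbors $D,D'$, which is precisely what the adaptivity hypothesis provides but must be invoked pointwise rather than in expectation. A secondary technical point is ensuring absolute continuity of $Y_k$ with respect to $Z_k$ so that the conditional densities exist and the Radon--Nikodym manipulations are valid; by the standard convention that $\epsilon_k(\alpha)=+\infty$ whenever this fails, the bound becomes vacuous in the problematic cases and the argument goes through. Once these points are handled, the computation is just an unfolding of \Cref{def:renyi-div} combined with Tonelli's theorem.
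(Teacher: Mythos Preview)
Your argument is correct and is essentially the standard proof of RDP composition from \citet[Proposition~1]{mironov2017Renyi}. Note, however, that the present paper does not give its own proof of this statement: it is simply restated from \citet{mironov2017Renyi} and used as a black box, so there is nothing in the paper to compare your proof against beyond the citation itself.
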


Finally, we define the Gaussian mechanism (\Cref{def:gaussian-mechanism}), as used
in \Cref{algo:dp-cd}, and restate in \Cref{lemma:gaussian-rdp-private} the
privacy guarantees that it satisfies in terms of RDP.

\begin{definition}[Gaussian mechanism]
  \label{def:gaussian-mechanism}
  Let $f: \mathcal D \rightarrow \RR^p$, $\sigma > 0$, and $D \in \cD$.
  The Gaussian mechanism for answering the query $f$ is defined as:
  \begin{align}
    \cM_f^{Gauss}(D; \sigma) = f(D) + \mathcal N\left( 0, \sigma^2 I_p
    \right)\enspace.
  \end{align}
\end{definition}

\begin{lemma}[{\citealt[Corollary 3]{mironov2017Renyi}}]
  \label{lemma:gaussian-rdp-private}
  The Gaussian mechanism with noise $\sigma^2$ is
  $(\alpha, \frac{\Delta(f)^2 \alpha}{2 \sigma^2})$-RDP, where
  $\Delta(f) = \sup_{D,D'} \norm{f(D) - f(D')}_2$ (for neighboring
  $D, D'$) is the sensitivity of $f$.
\end{lemma}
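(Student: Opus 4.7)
The plan is to reduce the claim to a direct computation of the Rényi divergence between two multivariate Gaussians sharing the same covariance but differing in means, and then bound the separation of those means by the sensitivity.

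First I would fix two neighboring datasets $D, D' \in \cD$, let $Y = \cM_f^{Gauss}(D;\sigma)$ and $Z = \cM_f^{Gauss}(D';\sigma)$, and observe that $Y \sim \cN(\mu_1, \sigma^2 I_p)$ and $Z \sim \cN(\mu_2, \sigma^2 I_p)$ with $\mu_1 = f(D)$ and $\mu_2 = f(D')$. Since the Rényi divergence is a functional of the output distributions alone, it suffices to compute $D_\alpha(\cN(\mu_1,\sigma^2 I_p)\,\|\,\cN(\mu_2,\sigma^2 I_p))$ and then upper-bound $\norm{\mu_1 - \mu_2}_2$ by $\Delta(f)$ by the very definition of sensitivity.

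Second, I would plug in the Gaussian densities into \Cref{def:renyi-div} and collect exponents:
\begin{align*}
\int_{\RR^p} p(z)^\alpha q(z)^{1-\alpha}\, dz = \frac{1}{(2\pi\sigma^2)^{p/2}} \int_{\RR^p} \exp\!\left(-\tfrac{1}{2\sigma^2}\bigl[\alpha\norm{z-\mu_1}_2^2 + (1-\alpha)\norm{z-\mu_2}_2^2\bigr]\right) dz.
\end{align*}
The key algebraic step is completing the square: setting $\bar\mu = \alpha\mu_1 + (1-\alpha)\mu_2$ (a formal affine combination, noting $1-\alpha < 0$), a direct expansion gives
\begin{align*}
\alpha\norm{z-\mu_1}_2^2 + (1-\alpha)\norm{z-\mu_2}_2^2 = \norm{z - \bar\mu}_2^2 + \alpha(1-\alpha)\norm{\mu_1 - \mu_2}_2^2.
\end{align*}
The coefficient in front of $\norm{z-\bar\mu}_2^2$ inside the exponent remains $1/(2\sigma^2) > 0$, so the remaining integral is that of a properly normalized Gaussian density centered at $\bar\mu$ and equals one. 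Extracting the constant term yields
\begin{align*}
\int_{\RR^p} p(z)^\alpha q(z)^{1-\alpha}\, dz = \exp\!\left(\tfrac{\alpha(\alpha-1)}{2\sigma^2}\norm{\mu_1 - \mu_2}_2^2\right).
\end{align*}

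Third, I would take the logarithm and divide by $\alpha - 1$ as required by \Cref{def:renyi-div}:
\begin{align*}
D_\alpha(Y\,\|\,Z) = \frac{\alpha}{2\sigma^2}\norm{\mu_1 - \mu_2}_2^2 \le \frac{\alpha\,\Delta(f)^2}{2\sigma^2},
\end{align*}
where the inequality uses $\norm{\mu_1 - \mu_2}_2 = \norm{f(D) - f(D')}_2 \le \Delta(f)$. Since $D, D'$ were arbitrary neighboring datasets, this is exactly the $(\alpha, \Delta(f)^2\alpha/(2\sigma^2))$-RDP guarantee.

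The only real obstacle is the completion-of-the-square step; the subtle point is that although $1-\alpha$ is negative for $\alpha > 1$, the quadratic form in $z$ has a strictly positive leading coefficient (namely $1$, before the $1/(2\sigma^2)$ factor), so the Gaussian integral converges and equals a normalizing constant. Everything else is bookkeeping.
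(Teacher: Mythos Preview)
Your proof is correct but takes a different route from the paper's. The paper does not compute the Rényi divergence from scratch: it rescales $f$ to $h = f/\Delta(f)$, which has sensitivity~$1$, invokes \citet[Corollary~1]{mironov2017Renyi} as a black box to get that $\cM_h^{Gauss}(\cdot;s)$ is $(\alpha,\alpha/(2s^2))$-RDP, and then observes that $\cM_f^{Gauss}(\cdot;\sigma) = \Delta(f)\cdot \cM_h^{Gauss}(\cdot;\sigma/\Delta(f))$ is just a post-processing of the latter mechanism, hence inherits the RDP bound with $s = \sigma/\Delta(f)$. Your argument instead derives the closed form $D_\alpha(\cN(\mu_1,\sigma^2 I_p)\,\|\,\cN(\mu_2,\sigma^2 I_p)) = \alpha\norm{\mu_1-\mu_2}_2^2/(2\sigma^2)$ directly by completing the square in the exponent and integrating. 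Your approach is fully self-contained and in fact reproves the cited corollary along the way; the paper's is shorter only because it defers the actual computation to an external reference. Both are standard and both are valid.
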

\begin{proof}
  The function $h = \frac{f}{\Delta(f)}$ has sensitivity $1$, thus for
  any $s > 0$, the Gaussian mechanism $\cM_h^{Gauss}(\cdot;s)$ is
  $(\alpha, \frac{\alpha}{2\sigma^2})$-RDP
  \citep[Corollary~1]{mironov2017Renyi}. As $f = \Delta(f) \times h$,
  we have
  $\cM_f^{Gauss}(\cdot;\sigma) = \Delta(f) \times
  \cM_h^{Gauss}(\cdot;\frac{\sigma}{\Delta(f)})$.
  This mechanism is thus
  $(\alpha, \frac{\Delta(f)^2\alpha}{2\sigma^2})$-RDP.
\end{proof}

\begin{corollary}
  \label{cor:rdp-to-dp}
  Let $0 < \epsilon \le 1, 0 < \delta < \tfrac{1}{3}$. If a randomized
  algorithm $\cA : \mathcal D \rightarrow \mathcal F$ is
  $(\alpha, \frac{\gamma \alpha}{2 \sigma^2})$-RDP with $\gamma > 0$ and
  $\sigma = \frac{\sqrt{3 \gamma \log(1/\delta)}}{\epsilon}$ for all
  $\alpha > 1$, it is also $(\epsilon, \delta)$-DP.
\end{corollary}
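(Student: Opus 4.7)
The plan is to apply Lemma~\ref{lemma:rdp-to-dp} with $\epsilon(\alpha) = \gamma\alpha/(2\sigma^2)$, which converts the stated RDP hypothesis into an $(\epsilon(\alpha) + \log(1/\delta)/(\alpha-1),\delta)$-DP guarantee for every $\alpha > 1$. It then suffices to exhibit a single $\alpha$ for which the first component is at most the target $\epsilon$. Substituting the prescribed value $\sigma^2 = 3\gamma\log(1/\delta)/\epsilon^2$ makes the problem-dependent constant $\gamma$ cancel, reducing the quantity to be controlled to
\begin{align*}
\frac{\alpha\,\epsilon^2}{6\log(1/\delta)} + \frac{\log(1/\delta)}{\alpha-1}\enspace.
\end{align*}

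I would then set $\alpha - 1 = 2\log(1/\delta)/\epsilon$. This is slightly sub-optimal compared to the exact balance of the two terms, but yields clean constants and is clearly $> 0$ so the RDP hypothesis applies. With this choice the second term equals $\epsilon/2$, and expanding the first gives $\epsilon^2/(6\log(1/\delta)) + \epsilon/3$. Summing the two contributions, the DP parameter delivered by Lemma~\ref{lemma:rdp-to-dp} is
\begin{align*}
\frac{\epsilon^2}{6\log(1/\delta)} + \frac{5\epsilon}{6}\enspace,
\end{align*}
which is bounded by $\epsilon$ exactly when $\epsilon \le \log(1/\delta)$.

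The role of the two hypotheses is then transparent: $\epsilon \le 1$ and $\delta < 1/3$ together imply $\epsilon \le 1 < \log 3 < \log(1/\delta)$, closing the inequality. There is essentially no obstacle beyond choosing a suitable $\alpha$; the constant $3$ in the prescription of $\sigma^2$ is precisely what is needed to make the residual term $\epsilon^2/(6\log(1/\delta))$ small enough to absorb under these assumptions. A slightly tighter noise scale could be obtained by instead taking the variational minimum over $\alpha$ (as in Remark~\ref{rmq:rdp-to-dp}), but the closed-form value above is more convenient for quoting the result as in Theorem~\ref{thm:dp-cd-privacy}.
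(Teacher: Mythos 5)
Your proposal is correct and follows essentially the same route as the paper: both convert the RDP guarantee via Lemma~\ref{lemma:rdp-to-dp}, substitute $\sigma^2 = 3\gamma\log(1/\delta)/\epsilon^2$, and conclude using $\epsilon \le 1$ and $\log(1/\delta) > 1$. The only difference is that the paper takes the exact minimizer $\alpha = 1 + \sqrt{2\log(1/\delta)\sigma^2/\gamma}$ (giving the constant $1/6 + \sqrt{2/3} \approx 0.983$), whereas you pick the convenient value $\alpha = 1 + 2\log(1/\delta)/\epsilon$ with slightly looser but still sufficient constants; this is an inessential variation.
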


\begin{proof}
  From \Cref{rmq:rdp-to-dp} it holds that $\cA$ is
  $(\epsilon',\delta)$-DP with
  $\epsilon' = \min_{\alpha > 1} \left\{ \frac{\gamma \alpha}{2 \sigma^2} +
    \frac{\log(1/\delta)}{\alpha - 1} \right\}.$ This minimum is
  attained when the derivative of the objective is zero, which is the
  case when
  $\frac{\gamma}{2\sigma^2} = \frac{\log(1/\delta)}{(\alpha - 1)^2}$,
  resulting in $\alpha = 1 + \sqrt{\frac{2\log(1/\delta)\sigma^2}{\gamma}}$.
  $\cA$ is thus $(\epsilon', \delta)$-DP with
  \begin{align}
  \label{eq:full_privacy_formula}
    \epsilon' = \frac{\gamma}{2\sigma^2} + \frac{\sqrt{\gamma \log(1/\delta)}}{\sqrt{2} \sigma}
    + \frac{\sqrt{\gamma \log(1/\delta)}}{\sqrt{2} \sigma}
     = \frac{\gamma}{2\sigma^2} + \frac{\sqrt{2\gamma \log(1/\delta)}}{\sigma}\enspace.
  \end{align}

  Choosing $\sigma = \frac{\sqrt{3 \gamma \log(1/\delta)}}{\epsilon}$
  now gives
  \begin{align}
    \epsilon'
    = \frac{\epsilon^2}{6 \log(1/\delta)} + \sqrt{{2}/{3}} \epsilon
    \le (1/6  + \sqrt{2/3}) \epsilon
    \le \epsilon\enspace,
  \end{align}
  where the first inequality comes from $\epsilon \leq 1$, thus $\epsilon^2
  \le \epsilon$
  and $\delta < 1/3$ thus $\frac{1}{\log(1/\delta)} \le 1$.
  The second inequality follows from $1/6  + \sqrt{2/3} \approx 0.983 < 1$.
\end{proof}

\subsection{Proof of \Cref{thm:dp-cd-privacy}}

We are now ready to prove \Cref{thm:dp-cd-privacy}.
From the privacy perspective, \Cref{algo:dp-cd} adaptively releases
and post-processes a series of gradient coordinates protected by the Gaussian
mechanism. We thus start by proving
\Cref{lemma:rdp-composition-gaussian-k}, which gives an $
(\epsilon,\delta)$-differential privacy
guarantee for the adaptive composition of $K$ Gaussian mechanisms.

\begin{lemma}
  \label{lemma:rdp-composition-gaussian-k}
  Let $0 < \epsilon \le 1$, $\delta < 1/3$, $K > 0$, $p > 0$, and
  $\{f_k : \RR^p \rightarrow \RR\}_{k=1}^{k=K}$ a family of $K$ functions.
  The adaptive composition of $K$ Gaussian mechanisms,
  with the $k$-th mechanism releasing
  $f_k$ with noise scale $\sigma_k = \frac{\Delta(f_k) \sqrt{3K \log(1/\delta)}}{\epsilon}$
  is $(\epsilon, \delta)$-differentially private.
\end{lemma}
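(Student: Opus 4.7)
The plan is to combine three ingredients already established in the excerpt: the per-mechanism RDP guarantee of the Gaussian mechanism (Lemma on $(\alpha,\Delta(f)^2\alpha/2\sigma^2)$-RDP), the adaptive composition theorem for RDP (Theorem on RDP composition), and the RDP-to-DP conversion packaged in Corollary \texttt{rdp-to-dp}. The specific form of $\sigma_k$ is cooked up precisely so that, after composition, Corollary \texttt{rdp-to-dp} applies verbatim.

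First, I would apply the Gaussian mechanism lemma to each of the $K$ releases individually. Plugging the chosen $\sigma_k = \Delta(f_k)\sqrt{3K\log(1/\delta)}/\epsilon$ into the RDP bound $\Delta(f_k)^2\alpha/(2\sigma_k^2)$ causes the sensitivity $\Delta(f_k)$ to cancel, leaving the \emph{same} guarantee for every $k$, namely $(\alpha,\,\alpha\epsilon^2/(6K\log(1/\delta)))$-RDP, valid for all $\alpha>1$. The cancellation of $\Delta(f_k)$ is the whole reason the noise scale is calibrated to the per-mechanism sensitivity.

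Second, I would invoke the adaptive RDP composition theorem: since each of the $K$ mechanisms is $(\alpha,\epsilon_k(\alpha))$-RDP with $\epsilon_k(\alpha)=\alpha\epsilon^2/(6K\log(1/\delta))$, their adaptive composition is $(\alpha,\sum_{k=1}^K \epsilon_k(\alpha))$-RDP, i.e.\ $(\alpha,\alpha\epsilon^2/(6\log(1/\delta)))$-RDP. This holds for every $\alpha>1$.

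Finally, I would match this guarantee with Corollary \texttt{rdp-to-dp} by taking $\gamma=1$ and $\sigma=\sqrt{3\log(1/\delta)}/\epsilon$, so that $\gamma\alpha/(2\sigma^2)=\alpha\epsilon^2/(6\log(1/\delta))$ and $\sigma=\sqrt{3\gamma\log(1/\delta)}/\epsilon$, exactly the hypothesis of the corollary. Under the assumptions $\epsilon\le 1$ and $\delta<1/3$, the corollary then concludes that the composed algorithm is $(\epsilon,\delta)$-DP. There is no real obstacle here, just a parameter-matching exercise; the only thing to be careful about is keeping the RDP guarantee uniform in $\alpha$ so that Corollary \texttt{rdp-to-dp} (which is itself a minimization over $\alpha$) can be applied.
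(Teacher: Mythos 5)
Your proposal is correct and follows the same route as the paper: per-mechanism RDP for the Gaussian mechanism, the adaptive RDP composition theorem, and the conversion in Corollary~\ref{cor:rdp-to-dp}. The only difference is a cosmetic re-parameterization (you cancel $\Delta(f_k)$ before composing and apply the corollary with $\gamma=1$, $\sigma=\sqrt{3\log(1/\delta)}/\epsilon$, while the paper writes $\sigma_k=\Delta(f_k)\sigma$ and applies it with $\gamma=K$), which yields the identical bound.
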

\begin{proof}
  Let $\sigma > 0$. \Cref{lemma:gaussian-rdp-private} guarantees that
  the $k$-th Gaussian mechanism with noise scale
  $\sigma_k = \Delta(f_k) \sigma > 0$ is
  $(\alpha, \frac{\alpha}{2 \sigma^2})$-RDP.  Then, the composition of
  these $K$ mechanisms is, according to \Cref{thm:rdp-composition},
  $(\alpha, \frac{k\alpha}{2 \sigma^2})$-RDP.  This can be converted
  to $(\epsilon,\delta)$-DP via \Cref{cor:rdp-to-dp} with
  $\gamma = K$, which gives
  $\sigma_k = \frac{\Delta(f_k)\sqrt{3 k \log(1/\delta)}}{\epsilon}$
  for $k\in[K]$.
  \end{proof}

We now restate \Cref{thm:dp-cd-privacy} and prove it.

\begin{restate-theorem}{\ref{thm:dp-cd-privacy}}
  Assume $\ell(\cdot;d)$ is $L$-component-Lipschitz $\forall d\in\cX$.
  Let $\epsilon < 1$ and $\delta < 1/3$.
  If $\sigma_j^2 = \frac{12L_j^2 TK \log(1/\delta)}{n^2\epsilon^2}$
  for all $j \in [p]$,
  then \Cref{algo:dp-cd} satisfies $(\epsilon, \delta)$-DP.
\end{restate-theorem}

\begin{proof}
  For $j \in [1, p]$, $\nabla_j f$ in \Cref{algo:dp-cd} is released using the
  Gaussian mechanism with noise variance $\sigma_j^2$.
  The sensitivity of $\nabla_j f$
  is $\Delta(\nabla_j f) = \frac{\Delta (\nabla_j \ell)}{n} \le \frac{2L_j}
  {n}$. Note that $TK$ gradients are released, and
  \begin{align*}
    \sigma_j^2 = \frac{12L_j^2 TK \log(1/\delta)}{n^2\epsilon^2} \text{ for } j \in [1, p]\enspace,
  \end{align*}
  thus by \Cref{lemma:rdp-composition-gaussian-k} and the post-processing
  property of DP,
  \Cref{algo:dp-cd}
  is  $(\epsilon, \delta)$-differentially private.
\end{proof}

\section{Proof of Utility (\Cref{thm:cd-utility})}
\label{sec-app:proof-utility}

\subsection{Problem Statement}
\label{sec:problem-statement}

Let $D \in \cX^n$ be a dataset of $n$ elements drawn from a universe $\cX$.
Recall that we consider the following composite empirical risk minimization
problem:
\begin{align}
  \label{eq:comp_erm_supp}
  w^* \in
  \argmin_{w \in \mathbb{R}^p}
  \Bigg\{
  F(w; D) =
  \underbrace{\frac{1}{n} \sum_{i=1}^n \ell(w; d_i)}_{=: f(w; D)} + \psi(w)
  \Bigg\}\enspace,
\end{align}
where $\ell(\cdot, d)$ is convex, $L$-component-Lipschitz, and $M$-component-smooth
for all $d \in \cX$, and $\psi(w) = \sum_{j=1}^p \psi_j(w_j)$ is convex and
separable.
We denote by $F$ the complete objective function, and by $f$ its smooth part.
For readability, we omit the dependence on their second argument (\ie the
data) in the rest of this section.

\subsection{Proof of \Cref{thm:cd-utility}}
\label{sec:proof-thm-utility}

In this section, we prove our central theorem that guarantees the utility of
the DP-CD algorithm.
To this end, we start by proving a lemma that upper bounds the expected value
of $F(\theta^{k+1})$ in \Cref{algo:dp-cd}.
Using this lemma, we prove sub-linear convergence for the inner loop of DP-CD.
This gives the sub-linear convergence of our algorithm for convex losses.
Under the additional hypothesis that $F$ is strongly convex, we show that
iterates of the outer loop of DP-CD converge linearly towards the (unique)
minimum of
$F$.

We recall that in \Cref{algo:dp-cd}, iterates of the inner loop are denoted by
$\theta_1, \dots, \theta_K$, and those of the outer loop by $\bar w_1, \dots,
  \bar w_T$, with $\bar w_t = \frac{1}{K} \sum_{k=1}^K \theta^k$ for $t > 0$.
\Cref{algo:dp-cd} is randomized in two ways: when choosing the coordinate to
update and when drawing noise.
For convenience, we denote by $\expec{j}{\cdot}$ the expectation \wrt the
choice of coordinate,
by $\expec{\eta}{\cdot}$ the one \wrt the noise, and by $\expec{j,\eta}{\cdot}$
the expectation \wrt both.
When no subscript is used, the expectation is taken over all random
variables.
We will also use the notation $\condexpec{j,\eta}{\cdot}{\theta_k}$ for the
conditional expectation of a random variable, given a realization of
$\theta_k$.

\subsubsection{Descent Lemma}

We begin by proving \Cref{lemma:expec-first-upper-bound}, which decomposes the
change of a function $F$ when updating its argument $\theta \in \RR^p$,
in relation to a vector $w \in \RR^p$, into two parts:
one that remains fixed, corresponding to the unchanged entries of $\theta$,
and a second part corresponding to the objective decrease due to the
update.
At this point, the vector $w$ is arbitrary, but we will later choose $w$ to be
a minimizer of $F$, that is a solution to \eqref{eq:comp_erm_supp}.
\begin{lemma}
  \label{lemma:expec-first-upper-bound}
  Let $\ell, f, \psi,$ and $F$ be defined as in
  Section~\ref{sec:problem-statement}.
  Take a random variable $\theta \in \RR^p$ and two arbitrary vectors
  $w, g \in \RR^p$.
  Let a random variable $j$, taking its values uniformly randomly in $[p]$,
  Choose $\gamma_1, \dots, \gamma_p > 0$ and $\Gamma = \diag(\gamma_1, \dots, \gamma_p)$.
  It holds that%
  \begin{align}
    \label{eq:lemma-1-eq}
     & \condexpec{j}{F(\theta - \gamma_j g_j e_j) - F(w)}{\theta} - \frac{p - 1}{p} (F(\theta) - F(w)) \nonumber \\
     & \qquad \le \frac{1}{p} \left(f(\theta) - f(w) + \scalar{\nabla f(\theta)}{- \Gamma g}
    + \frac{1}{2}\norm{\Gamma g}_{M}^2 + \psi(\theta - \Gamma g) - \psi
    (w)\right)\enspace.
  \end{align}
\end{lemma}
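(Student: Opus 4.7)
The plan is to expand the conditional expectation over the uniformly random coordinate $j$, exploit the separability of $\psi$ to rewrite a sum over coordinate-wise proximal-free updates as a single ``full'' update, and then bound the smooth part with the $M$-component-smoothness hypothesis. The right-hand side is really just the per-coordinate change aggregated and averaged, so almost everything is bookkeeping; the only genuine inequality step is the component-smooth upper bound on $f$.

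Concretely, I would first note that since $j$ is uniform on $[p]$,
\begin{align*}
\condexpec{j}{F(\theta - \gamma_j g_j e_j)}{\theta}
  = \frac{1}{p}\sum_{j=1}^{p} F(\theta - \gamma_j g_j e_j),
\end{align*}
so the left-hand side of the claim rewrites as
$\frac{1}{p}\sum_{j=1}^{p}\bigl[F(\theta - \gamma_j g_j e_j) - F(\theta)\bigr] + \frac{1}{p}\bigl(F(\theta) - F(w)\bigr)$.
I would then split $F = f + \psi$ and treat the two pieces independently.

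For the nonsmooth part, separability $\psi(u) = \sum_k \psi_k(u_k)$ gives $\psi(\theta - \gamma_j g_j e_j) - \psi(\theta) = \psi_j(\theta_j - \gamma_j g_j) - \psi_j(\theta_j)$, so summing over $j$ telescopes coordinate-by-coordinate to $\psi(\theta - \Gamma g) - \psi(\theta)$. For the smooth part, I apply $M$-component-smoothness along the direction $e_j$ with step $-\gamma_j g_j$ to obtain
\begin{align*}
f(\theta - \gamma_j g_j e_j) - f(\theta)
  \le -\gamma_j g_j \nabla_j f(\theta) + \tfrac{M_j}{2}\gamma_j^2 g_j^2,
\end{align*}
and summing over $j$ yields exactly $\scalar{\nabla f(\theta)}{-\Gamma g} + \tfrac{1}{2}\norm{\Gamma g}_M^2$, since $\Gamma$ and $M$ are diagonal.

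Finally I combine these two bounds, use $F(\theta) - F(w) = f(\theta) + \psi(\theta) - F(w)$ and observe the $\psi(\theta)$ contributed by the smooth-part step cancels with the $-\psi(\theta)$ from the separability step, leaving precisely the right-hand side of \eqref{eq:lemma-1-eq}. There is no real obstacle here: the nonsmooth $\psi(w)$ is introduced simply via $F(w) = f(w) + \psi(w)$, and the lemma never requires any property of $g$, which is why it will be usable later with $g = \nabla f(\theta) + \eta$ (or with the proximal update plugged in via a standard subgradient inequality on $\psi$) inside the main utility argument. The only subtle point worth being careful about is to keep $w$ fully generic and to not confuse the diagonal matrices $\Gamma$ and $M$ when expanding $\norm{\Gamma g}_M^2 = \sum_j M_j \gamma_j^2 g_j^2$.
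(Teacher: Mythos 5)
Your proof is correct and follows essentially the same route as the paper's: expand the uniform expectation over $j$, bound the smooth part via $M$-component-smoothness along $e_j$, and use separability of $\psi$ to collapse the coordinate-wise sum into $\psi(\theta-\Gamma g)$; your bookkeeping (measuring changes against $F(\theta)$ and adding back $\tfrac{1}{p}(F(\theta)-F(w))$) is just a slightly different arrangement of the paper's identity $\sum_{j}\bigl(\psi(\theta-\gamma_j g_j e_j)-\psi(w)\bigr)=\psi(\theta-\Gamma g)-\psi(w)+(p-1)\bigl(\psi(\theta)-\psi(w)\bigr)$.
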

\begin{remark}
  \label{rmq:expec-first-upper-bound-without-non-smooth}
  To avoid notational clutter, we will write $\gamma_j g_j$ instead of
  $\gamma_j g_j e_j$ throughout this section.
\end{remark}
\begin{proof}
  We start the proof by finding an upper bound on $\condexpec{j}{F(\theta - \gamma_j g_j e_j) - F(w)}{\theta}$,
  using the \textit{$M$-component-smoothness} of $f$:
  \begin{align}
     & \condexpec{j}{F(\theta - \gamma_j g_j e_j) - F(w)}{\theta}
    = \sum_{j=1}^p \frac{1}{p} (F(\theta - \gamma_j g_j) - F(w))                                     \\
    \overset{F=f+\psi}
     & {=} \frac{1}{p} \sum_{j=1}^p f(\theta - \gamma_j g_j) - f(w)
    + \psi(\theta - \gamma_j g_j) - \psi(w)                                                          \\
    \overset{f \text{ smooth}}%
     & {\le}
    \frac{1}{p} \sum_{j=1}^p \left( f(\theta) + \scalar{\nabla f(\theta)}{- \gamma_j g_j}
    + \frac{1}{2}\norm{\gamma_j g_j}_M^2 - f(w)
    + \psi(\theta - \gamma_j g_j) - \psi(w) \right)                                                  \\
     & = f(\theta) - f(w) + \frac{1}{p} \sum_{j=1}^p \left(\scalar{\nabla f(\theta)}{- \gamma_j g_j}
    + \frac{1}{2}\norm{\gamma_j g_j}_{M}^2
    + (\psi(\theta - \gamma_j g_j) - \psi(w)) \right)                                                \\
     & = f(\theta) - f(w) + \frac{1}{p} \scalar{\nabla f(\theta)}{- \Gamma g}
    + \frac{1}{2p}\norm{\Gamma g}_{M}^2
    + \frac{1}{p}\sum_{j=1}^p(\psi(\theta - \gamma_j g_j) - \psi(w))\enspace. \label{eq:upper-bound-expectation-smoothness}
  \end{align}

  The regularization terms can now be reorganized using the separability of $\psi$, as done by \cite{richtarik2014Iteration}.
  Indeed, we notice that
  \begin{align}
    \sum_{j=1}^p \left(\psi(\theta - \gamma_j g_j) - \psi(w)\right)
     & = \sum_{j=1}^p \Big(\psi_j(\theta_j - \gamma_j g_j) - \psi_j(w_j)
    + \sum_{j'\neq j} \psi_{j'}(\theta_{j'}) - \psi(w_{j'})\Big)                                                          \\
     & = \psi(\theta - \Gamma g) - \psi(w) + (p - 1) (\psi(\theta) - \psi(w))\enspace. \label{eq:reorganize-terms-of-regularizer}
  \end{align}

  Plugging \eqref{eq:reorganize-terms-of-regularizer} in
  \eqref{eq:upper-bound-expectation-smoothness} results in the following:
  \begin{align}
    \condexpec{j}{F(\theta - \gamma_j g_j e_j) - F(w)}{\theta}
     & \le f(\theta) - f(w) + \frac{1}{p} \scalar{\nabla f(\theta)}{- \Gamma g}
    + \frac{1}{2p}\norm{\Gamma g}_{M}^2 \nonumber                                   \\
     & \qquad + \frac{1}{p} (\psi(\theta - \Gamma g) - \psi(w))
    + \frac{p - 1}{p} (\psi(\theta) - \psi(w))                                      \\
     & = \frac{1}{p} \left(f(\theta) - f(w) + \scalar{\nabla f(\theta)}{- \Gamma g}
    + \frac{1}{2}\norm{\Gamma g}_{M}^2
    + \psi(\theta - \Gamma g) - \psi(w)
    \right) \nonumber                                                               \\
     & \qquad + \frac{p - 1}{p} (f(\theta) + \psi(\theta) - f(w) - \psi(w))\enspace,         %
  \end{align}
  which gives the lemma since $F = f + \psi$.
\end{proof}

To exploit this result, we need to upper bound the right hand side of
\eqref{eq:lemma-1-eq} for the realizations of $\theta^k$ in \Cref{algo:dp-cd}.
This is where our proof differs from classical convergence proofs for coordinate
descent methods.
Namely, we rewrite the right hand side of
\eqref{eq:lemma-1-eq} so as to obtain telescopic terms plus a
bias term resulting from the addition of noise, as shown in \Cref{lemma:descent-lemma}.

\begin{lemma}
  \label{lemma:descent-lemma}
  Let $\ell, f, \psi,$ and $F$ defined as in Section~\ref{sec:problem-statement}.
  For $k > 0$, let $\theta^k$ and $\theta^{k+1}$ be two consecutive iterates
  of the
  inner loop of \Cref{algo:dp-cd}, $\gamma_1 = \frac{1}{M_1}, \dots, \gamma_p = \frac{1}{M_p} > 0$
  the coordinate-wise step sizes
  (where $M_j$ are the coordinate-wise smoothness constants of $f$),
  and $g_j = \frac{1}{\gamma_j} (\theta^{k+1}_j - \theta^k_j)$.
  Let $w \in \RR^p$ an arbitrary vector and $\sigma_1, \dots, \sigma_p > 0$ the
  coordinate-wise noise scales given as input to \Cref{algo:dp-cd}.
  It holds that
  \begin{align}
    \label{eq:descent-lemma-eq}
    & \condexpec{j, \eta}{F(\theta^{k+1}) - F(w)}{\theta^k}
    - \frac{p - 1}{p} (F(\theta^k) - F(w)) \nonumber \\
    & \qquad \le \tfrac{1}{2} \norm{\theta^k - w}_{\Gamma^{-1}}^2 - \tfrac{1}{2}\condexpec{j,\eta}{\norm{\theta^{k+1} - w}_{\Gamma^{-1}}^2}{\theta^k}
    + \tfrac{1}{p}\norm{\sigma}_\Gamma^2\enspace,
  \end{align}
  where $\norm{\sigma}_\Gamma^2 = \sum_{j=1}^p \gamma_j \sigma_j^2$ and the
  expectations are taken over the random choice of $j$ and $\eta$,
  conditioned upon the realization of $\theta^k$.
\end{lemma}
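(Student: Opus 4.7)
The plan is to apply Lemma~\ref{lemma:expec-first-upper-bound} conditionally on the noise $\eta$ and then rearrange its right-hand side into a telescoping difference of $\norm{\cdot - w}_{\Gamma^{-1}}^2$ terms, up to a residual noise bias that I will control via the variances. First, condition on $\theta^k$ and on a realization of the Gaussian noise $\eta = (\eta_1,\dots,\eta_p)$, and set $h_j := \nabla_j f(\theta^k) + \eta_j$. Define the vector $g \in \RR^p$ coordinate-wise by $g_j := \tfrac{1}{\gamma_j}\bigl(\theta^k_j - \prox_{\gamma_j\psi_j}(\theta^k_j - \gamma_j h_j)\bigr)$, so $\theta^{k+1,j}_j := \theta^k_j - \gamma_j g_j$ is precisely the value $\theta^{k+1}_j$ takes on the event ``coordinate $j$ is picked''. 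Conditionally on $(\theta^k,\eta)$ this $g$ is deterministic, so Lemma~\ref{lemma:expec-first-upper-bound} applies and bounds $\condexpec{j}{F(\theta^{k+1}) - F(w)}{\theta^k,\eta} - \tfrac{p-1}{p}(F(\theta^k) - F(w))$ by an explicit $\tfrac{1}{p}$-scaled expression in $f$, $\psi$, and $g$.

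Next, I would compress that expression. Convexity of $f$ turns $f(\theta^k) - f(w)$ into $\scalar{\nabla f(\theta^k)}{\theta^k - w}$, which combines with the $\scalar{\nabla f(\theta^k)}{-\Gamma g}$ term to give $\sum_j \nabla_j f(\theta^k)(\theta^{k+1,j}_j - w_j)$. Substituting $\nabla_j f(\theta^k) = h_j - \eta_j$ then isolates a ``noise bias'' $-\sum_j \eta_j(\theta^{k+1,j}_j - w_j)$ from the noisy proximal part. On that second part, the standard prox inequality applied coordinate-wise to $\theta^{k+1,j}_j = \prox_{\gamma_j\psi_j}(\theta^k_j - \gamma_j h_j)$ (i.e., $1$-strong convexity of the prox objective evaluated at $w_j$) yields
\begin{align*}
h_j(\theta^{k+1,j}_j - w_j) + \psi_j(\theta^{k+1,j}_j) - \psi_j(w_j) + \tfrac{\gamma_j}{2} g_j^2 \le \tfrac{1}{2\gamma_j}\bigl[(w_j - \theta^k_j)^2 - (w_j - \theta^{k+1,j}_j)^2\bigr],
\end{align*}
and since $\gamma_j = 1/M_j$ the $\tfrac{\gamma_j}{2} g_j^2$ on the left exactly cancels the coordinate-$j$ share of $\tfrac{1}{2}\norm{\Gamma g}_M^2$ coming from Lemma~\ref{lemma:expec-first-upper-bound} when the inequality is summed over $j$. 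Averaging over the uniform pick of $j$ and using the identity $\condexpec{j,\eta}{\norm{\theta^{k+1} - w}_{\Gamma^{-1}}^2}{\theta^k} = \tfrac{1}{p}\sum_j \tfrac{1}{\gamma_j}\condexpec{\eta}{(\theta^{k+1,j}_j - w_j)^2}{\theta^k} + \tfrac{p-1}{p}\norm{\theta^k - w}_{\Gamma^{-1}}^2$ (non-picked coordinates remain equal to $\theta^k$), the telescopic piece reassembles into exactly $\tfrac{1}{2}\norm{\theta^k - w}_{\Gamma^{-1}}^2 - \tfrac{1}{2}\condexpec{j,\eta}{\norm{\theta^{k+1} - w}_{\Gamma^{-1}}^2}{\theta^k}$.

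The main obstacle is the residual noise bias $\condexpec{\eta}{-\sum_j \eta_j(\theta^{k+1,j}_j - w_j)}{\theta^k}$, which does not vanish by a mean-zero argument alone because $\theta^{k+1,j}_j$ itself depends on $\eta_j$. My plan for this is to exploit the $1$-Lipschitz property of $\prox_{\gamma_j\psi_j}$: the map $\eta_j \mapsto \theta^{k+1,j}_j = \prox_{\gamma_j\psi_j}(\theta^k_j - \gamma_j\nabla_j f(\theta^k) - \gamma_j\eta_j)$ is $\gamma_j$-Lipschitz. Writing $\tilde\theta_j := \theta^{k+1,j}_j\big|_{\eta_j=0}$ (which is independent of $\eta_j$) and using $\condexpec{\eta_j}{\eta_j}{\theta^k}=0$, I can center the inner bracket and bound $\condexpec{\eta_j}{-\eta_j(\theta^{k+1,j}_j - w_j)}{\theta^k} = \condexpec{\eta_j}{-\eta_j(\theta^{k+1,j}_j - \tilde\theta_j)}{\theta^k} \le \condexpec{\eta_j}{\gamma_j\eta_j^2}{\theta^k} = \gamma_j\sigma_j^2$. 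Summing over $j$ contributes exactly $\tfrac{1}{p}\norm{\sigma}_\Gamma^2$, matching the last term on the right-hand side of the lemma and closing the proof.
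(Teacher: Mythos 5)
Your proof is correct and follows essentially the same route as the paper's: it starts from Lemma~\ref{lemma:expec-first-upper-bound}, rewrites the right-hand side as the telescoping difference of $\norm{\cdot - w}_{\Gamma^{-1}}^2$ terms, and controls the residual noise bias by centering at the noiseless prox point and invoking non-expansiveness of the proximal operator, exactly as in the paper. The only (cosmetic) difference is that you obtain the telescoping via the coordinate-wise prox three-point inequality, whereas the paper uses convexity of $\psi$ together with the prox optimality condition and an explicit completion of the square; the two derivations are equivalent, and your explicit conditioning on the full noise vector $\eta$ is a slightly cleaner way to justify applying Lemma~\ref{lemma:expec-first-upper-bound} with a deterministic $g$.
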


\begin{proof}
  We define $g$ the vector $(g_1, \dots, g_p) \in \RR^p$ with
  $g_j = \frac{1}{\gamma_j} (\theta^{k+1}_j - \theta^k_j)$ when coordinate $j$
  is chosen in \Cref{algo:dp-cd}.
  We also denote by $\Gamma = \diag(\gamma_1, \dots, \gamma_p)$ the diagonal
  matrix having the step sizes as its coefficients.

  From \Cref{lemma:expec-first-upper-bound} with $\theta = \theta^k$, $w = w$ and
  $g = g$ as defined above we obtain
  \begin{align}
    \label{eq:lemma-1-eq-applied-for-theta-k}
     & \condexpec{j}{F(\theta^k - \gamma_j g_j e_j) - F(w)}{\theta^k} - \frac{p - 1}{p} (F(\theta^k) - F(w)) \nonumber \\
     & \qquad \le \frac{1}{p} \left(f(\theta^k) - f(w) + \scalar{\nabla f(\theta^k)}{- \Gamma g}
    + \frac{1}{2}\norm{\Gamma g}_{M}^2 + \psi(\theta^k - \Gamma g) - \psi
    (w)\right)\enspace.
  \end{align}
  We can upper bound the right hand term of~\eqref{eq:lemma-1-eq-applied-for-theta-k}
  using the convexity of $f$ and $\psi$:
  \begin{align}
     & f(\theta^k) - f(w) + \scalar{\nabla f(\theta^k)}{- \Gamma g}
    + \frac{1}{2}\norm{\Gamma g}_{M}^2
    + \psi(\theta^k - \Gamma g) - \psi(w)                                                                \\
     & \qquad \le \scalar{\nabla f(\theta^k)}{\theta^k - w} + \scalar{\nabla f(\theta^k)}{- \Gamma g}
    + \frac{1}{2}\norm{\Gamma g}_{M}^2
    + \scalar{\partial \psi(\theta^k - \Gamma g)}{\theta^k - \Gamma g - w}                               \\
     & \qquad = \scalar{\nabla f(\theta^k) + \partial \psi(\theta^k - \Gamma g)}{\theta^k- \Gamma g - w}
    + \frac{1}{2}\norm{\Gamma g}_{M}^2\enspace,
  \end{align}
  where we use the slight abuse of notation $\partial \psi(\theta^k - \Gamma g)$ to
  denote any vector in the subdifferential of $\psi$ at the point $\theta^k -
    \Gamma g$. %
  We now rewrite the dot product:
  \begin{align}
     & \scalar{\nabla f(\theta^k) + \partial \psi(\theta^k - \Gamma g)}{\theta^k- \Gamma g - w}
    + \frac{1}{2}\norm{\Gamma g}_{M}^2                                                             \\
     & \qquad = \scalar{g}{\theta^k- \Gamma g - w}
    + \frac{1}{2}\norm{\Gamma g}_{M}^2
    + \scalar{\nabla f(\theta^k) + \partial \psi(\theta^k - \Gamma g) - g}{\theta^k- \Gamma g - w} \\
     & \qquad = \underbrace{\scalar{g}{\theta^k- w}
    - \norm{g}^2_{\Gamma}
    + \frac{1}{2}\norm{g}_{\Gamma^2M}^2}_{\text{``descent'' term}}
    + \underbrace{\scalar{\nabla f(\theta^k)
        + \partial \psi(\theta^k - \Gamma g) - g}{\theta^k- \Gamma g - w}}_{\text{``noise'' term}}\enspace,
    \label{lemma:descent:decomp-two-terms}
  \end{align}
  where the second equality follows from $\scalar{g}{-\Gamma g} = -\norm{g}_{\Gamma}^2$
  and $\norm{\Gamma g}_M^2 = \norm{g}_{\Gamma^2M}^2$.
  We split \eqref{lemma:descent:decomp-two-terms} into two terms: a ``descent'' term and a ``noise'' term.

  \textbf{Rewriting the ``descent'' term.}
  We first focus on the ``descent'' term.
  As $\gamma_j = \frac{1}{M_j}$ for all $j \in [p]$, it holds that
  $\gamma_j^2 M_j = \gamma_j$ which gives $- \norm{g}^2_{\Gamma} + \frac{1}{2}\norm{g}_{\Gamma^2M}^2
    = - \norm{g}^2_{\Gamma} + \frac{1}{2}\norm{g}_{\Gamma}^2
    = - \frac{1}{2} \norm{g}^2_{\Gamma}$.
  We can now rewrite the ``descent'' term as a difference of
  two norms, materializing the distance to $w$, weighted by the inverse of the
  step sizes $\Gamma^{-1}$:
  \begin{align}
    \text{``descent'' term}
     & = \scalar{g}{\theta^k- w}
    - \frac{1}{2} \norm{g}^2_{\Gamma}                   \\
     & =
    \scalar{\Gamma g}{\theta^k- w}_{\Gamma^{-1}}
    - \frac{1}{2} \norm{\Gamma g}^2_{\Gamma^{-1}}       \\
     & =
    \frac{1}{2}\norm{\theta^k - w}_{\Gamma^{-1}}^2
    - \frac{1}{2}\norm{\theta^k - w}_{\Gamma^{-1}}^2
    + \scalar{\Gamma g}{\theta^k- w}_{\Gamma^{-1}}
    - \frac{1}{2} \norm{\Gamma g}^2_{\Gamma^{-1}}       \\
     & = \frac{1}{2}\norm{\theta^k - w}_{\Gamma^{-1}}^2
    - \frac{1}{2}\norm{\theta^k - \Gamma g - w}_{\Gamma^{-1}}^2\enspace,
    \label{lemma:descent:descent-term}
  \end{align}
  where we factorized the norm to obtain the last inequality.
  We can rewrite \eqref{lemma:descent:descent-term} as an expectation over the
  random choice of the coordinate $j$ (drawn uniformly in $[p]$), given the
  realizations of $\theta^k$ and of the noise $\eta$ (which determines $g$):
  \begin{align}
    \frac{1}{2} \norm{\theta^k - w}_{\Gamma^{-1}}^2 - \frac{1}{2} \norm{\theta^k - \Gamma g - w}_{\Gamma^{-1}}^2
     & = \frac{p}{2} \times \left( \frac{1}{p} \sum_{j=1}^p \gamma_j^{-1} \abs{\theta^k_j - w_j}^2 - \gamma_j^{-1} \abs{\theta_j^k - \gamma_j g_j - w_j}^2 \right) \\
     & = \frac{p}{2} \times \condexpec{j}{\gamma_j^{-1} \abs{\theta^k_j - w_j}^2 - \gamma_j^{-1} \abs{\theta_j^k - \gamma_j g_j - w_j}^2}{\theta^k, \eta}\enspace.
  \end{align}
  Finally, we remark that
  $\gamma_j^{-1} \abs{\theta^k_j - w_j}^2 - \gamma_j^{-1} \abs{\theta_j^k - \gamma_j g_j - w_j}^2
    = \norm{\theta^k - w}_{\Gamma^{-1}}^2 - \norm{\theta^{k} - \gamma_j g_j - w}_{\Gamma^{-1}}^2$,
  as only one coordinate changes between the two vectors, and the squared norm
  $\norm{\cdot}_{\Gamma^{-1}}^2$ is separable.
  We thus obtain
  \begin{align}
    \text{``descent'' term}
     & = \condexpec{j}{ \frac{p}{2} \norm{\theta^k - w}_{\Gamma^{-1}}^2 - \frac{p}{2} \norm{\theta^{k} - \gamma_j g_j - w}_{\Gamma^{-1}}^2}{\theta^k, \eta} \\
     & = \frac{p}{2} \norm{\theta^k - w}_{\Gamma^{-1}}^2 - \frac{p}{2} \condexpec{j}{\norm{\theta^{k+1} - w}_{\Gamma^{-1}}^2}{\theta^k, \eta}\enspace.
  \end{align}
  \textbf{Upper bounding the ``noise'' term.}
  We now upper bound the ``noise'' term in~\eqref{lemma:descent:decomp-two-terms}.
  We first recall the definition of the noisy proximal update $g_j$ (line~\ref{algo-line:coordinate-minimization-update} of \Cref{algo:dp-cd}),
  and define its non-noisy counterpart $\tilde g_j$:
  \begin{align}
    g_j        & = \gamma_j^{-1} \Big(\prox_{\gamma_j \psi_j}(\theta^k_j - \gamma_j (\nabla_j f(\theta^k) + \eta_j)) - \theta^k_j\Big) \\
    \tilde g_j & = \gamma_j^{-1} \Big( \prox_{\gamma_j \psi_j}(\theta^k_j - \gamma_j (\nabla_j f(\theta^k)) - \theta^k_j\Big)\enspace.
  \end{align}
  For an update of the coordinate $j \in [p]$, the optimality condition of the
  proximal operator gives, for $\eta_j$ the realization of the noise drawn
  at the current iteration when coordinate $j$ is chosen:
  \begin{align}
    0
     & \in \theta^{k+1}_j - \theta^k_j + \gamma_j (\nabla_j f(\theta^k) + \eta_j)) + \frac{1}{M_j} \partial \psi_j(\theta_j^k - \gamma_j g_j) \\
     & = \gamma_j \times \left(
    \frac{1}{\gamma_j} (\theta^{k+1}_j - \theta^k_j) + \nabla_j f(\theta^k) + \eta_j + \partial \psi_j(\theta_j^k - \gamma_j g_j) \right)\enspace.
  \end{align}
  As such, there exists a real number
  $v_j \in \partial \psi_j(\theta^k_j - \gamma_j g_j)$ such that
  $g_j = - \frac{1}{\gamma_j} (\theta_j^{k+1} - \theta_j^k) = \nabla_j f(\theta^k) + \eta_j + v_j$.
  We denote by $v \in \RR^p$ the vector having this $v_j$ as $j$-th coordinate.
  Recall that $\psi$ is separable, therefore $v \in \partial \psi(\theta^k - \Gamma g)$.
  The ``noise'' term of \eqref{lemma:descent:decomp-two-terms} can be thus
  be rewritten using $v$:
  \begin{align}
    \text{``noise'' term}
    = \scalar{\nabla f(\theta^k) + v - g}{\theta^k- \Gamma g - w}
    = \scalar{\eta}{\theta^k- \Gamma g - w}\enspace,
  \end{align}
  and we now separate this term in two using $\widetilde g$:
  \begin{align}
    \text{``noise'' term}
    = \sum_{j=1}^p \eta_j (\theta^k_j - \gamma_j g_j - w_j)
    = \sum_{j=1}^p \eta_j (\theta^k_j - \gamma_j \widetilde g_j - w_j)
    + \sum_{j=1}^p \eta_j (\gamma_j \widetilde g_j - \gamma_j g_j)\enspace. \label{lemma:descent:noise-term-two-terms}
  \end{align}
  It is now time to consider the expectation with respect to the noise of
  these terms.
  First, as $\widetilde g_j$ is not dependent on the noise anymore, it simply
  holds that
  \begin{align}
    \bbE_{\eta}\Big[\sum_{j=1}^p \eta_j (\theta^k_j - \gamma_j \widetilde g_j - w_j) \mid \theta^k \Big]
    = \sum_{j=1}^p \expec{\eta}{\eta_j} (\theta^k_j - \gamma_j \widetilde g_j - w_j)
    = 0\enspace. \label{lemma:descent:noise-term-zero-upper-bound}
  \end{align}

  The last step of our proof now takes care of the following term:
  \begin{align}
    \bbE_\eta \Big[ \sum_{j=1}^p \eta_j (\gamma_j \widetilde g_j - \gamma_j g_j) \mid \theta^k \Big]
    \le \bbE_{\eta}\Big[ \gamma_j \Big|\sum_{j=1}^p \eta_j (\widetilde g_j - g_j)\Big| \mid \theta^k \Big]
    \le \sum_{j=1}^p \gamma_j \condexpec{\eta}{~\abs{\eta_j} \abs{\widetilde g_j - g_j}~}{~\theta^k}\enspace,
  \end{align}
  where each inequality comes from the triangle inequality.
  The non-expansiveness property of the proximal operator (see
  \citet{parikh2014Proximal},
  Section 2.3) is now key to our result, as it yields
  \begin{align}
    \abs{\widetilde g_j - g_j}
    = \gamma_j^{-1} \abs{\prox_{\gamma_j \psi_j}(\theta^k_j - \gamma_j (\nabla_j f(\theta^k)))
    - \prox_{\gamma_j \psi_j}(\theta^k_j - \gamma_j (\nabla_j f(\theta^k) + \eta_j)) }
    \le \abs{\eta_j}\enspace,
  \end{align}
  which directly gives, as $\expec{\eta}{\eta_j^2} = \sigma_j^2$
  (and $\norm{\sigma}_\Gamma^2 = \sum_{j=1}^p \gamma_j \sigma_j^2$),
  \begin{align}
    \sum_{j=1}^p \gamma_j \condexpec{\eta}{\abs{\eta_j} \abs{\widetilde g_j - g_j}}{\theta^k}
    \le \sum_{j=1}^p \gamma_j \expec{\eta}{\abs{\eta_j} \abs{\eta_j}}
    = \sum_{j=1}^p \gamma_j \expec{\eta}{\eta_j^2}
    = \norm{\sigma}_\Gamma^2\enspace. \label{lemma:descent:noise-term-upper-bound}
  \end{align}
  We now have everything to prove the lemma by plugging
  \eqref{lemma:descent:noise-term-upper-bound} and \eqref{lemma:descent:noise-term-zero-upper-bound}
  into expected value of \eqref{lemma:descent:noise-term-two-terms}, and then
  \eqref{lemma:descent:noise-term-two-terms} and \eqref{lemma:descent:descent-term}
  back into \eqref{lemma:descent:decomp-two-terms} to obtain, after using
  the Tower property of conditional expectations:
  \begin{align}
     & \frac{1}{p}\condexpec{j,\eta}{f(\theta^k) - f(w) + \scalar{\nabla f(\theta^k)}{- \Gamma g}
      + \frac{1}{2}\norm{\Gamma g}_{M}^2
      + \psi(\theta^k - \Gamma g) - \psi(w)}{\theta^k}                                                                                              \\
     & \qquad \le \frac{1}{p} \left( \text{``descent'' term} + \text{``noise'' term} \right)                                                        \\
     & \qquad \le \frac{1}{2}\norm{\theta^k - w}_{\Gamma^{-1}}^2 - \frac{1}{2}\condexpec{j,\eta}{\norm{\theta^{k+1} - w}_{\Gamma^{-1}}^2}{\theta^k}
    + \frac{1}{p}\norm{\sigma}_\Gamma^2\enspace,
  \end{align}
  which is the result of the lemma.
\end{proof}

\subsubsection{Convergence Lemma}

\Cref{lemma:descent-lemma} allows us to prove a result on the mean of $K$
consecutive noisy coordinate-wise gradient updates, by simply summing it and
rewriting the terms.
This gives \Cref{lemma:dp-cd-convergence-lemma}, which is the key lemma of our
proof.
\begin{lemma}
  \label{lemma:dp-cd-convergence-lemma}
  Assume $\ell(\cdot, d)$ is convex, $L$-component-Lipschitz and
  $M$-component-smooth for all $d \in \cX$, $\psi$ is convex and separable, such
  that $F = f + \psi$ and $w^*$ is a minimizer of $F$.
  For $t \in [T]$, consider the $K$ successive iterates $\theta^1, \dots, \theta^K$
  computed from the inner loop of \Cref{algo:dp-cd} starting from the point $\bar w^t$,
  with step sizes $\gamma_j = \frac{1}{M_j}$ and noise scales
  $\sigma_j$.
  Letting $\bar w^{t+1}= \frac{1}{K} \sum_{k=1}^K \theta^k$,
  it holds that
  \begin{align}
    \expec{}{F(\bar w^{t+1}) - F(w^*)}
     & \le \frac{p(\norm{\bar w^t - w^*}_M^2 + 2(F(\bar w^t) - F(w^*)))}{2K}
    + \norm{\sigma}_{M^{-1}}^2\enspace.
  \end{align}
\end{lemma}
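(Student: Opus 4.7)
The plan is to apply the descent lemma (Lemma 3.4, which the excerpt calls \texttt{lemma:descent-lemma}) at each inner iteration with the specific choice $w = w^*$, take total expectation, sum the $K$ resulting inequalities so that both sides collapse nicely, and finish with Jensen's inequality applied to the average $\bar w^{t+1}$. The specialization $\gamma_j = 1/M_j$ is crucial since it turns $\Gamma^{-1}$ into $M$ and $\|\sigma\|_\Gamma^2$ into $\|\sigma\|_{M^{-1}}^2$, matching the norms appearing in the target bound.

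Concretely, for each $k \in \{0,\dots,K-1\}$ the descent lemma gives, after taking full expectation via the tower property,
\begin{align*}
  \mathbb{E}[F(\theta^{k+1}) - F(w^*)] - \tfrac{p-1}{p} \mathbb{E}[F(\theta^k) - F(w^*)]
  \le \tfrac{1}{2}\mathbb{E}\|\theta^k - w^*\|_M^2 - \tfrac{1}{2}\mathbb{E}\|\theta^{k+1} - w^*\|_M^2 + \tfrac{1}{p}\|\sigma\|_{M^{-1}}^2.
\end{align*}
Writing $a_k := \mathbb{E}[F(\theta^k) - F(w^*)]$, summing from $k=0$ to $K-1$ and re-indexing, the left-hand side telescopes to
$\tfrac{1}{p}\sum_{k=1}^K a_k + \tfrac{p-1}{p}(a_K - a_0)$, while the right-hand side telescopes in the $M$-norm distances to $\tfrac{1}{2}\|\bar w^t - w^*\|_M^2 - \tfrac{1}{2}\mathbb{E}\|\theta^K - w^*\|_M^2 + \tfrac{K}{p}\|\sigma\|_{M^{-1}}^2$, using $\theta^0 = \bar w^t$.

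Next I would discard two non-negative terms: $\tfrac{1}{2}\mathbb{E}\|\theta^K - w^*\|_M^2 \ge 0$ on the right, and $\tfrac{p-1}{p} a_K \ge 0$ on the left (this uses that $w^*$ is a minimizer). Moving $-\tfrac{p-1}{p} a_0$ to the right yields
\begin{align*}
  \tfrac{1}{p}\sum_{k=1}^K a_k \le \tfrac{1}{2}\|\bar w^t - w^*\|_M^2 + \tfrac{p-1}{p}\bigl(F(\bar w^t) - F(w^*)\bigr) + \tfrac{K}{p}\|\sigma\|_{M^{-1}}^2.
\end{align*}
Finally, since $\bar w^{t+1} = \tfrac{1}{K}\sum_{k=1}^K \theta^k$ and $F$ is convex, Jensen's inequality gives $\mathbb{E}[F(\bar w^{t+1}) - F(w^*)] \le \tfrac{1}{K}\sum_{k=1}^K a_k$. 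Multiplying the previous display by $p/K$, and bounding $p-1 \le p$, yields exactly the statement.

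The main obstacle is bookkeeping: the $\tfrac{p-1}{p}$ factor in the descent lemma is what makes the LHS collapse to $\tfrac{1}{p}\sum a_k$ (plus the boundary term $\tfrac{p-1}{p}(a_K - a_0)$) after the index shift, and one must verify that the boundary term is harmless thanks to $a_K \ge 0$. Everything else (telescoping in $M$-norm, choosing $\gamma_j=1/M_j$ so that $\|\sigma\|_\Gamma^2 = \|\sigma\|_{M^{-1}}^2$, and the Jensen step) is routine once this re-indexing is done carefully. No smoothness or Lipschitz estimate is needed beyond what is already encoded in the descent lemma.
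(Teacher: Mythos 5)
Your proposal is correct and follows essentially the same route as the paper's proof: sum the descent lemma with $w=w^*$, handle the index shift on the left-hand side using $\mathbb{E}[F(\theta^K)-F(w^*)]\ge 0$, telescope the $M$-norm distances, and conclude with convexity of $F$ applied to the average $\bar w^{t+1}$. The only cosmetic difference is that you keep the factor $\tfrac{p-1}{p}$ on the $F(\bar w^t)-F(w^*)$ term and bound $p-1\le p$ at the very end, whereas the paper absorbs it immediately; both yield the stated constant.
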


\begin{remark}
  The term $F(\bar w^t) - F(w^*)$ essentially remains in the inequality due to
  the composite nature of $F$.
  When %
  $\psi = 0$,
  $M$-component-smoothness of $f(\cdot; d)$ (for $d \in \cX$) gives
  \begin{align}
    f(\bar w^t)
    \le f(w^*) + \scalar{\nabla f(w^*)}{\bar w^t - w^*} + \frac{1}{2}\norm{\bar w^t - w^*}_M^2
    = f(w^*) + \frac{1}{2}\norm{\bar w^t - w^*}_M^2\enspace,
  \end{align}
  and the result of \Cref{lemma:dp-cd-convergence-lemma} further
  simplifies as:
  \begin{align}
    \expec{}{F(\bar w^{t+1}) - F(w^*)}
     & \le \frac{p\norm{\bar w^t - w^*}_M^2}{K}
    + \norm{\sigma}_{M^{-1}}^2\enspace.
  \end{align}
\end{remark}

\begin{proof}
  Summing \Cref{lemma:descent-lemma} for $k = 0$ to $k = K$ and $w = w^*$,
  taking expectation with respect to all choices of coordinate and random noise
  and using the tower property gives:
  \begin{align}
     & \sum_{k=0}^{K-1} \expec{}{F(\theta^{k+1}) - F(w^*)}
    - \frac{p - 1}{p} \sum_{k=0}^{K-1} \expec{}{(F(\theta^k) - F(w^*))} \nonumber              \\
     & \qquad \le \sum_{k=0}^{K-1} \frac{1}{2} \expec{}{\norm{\theta^k - w^*}_{\Gamma^{-1}}^2}
    - \frac{1}{2} \expec{}{\norm{\theta^{k+1} - w^*}_{\Gamma^{-1}}^2}
    + \frac{1}{p}\norm{\sigma}_\Gamma^2                                                        \\
     & \qquad = \frac{1}{2} \expec{}{\norm{\bar w^0 - w^*}_{\Gamma^{-1}}^2}
    - \frac{1}{2} \expec{}{\norm{\theta^{K} - w^*}_{\Gamma^{-1}}^2}
    + \frac{K}{p}\norm{\sigma}_\Gamma^2\enspace. \label{lemma-eq:dp-cd-cv-convex:telescopic}
  \end{align}

  Remark that $\sum_{k=0}^{K-1} \expec{}{F(\theta^k) - F(w^*)}
    = \sum_{k=1}^{K} \expec{}{F(\theta^k) - F(w^*)}
    + (F(\bar w^0) - F(w^*))
    - \expec{}{F(\theta^K) - F(w^*)}$,
  then as $\expec{}{F(\theta^K) - F(w^*)} \ge 0$, we obtain a lower bound on the
  left hand side of~\eqref{lemma-eq:dp-cd-cv-convex:telescopic}:
  \begin{align}
    \sum_{k=0}^{K-1} \expec{}{F(\theta^{k+1}) - F(w^*)}
    - \tfrac{p - 1}{p} \sum_{k=0}^{K-1} \expec{}{(F(\theta^k) - F(w^*))}
     & \ge \tfrac{1}{p}\sum_{k=1}^{K} \expec{}{F(\theta^{k}) - F(w^*)}
    - (F(\bar w^0) - F(w^*))\enspace.
    \label{lemma-eq:dp-cd-cv-convex:telescopic-lhs}
  \end{align}

  As $\bar w^{t+1} = \frac{1}{K} \sum_{k=1}^K \theta^k$, the convexity of $F$
  gives $F(\bar w^{t+1}) \le \frac{1}{K} \sum_{k=1}^K  F(\theta^{k}) - F(w^*)$.
  Plugging this inequality into
  \eqref{lemma-eq:dp-cd-cv-convex:telescopic-lhs} and combining the result with
  \eqref{lemma-eq:dp-cd-cv-convex:telescopic} gives
  \begin{align}
    F(\bar w^{t+1}) - F(w^*)
     & \le \frac{p(\frac{1}{2}\norm{\bar w^0 - w^*}_{\Gamma^{-1}}^2 + F(\bar w^0) - F(w^*))}{K}
    + \norm{\sigma}_\Gamma^2\enspace.
  \end{align}
  We conclude the proof by using the fact that $\Gamma_j = M_j^{-1}$ for all
  $j \in [p]$,
  thus $\norm{\cdot}_\Gamma = \norm{\cdot}_{M^{-1}}$ and
  $\norm{\cdot}_{\Gamma^{-1}} = \norm{\cdot}_{M}$.
\end{proof}

\subsubsection{Convex Case}

\begin{restate-theorem}{\ref{thm:cd-utility}}[Convex case]
Let $w^*$ be a minimizer of $F$ and $R_M^2 = \max(\norm{\bar w^0 -
    w^*}^2_M, F(\bar w^0) - F(w^*))$.
The output $w^{priv}$ of DP-CD (Algorithm~\ref{algo:dp-cd}), starting from
$\bar w^0 \in \RR^p$ with $T = 1$, $K > 0$ and the $\sigma_j$'s as in
\Cref{thm:dp-cd-privacy}, satisfies:
\begin{align}
  F(w^{priv}) - F(w^*)
   & \le \frac{3p R_M^2}{2K}
  + \frac{12 \norm{L}_{M^{-1}}^2 K\log(1/\delta)}{n^2\epsilon^2}\enspace.
\end{align}

Setting $K = \frac{R_M \sqrt{p} n \epsilon}{\norm{L}_{M^{-1}}\sqrt{8\log(1/\delta)}}$
yields:
\begin{align}
  F(w^{priv}) - F(w^*)
   & \le \frac{9\sqrt{p} \norm{L}_{M^{-1}}R_M \sqrt{\log(1/\delta)}}{n\epsilon}
  = \widetilde O \left( \frac{\sqrt{p} R_M \norm{L}_{M^{-1}}}{n\epsilon} \right)\enspace.
\end{align}
\end{restate-theorem}

\begin{proof}
  In the convex case, we iterate only once in the inner loop (since $T=1$).
  As such, $w^{priv} = \bar w^{1}$, and applying \Cref {lemma:dp-cd-convergence-lemma}
  with $\bar w^{t+1} = \bar w^{1}$, $w^t = \bar w^0$ and $\sigma_j$ chosen as in
  \Cref{thm:dp-cd-privacy} gives the result.
  Taking $K = \frac{R_M \sqrt{p} n \epsilon}{\norm{L}_{M^{-1}}\sqrt{8\log(1/\delta)}}$
  then gives
  \begin{align}
    F(\bar w^{t+1}_1) - F(w^*)
     & \le \frac{2\sqrt{8p\log(1/\delta)} \norm{L}_{M^{-1}} R_M}{n\epsilon}
    + \frac{12\sqrt{p\log(1/\delta)}\norm{L}_{M^{-1}} R_M}{\sqrt{8}n\epsilon}\enspace,
  \end{align}
  and the result follows from $2\sqrt{8} + \frac{12}{\sqrt{8}} \approx 8.48 < 9$.
\end{proof}

\subsubsection{Strongly Convex Case}

\begin{restate-theorem}{\ref{thm:cd-utility}}[Strongly-convex case]
Let $F$ be $\mu_M$-strongly convex w.r.t. $\norm{\cdot}_M$ and $w^*$ be the
minimizer of $F$. The output
$w^{priv}$ of DP-CD (Algorithm~\ref{algo:dp-cd}), starting from
$\bar w^0 \in \RR^p$ with $T > 0$, $K = 2p ( 1 + 1 / \mu_M )$
and the $\sigma_j$'s as in \Cref{thm:dp-cd-privacy}, satisfies:
\begin{align}
  F(w^{priv}) - F(w^*)
   & \le \frac{F(\bar w^0) - F(w^*)}{2^T}
  + \frac{24 p (1 + 1/\mu_M) T \norm{L}_{M^{-1}}^2 \log(1/\delta)}{n^2\epsilon^2}\enspace.
\end{align}

Setting $T = \log_2 \left(\frac{32 n^2\epsilon^2 (F(\bar w^0) - F(w^*))}{p(1+1/\mu_M)\norm{L}_{M^{-1}}^2\log(1/\delta)}\right)$ yields:
\begin{align}
  \expec{}{F(w^{priv}) - F(w^*)}
   & \le \left(1 + \log_2\left(\frac{(F(\bar w^0) - F(w^*)) n^2 \epsilon^2}{24 p (1+1/\mu_M)\norm{L}_{M^{-1}}^2 \log(1/\delta)}\right)\right)
  \frac{24 p (1+1/\mu_M)\norm{L}_{M^{-1}}^2 \log(1/\delta)}{n^2 \epsilon^2}                                                                   \\
   & = O\left(
  \frac{p\norm{L}_{M^{-1}}^2 \log(1/\delta)}{\mu_M n^2 \epsilon^2}
  \log_2\left(\frac{(F(\bar w^0) - F(w^*)) n \epsilon \mu_M}{p\norm{L}_{M^{-1}}\log(1/\delta)}\right)
  \right)\enspace.
\end{align}
\end{restate-theorem}

\begin{proof}
  As $F$ is $\mu_M$-strongly-convex with respect to norm $\norm{\cdot}_M$,
  we obtain for any $w \in \RR^p$, that $F(w) \ge F(w^*) + \frac{\mu_M}{2} \norm{w - w^*}_M^2$.
  Therefore, $F(\bar w^0) - F(w^*) \le \frac{2}{\mu_M} \norm{\bar w^0 - w^*}_M^2$
  and \Cref {lemma:dp-cd-convergence-lemma} gives, for $1 \le t \le T-1$,
  \begin{align}
    F(\bar w^{t+1}) - F(w^*)
     & \le \frac{(1+1/\mu_M) p(F(\bar w^t) - F(w^*))}{K}
    + \norm{\sigma}_M^2\enspace.
  \end{align}

  It remains to set $K = 2 p (1 + 1/\mu_M)$ to obtain
  \begin{align}
    F(\bar w^{t+1}) - F(w^*)
     & \le \frac{F(\bar w^t) - F(w^*)}{2}
    + \norm{\sigma}_M^2\enspace.
  \end{align}

  Recursive application of this inequality gives
  \begin{align}
    \label{lemma:dp-cd-strong-convexity-convergence:eq:conv}
    \expec{}{F(\bar w^T) - F(w^*)}
     & \le \frac{F(\bar w^0) - F(w^*)}{2^T} + \sum_{t=0}^{T-1} \frac{1}{2^t} \norm{\sigma}_M^2
    \le \frac{F(\bar w^0) - F(w^*)}{2^T} + 2 \norm{\sigma}_M^2\enspace,
  \end{align}
  where we upper bound the sum by the value of the complete series.
  It remains to replace $\norm{\sigma}_M^2$ by its value to obtain the result.
  Taking
  $T = \log_2\left(\frac{(F(\bar w^0) - F(w^*)) n^2 \epsilon^2}{24 p (1+1/\mu_M)\norm{L}_{M^{-1}}^2 \log(1/\delta)}\right)$
  then gives
  \begin{align}
    \expec{}{F(\bar w^T) - F(w^*)}
     & \le \left(1 + \log_2\left(\frac{(F(\bar w^0) - F(w^*)) n^2 \epsilon^2}{24 p (1+1/\mu_M)\norm{L}_{M^{-1}}^2 \log(1/\delta)}\right)\right)
    \frac{24 p (1+1/\mu_M)\norm{L}_{M^{-1}}^2 \log(1/\delta)}{n^2 \epsilon^2}                                                                   \\
     & = O\left(
    \frac{p\norm{L}_{M^{-1}}^2 \log(1/\delta)}{\mu_M n^2 \epsilon^2}
    \log_2\left(\frac{(F(\bar w^0) - F(w^*)) n \epsilon \mu_M}{p\norm{L}_{M^{-1}}\log(1/\delta)}\right)
    \right)\enspace,
  \end{align}
  which is the result of our theorem.
\end{proof}

\subsection{Proof of Remark 1}
\label{sec:proof-remark-1}

We recall the notations of \citet{tappenden2016Inexact}.
For $\theta \in \RR^p$, $t \in \RR$ and $j \in [p]$, let
$V_j(\theta, t) = \nabla_j(\theta) t + \frac{M_j}{2} \abs{t}^2 + \psi_j(\theta^k_j + t)$.
For $\eta \in \RR$, we also define its noisy counterpart,
$V_j^{\eta}(\theta, t) = (\nabla_j(\theta) + \eta) t + \frac{M_j}{2} \abs{t}^2 + \psi_j(\theta^k_j + t)$.
We aim at finding $\delta_j$ such that for any $\theta^k \in \RR^p$ used in the
inner loop of \Cref{algo:dp-cd}:
\begin{align}
  \expec{\eta_j}{V_j(\theta^k, -\gamma_j g_j)}
  \le \min_{\widetilde g \in \RR} V_j(\theta^k, -\gamma_j \widetilde g) + \delta_j\enspace,
\end{align}
where the expectation is taken over the random noise $\eta_j$, and
$- \gamma_j g_j = \prox_{\gamma_j \psi_j}(\theta^k_j - \gamma_j (\nabla_j f(\theta^k) + \eta_j)) - \theta^k_j$
as defined in the analysis of \Cref{algo:dp-cd}.
We need to link the proximal operator we use in DP-CD with the quantity
$V_j^{\eta_j}$ that we just defined:
\begin{align}
  \prox_{\gamma_j \psi_j} ( \theta^k_j - \gamma_j (\nabla_j f(\theta^k) + \eta_j) )
   & = \argmin_{v \in \RR}  \frac{1}{2} \norm{v - \theta^k_j + \gamma_j (\nabla_j f(\theta^k) + \eta_j)}_2^2                                                 \\
   & = \argmin_{v \in \RR}  \scalar{\gamma_j (\nabla_j f(\theta^k_j) + \eta_j)}{v - \theta^k_j} + \frac{1}{2} \norm{v - \theta^k_j}_2^2 + \gamma_j \psi_j(v) \\
   & = \argmin_{v \in \RR}  \scalar{\nabla_j f(\theta^k) + \eta_j}{v - \theta^k_j} + \frac{M_j}{2} \norm{v - \theta^k_j}_2^2 + \psi_j(v)                     \\
   & = \theta^k_j + \argmin_{t \in \RR}  \scalar{\nabla_j f(\theta^k) + \eta_j}{t} + \frac{M_j}{2} \norm{t}_2^2 + \psi_j(\theta^k_j + t)\enspace.
\end{align}
Which means that
$-\gamma_j g_j
  =\prox_{\gamma_j \psi_j} ( \theta^k_j - \gamma_j (\nabla_j f(\theta^k) + \eta_j) ) -  \theta^k_j
  \in \argmin_{t\in\RR} V_j^{\eta_j}(\theta^k,t)$.
Let $-\gamma_j g_j^* = \prox_{\gamma_j \psi_j}(\theta^k_j - \gamma_j \nabla_j(\theta^k)) - \theta^k_j$
be the non-noisy counterpart of $-\gamma_j g_j$.
Since $-\gamma_j g_j$ is a minimizer of $V_j^{\eta_j}(\theta^k, \cdot)$, it holds that
\begin{align}
  V_j^{\eta_j}(\theta^k, -\gamma_j g_j)
   & \le \scalar{\nabla_j f(\theta^k) + \eta_j}{-\gamma_j g_j^*} + \frac{M_j}{2} \norm{-\gamma_j g_j^*}_2^2 + \psi_j(\theta^k_j + -\gamma_j g_j^*) \\
   & = \min_t V_j(\theta^k, t) + \scalar{\eta_j}{-\gamma_j g_j^*}\enspace,
\end{align}
which can be rewritten as $V_j(\theta^k, -\gamma_j g_j) \le \min_t V_j(\theta^k, t) + \scalar{\eta_j}{\gamma_j (g_j - g_j^*)}$.
Taking the expectation yields
\begin{align}
  \expec{\eta_j}{V_j(\theta^k, -\gamma_j g_j)} \le \min_t V_j(\theta^k, t) + \expec{\eta_j}{\scalar{\eta_j}{\gamma_j (g_j - g_j^*)}}\enspace.
\end{align}
Finally, we remark that $\abs{g_j - g_j^*} \le \abs{\gamma_j \eta_j}$ and the
non-expansiveness of the proximal operator gives
\begin{align}
  \expec{\eta_j}{V_j(\theta^k, -\gamma_j g_j)} \le \min_t V_j(\theta^k, t) + \gamma_j \sigma_j^2\enspace,
\end{align}
which implies an upper bound on the expectation of $\delta_j$:
$\expec{j,\eta_j}{\delta_j}
  = \frac{1}{p} \sum_{j=1}^p \expec{\eta_j}{\delta_j}
  \le \frac{1}{p} \sum_{j=1}^p \gamma_j \sigma_j^2 = \frac{1}{p} \sum_{j=1}^p \sigma_j^2 / M_j$,
when $\gamma_j = 1/M_j$.
In the formalism of \citet{tappenden2016Inexact},
this amounts to setting $\alpha = 0$ and $\beta = \frac{1}{p} \norm{\sigma}_{M^{-1}}^2$.

\paragraph{Convex functions.}

When the objective function $F$ is convex, we use \Cref{lemma:dp-cd-convergence-lemma}
to obtain, since $\norm{\sigma}_{M^{-1}}^2 = \beta p$,
\begin{align}
  F(w^{1}) - F(w^*) \le \frac{2pR_M^2}{K} + \norm{\sigma}_{M^{-1}}^2
  = \frac{2pR_M^2}{K} + \beta p\enspace.
\end{align}
Therefore, when $F$ is convex, we get $F(w^1) - F(w^*) \le \xi$, for $\xi > \beta p$,
as long as $\frac{2pR_M^2}{K} \le \xi - \beta p$, that is $K \ge \frac{2pR_M^2}{\xi - \beta p}$.

In comparison, \citet[Theorem 5.1 therein]{tappenden2016Inexact} gives
convergence
to $\xi > \sqrt{2pR_M^2 \beta}$ when $K \ge \frac{2pR_M^2}{\xi - \sqrt{2pR_M^2\beta} }$.
We thus gain a factor $\sqrt{\beta p / 2R_M^2}$ in utility.
Importantly, our utility upper bound does not depend on initialization in that
setting, whereas the one of \citet{tappenden2016Inexact} does.

\paragraph{Strongly-convex functions.}

When the objective function $F$ is $\mu_M$-strongly-convex \wrt to $\norm{\cdot}_M$, then
from~\eqref{lemma:dp-cd-strong-convexity-convergence:eq:conv} we obtain, as long
as $K \ge 4/\mu_M$, that
\begin{align}
  \expec{}{F(w^T) - F(w^*)} \le \frac{F(w^0) - F(w^*)}{2^T} + 2 \beta p\enspace.
\end{align}
This proves that $\expec{}{F(w^T) - F(w^*)} \le \xi$ for $\xi > 2\beta p$ when
$\frac{F(w^0) - F(w^*)}{2^T} \le \xi - 2 \beta p$ that is
$T \ge \log\frac{F(w^0) - F(w^*)}{\xi - 2\beta p}$ and
$TK \ge \frac{4p}{\mu_M}\log\frac{F(w^0) - F(w^*)}{\xi - 2\beta p}$.
In comparison, \citet[Theorem 5.2 therein]{tappenden2016Inexact} shows
convergence
to $\xi > \frac{\beta p}{\mu_M}$ for
$K \ge \frac{p}{\mu_M} \log \frac{F(w^0) - F(w^*) - \frac{\beta p}{\mu_M}}{\xi - \frac{\beta p}{\mu_M} }$.
We thus gain a factor $\mu_M/2$ in utility.

\section{Comparison with DP-SGD}
\label{sec-app:comparison-with-dp}

In this section, we provide more details on the arguments of
\Cref{sec:comparison-with-dp-sgd}, where we suppose that $\ell$ is $L$-component-Lipschitz
and $\Lambda$-Lipschitz.
To ease the comparison, we assume that $R_M = \norm{w^0 - w^*}_M$, which is
notably the case in the smooth setting with $\psi = 0$
(see Remark~\ref{rmq:expec-first-upper-bound-without-non-smooth}).

\paragraph{Balanced.}
We start by the scenario where coordinate-wise smoothness constants are balanced
and all equal to $M = M_1 = \cdots = M_p$.
We observe that
\begin{align}
  \norm{L}_{M^{-1}}
  = \sqrt{\sum_{j=1}^p \frac{1}{M_j} L_j^2}
  = \sqrt{\frac{1}{M} \sum_{j=1}^p L_j^2}
  = \frac{1}{\sqrt{M}} \norm{L}_2\enspace.
\end{align}
We then consider the convex and strongly-convex functions separately:
\begin{itemize}
  \item \textit{Convex functions:}
        it holds that $R_M = \sqrt{M} R_I$, which yields the equality
        $\norm{L}_{M^{-1}} R_{M} = \norm{L}_2 R_I$.
  \item \textit{Strongly convex functions:}
        if $f$ is $\mu_M$-strongly-convex with respect to $\norm{\cdot}_M$, then for
        any $x, y \in \RR^p$,
        \begin{align}
          f(y)
          \ge f(x) + \scalar{\nabla f(x)}{y - x} + \frac{\mu_M}{2} \norm{y - x}_M^2
          = f(x) + \scalar{\nabla f(x)}{y - x} + \frac{M\mu_M}{2} \norm{y - x}^2_2\enspace,
        \end{align}
        which means that $f$ is $M\mu_M$-strongly-convex with respect to $\norm{\cdot}_2$.
        This gives
        $\frac{\norm{L}_{M^{-1}}^2}{\mu_M}
          = \frac{\norm{L}_2^2 / M}{\mu_I / M}
          = \frac{\norm{L}_2^2}{\mu_I}$.
\end{itemize}

In light of the results summarized in \Cref{table:utility-cd-sgd}, it remains to
compare $\norm{L}_2 = \sqrt{\sum_{j=1}^p L_j^2}$ with $\Lambda$, for which it
holds that $\Lambda \le \sqrt{\sum_{j=1}^p L_j^2} \le \sqrt{p}\Lambda$, which is
our result.

\paragraph{Unbalanced.}
When smoothness constants are disparate, we discuss the case where
\begin{itemize}
  \item \textit{one coordinate of the gradient dominates the others:}
        we assume without loss of generality that the dominating coordinate is the first one.
        It holds that $M_1 =: M_{\max} \gg M_{\min} =: M_j$, for all $j \neq 1$ and
        $L_1 =: L_{\max} \gg L_{\min} =: L_j$, for all $j \neq 1$ such that
        $\frac{L_1^2}{M_1} \gg \sum_{j\neq 1} \frac{L_j^2}{M_j}$.
        As $L_1$ dominates the other component-Lipschitz constants, most of the variation of the loss comes from its first coordinate.
        This implies that $L_1$ is close to the global Lipschitz constant $\Lambda$ of $\ell$.
        As such, it holds that
        \begin{align}
          \norm{L}_{M^{-1}}^2
          =
          \sum_{j=1}^p \frac{L_j^2}{M_j}
          \approx \frac{L_1^2}{M_1}
          \approx \frac{\Lambda^2}{M_{\max}}\enspace.
        \end{align}
  \item \textit{the first coordinate of $\bar w^0$ is already very close to
          its optimal value}
        so that $M_1\abs{\bar w^0_1 - w^*_1} \ll \sum_{j \neq 1} M_j \abs{\bar w^0_j - w^*_j}$.
        Under this hypothesis,
        \begin{align}
          R_M^2
          \approx \sum_{j\neq 1}M_j\abs{w_j^0 - w_j^*}^2
          = M_{\min} \sum_{j\neq 1}\abs{w_j^0 - w_j^*}^2
          \approx M_{\min} R_I^2\enspace.
        \end{align}
\end{itemize}
We can now easily compare DP-CD with DP-SGD in this scenario.
First, if $\ell$ is convex, then
$\norm{L}_{M^{-1}} R_M \approx \sqrt{\frac{M_{\min}}{M_{\max}}} \Lambda R_I$.
Second, when $\ell$ is strongly-convex, we observe that for $x, y \in \RR^p$,
\begin{align}
  f(y)
  \ge f(x) + \scalar{\nabla f(x)}{y - x} + \frac{\mu_M}{2} \norm{y - x}_M^2
  \ge f(x) + \scalar{\nabla f(x)}{y - x} + \frac{M_{\min} \mu_M}{2} \norm{y - x}_2^2\enspace,
\end{align}
which implies that when $f$ is $\mu_M$ strongly-convex with respect to
$\norm{\cdot}_M$, it is $M_{\min}\mu_M$ strongly-convex with respect
to $\norm{\cdot}_2$.  This yields, under our hypotheses,
$\frac{\norm{L}_{M^{-1}}^2}{\mu_M} \approx \frac{\Lambda^2 /
  M_{\max}}{\mu_I / M_{\min}} = \frac{M_{\min}}{M_{\max}}
\frac{\Lambda^2}{\mu_I}$.
In both cases, DP-CD can get arbitrarily better than DP-SGD, and gets
better as the ratio ${M_{\max}}/{M_{\min}}$ increases.

The two hypotheses we describe above are of course very restrictive.
However, it gives some insight about when and why DP-CD can outperform
DP-SGD.
Our numerical experiments in \Cref{sec:numerical-experiments} confirm this
analysis, even in less favorable
cases.

\section{Proof of Lower Bounds}
\label{sec:utility-lower-bounds-1}

To prove lower bounds on the utility of $L$-component-Lipschitz functions, we
extend the proof of \citet{bassily2014Private} to our setting
(that is, $L$-component-Lipschitz functions and unconstrained
composite optimization).
There are three main difficulties in adapting their proof:
\begin{itemize}
  \item First, the optimization problem \eqref{eq:dp-erm} is not constrained.
        We stress that while convex constraints can be enforced using the
        regularizer $\psi$ (using the characteristic function of a convex set),
        its separable nature only allows box constraints. In contrast,
        \citet{bassily2014Private} rely on an $\ell_2$-norm constraint to obtain
        their lower bounds.
  \item Second, Lemma 5.1 of \citet{bassily2014Private} must be extended to
        our $L$-component-Lipschitz setting.
        To do so, we consider datasets with points in $\prod_{j=1}^p \{-L_j, L_j\}$
        rather than $\{-1/\sqrt{p}, 1/\sqrt{p}\}^p$, and carefully adapt the
        construction of
        the dataset $D$ so that $\norm{\sum_{i=1}^n d_i}_2 = \Omega(\min(n\norm{L}_2, {\sqrt{p} \norm{L}_2}/{\epsilon}))$,
        which is essential to prove our lower bounds.
  \item Third, the lower bounds of \citet{bassily2014Private} rely on fingerprinting
        codes, and in particular on the result of \citet{bun2014Fingerprinting} which
        uses such codes to prove that (when $n$ is smaller than
        some $n^*$ we describe later) differential privacy is incompatible with
        precisely and simultaneously estimating \emph{all} $p$ counting
        queries defined over the columns of the dataset $D$.
        In our construction, since all columns of $D$ now have different scales, we
        need an additional
        hypothesis on the repartition of the $L_j$'s, (\ie that
        $\sum_{j\in\cJ} L_j^2 = \Omega(\norm{L}_2)$ for all $\cJ \subseteq [p]$ of a
        given size), which is not
        required in existing lower bounds (where all columns have equal scale).
\end{itemize}

\subsection{Counting Queries and Accuracy}
\label{sec:accur-re-ident}

We start our proof by recalling and extending to our setting the notions of
counting queries (\Cref{def:counting-query}) and accuracy (\Cref{def:accuracy}),
as described by \citet{bun2014Fingerprinting}.
The main feature of our definitions is that we allow the set $\cX$ to have
different scales for each of its coordinates, and that we account for this scale
in the definition of accuracy.
We denote by $\conv(\cX)$ the convex hull of a set $\cX$.

\begin{definition}[Counting query]
  \label{def:counting-query}
  Let $n > 0$.
  A counting query on $\cX$ is a function $q : \cX^n \rightarrow \conv(\cX)$ defined
  using a predicate   $q : \cX \rightarrow \cX$.
  The evaluation of the query $q$ over a dataset $\cD \in \cX^n$ is defined as
  the arithmetic mean of $q$ on $\cD$:
  \begin{align}
    q(\cD) = \frac{1}{n} \sum_{i=1}^n q(d_i)\enspace.
  \end{align}
\end{definition}

\begin{definition}[Accuracy]
  \label{def:accuracy}
  Let $n, p \in \NN$, $\alpha, \beta \in [0, 1]$, $L_1, \dots, L_p > 0$,
  and $\cX = \prod_{j=1}^p \{-L_j; L_j\}$ or $\cX = \{0, L_j\}^p$.
  Let $\cQ = \{q_1, \dots, q_p\}$ be a set of $p$ counting queries on $\cX$ and
  $D \in \cX^n$ a dataset of $n$ elements.
  A sequence of answers $a = (a_1, \dots, a_p)$ is said $(\alpha, \beta)$-accurate
  for $\cQ$ if $\abs{q_j(D) - a_j} \le L_j\alpha$ for at least a
  $1 - \beta$ fraction of indices $j \in [p]$.
  A randomized algorithm $\cA : \cX^n \rightarrow \RR^{\card{\cQ}}$ is said
  $(\alpha, \beta)$-accurate for $\cQ$ on $\cX$ if for every $D \in \cX^n$,
  \begin{align}
    \prob{\cA(D) \text{ is } (\alpha, \beta)\text{-accurate for } \cQ} \ge 2/3\enspace.
  \end{align}
\end{definition}

In our proof, we will use a specific class of queries: one-way marginals
(\Cref{def:one-way-marginals}), that compute the arithmetic mean of a dataset
along one of its column.
\begin{definition}[One-way marginals]
  \label{def:one-way-marginals}
  Let $\cX = \prod_{j=1}^p \{-L_j; L_j\}$ or $\cX = \{0, L_j\}^p$.
  The family of one-way marginals on $\cX$ is defined by queries with predicates
  $q_j(x) = x_j$ for $x \in \cX$.
  For a dataset $D \in \cX^n$ of size $n$, we thus have
  $q_j(D) = \frac{1}{n} \sum_{i=1}^n d_{i,j}$.
\end{definition}

\subsection{Lower Bound for One-Way Marginals}
\label{sec:lower-bounds-1-1}

We can now restate a key result from \citet{bun2014Fingerprinting}, which shows
that there exists a minimal number $n^*$ of records needed in a dataset to allow
achieving both accuracy and privacy on the estimation of one-way marginals on
$\cX=(\{0, 1\}^p)^n$.
This lemma relies on the construction of re-identifiable distribution
(see \citealt[Definition~2.10]{bun2014Fingerprinting}). One can then use this
distribution to find a dataset on which a private algorithm can not be accurate
(see \citealt[Lemma~2.11]{bun2014Fingerprinting}).
\begin{lemma}[{\citealt[Corollary~3.6]{bun2014Fingerprinting}}]
  \label{lemma:bun-fingerprint-1way-marginal}
  For $\epsilon > 0$ and $p > 0$, there exists a number $n^* = \Omega(\frac{\sqrt{p}}{\epsilon})$
  such that for all $n \le n^*$, there exists no algorithm that is both
  $(1/3, 1/75)$-accurate and $(\epsilon, o\left(\frac{1}{n}\right))$-differentially
  private for the estimation of one-way marginals on $(\{0, 1\}^p)^n$.
\end{lemma}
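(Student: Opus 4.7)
The plan is to prove this by reducing from the existence of a Tardos-style fingerprinting code, following the template of \citet{bun2014Fingerprinting}. A fingerprinting code with $m$ users and length $\ell$ is a distribution over codebooks $D \in \{0,1\}^{m \times \ell}$ together with a tracing procedure $\operatorname{Trace}\colon\{0,1\}^{m\times\ell} \times [0,1]^\ell \to [m] \cup \{\bot\}$ satisfying: (i) \emph{completeness}: whenever a ``pirate'' codeword $c \in [0,1]^\ell$ approximates the column averages of $D$ within $1/3$ on at least a $74/75$ fraction of coordinates, $\prob{\operatorname{Trace}(D,c) \in [m]} \ge \alpha$ for some absolute constant $\alpha > 0$; (ii) \emph{soundness}: for every fixed $i \in [m]$ and every $c$ whose distribution is independent of row $i$ of $D$, $\prob{\operatorname{Trace}(D,c)=i} \le 1/(100m)$. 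Tardos's construction, with the refinements used in \citet{bun2014Fingerprinting}, achieves these guarantees with $\ell = \tilde O(m^2)$, i.e.\ $m = \tilde\Omega(\sqrt{\ell})$. I would invoke this as a black box.

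Given such a code, suppose for contradiction that some $(\epsilon,\delta)$-DP algorithm $\cA$ is $(1/3,1/75)$-accurate for the $p$ one-way marginals on $(\{0,1\}^p)^n$. I would first handle the case $\epsilon = O(1)$: set $\ell = p$ and $m = n$, draw $D$ from the code, and treat $\cA(D)$ as the pirate codeword. Accuracy together with completeness yields $\sum_{i=1}^{n} \prob{\operatorname{Trace}(D,\cA(D))=i} \ge \alpha'$ for an absolute constant $\alpha' > 0$ (after absorbing the $2/3$ accuracy-success probability). For an upper bound on each term, let $D^{(-i)}$ denote $D$ with row $i$ resampled independently; since $\cA(D^{(-i)})$ is then independent of row $i$ of $D$, soundness gives $\prob{\operatorname{Trace}(D,\cA(D^{(-i)}))=i} \le 1/(100n)$, and applying $(\epsilon,\delta)$-DP to the post-processing $\operatorname{Trace}(D,\cdot)\circ\cA$ delivers $\prob{\operatorname{Trace}(D,\cA(D))=i} \le e^{\epsilon}/(100n) + \delta$. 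Summing over $i \in [n]$ gives $\alpha' \le e^{\epsilon}/100 + n\delta$, which is a contradiction whenever $\epsilon = O(1)$ and $\delta = o(1/n)$; hence no such $\cA$ exists whenever $n \le \tilde\Omega(\sqrt{p})$.

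To promote this to the sharper $n^* = \Omega(\sqrt{p}/\epsilon)$ bound, I would amplify via group privacy. Instantiate the code with $m = \lfloor n\epsilon \rfloor$ users and duplicate each of its rows $k := \lceil 1/\epsilon \rceil$ times, producing an $n$-row dataset whose column averages coincide with those of the underlying code's matrix. Viewing a ``user'' as a group of $k$ rows, the hypothesized $(\epsilon,\delta)$-DP algorithm is $(k\epsilon, k e^{k\epsilon}\delta) = (O(1), O(\delta/\epsilon))$-DP at the user level, and its outputs are still $(1/3,1/75)$-accurate for the user-level marginals. Applying the case-$\epsilon=O(1)$ contradiction above at the user level forces $m = \tilde\Omega(\sqrt{p})$, i.e.\ $n = \tilde\Omega(\sqrt{p}/\epsilon)$, which is the threshold $n^*$ claimed. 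The principal obstacle is the black-box invocation of the Tardos construction itself, whose tracing-score analysis (scoring each user by the correlation of its row with $c$, under biases drawn from a carefully tuned Beta distribution) is the true combinatorial core of the result; the DP reduction stacked on top is considerably lighter. A secondary subtlety is that the constants in completeness and soundness must survive the $e^{\epsilon}$ multiplicative slack and the additive $n\delta$ slack, which is precisely why the hypothesis $\delta = o(1/n)$ appears in the statement.
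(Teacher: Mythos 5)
Note first that the paper does not actually prove this lemma: it is imported verbatim as Corollary~3.6 of \citet{bun2014Fingerprinting}, and the paper's own contribution to the lower bound only begins afterwards (the rescaling/augmentation map $\chi_L$ and \Cref{thm:lower-bound-one-way-marginals}). So there is no in-paper argument to compare you against; what you have written is a reconstruction of the proof of the cited result. As such a reconstruction, your sketch follows the genuine route of \citet{bun2014Fingerprinting}: a fingerprinting code with completeness and soundness, the tracing-plus-post-processing contradiction (resample row $i$, apply $(\epsilon,\delta)$-DP to bound $\prob{\operatorname{Trace}(D,\cA(D))=i}$ by $e^{\epsilon}/(100n)+\delta$, sum over $i$), and group privacy with row duplication to convert the constant-$\epsilon$ bound $n=\widetilde\Omega(\sqrt p)$ into the threshold $n^*=\Omega(\sqrt p/\epsilon)$; the bookkeeping $m\delta_{\mathrm{user}} = O(n\delta) = o(1)$ is exactly where the hypothesis $\delta=o(1/n)$ enters, as you say. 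The one point worth stating more carefully is your completeness assumption: a plain Tardos code only guarantees tracing under the marking condition, not against a pirate that merely approximates the column averages within $1/3$ on a $74/75$ fraction of coordinates. Tolerating a constant fraction of arbitrarily wrong columns is precisely the ``error-robust'' fingerprinting code that \citet{bun2014Fingerprinting} construct (and it is their main technical contribution, with length still $\widetilde O(m^2)$), so it is not a cosmetic refinement of Tardos; since you explicitly flag this black box as the combinatorial core, the sketch is honest about where the real work lies. Two minor remarks: the constants ($\alpha$, $1/(100m)$, the $2/3$ success probability) must be chosen consistently so the final inequality $\alpha' \le e^{\epsilon}/100 + n\delta$ is indeed violated, and the resulting threshold carries logarithmic factors from the code length, so strictly one obtains $n^*=\widetilde\Omega(\sqrt p/\epsilon)$, matching how the bound is used downstream in the paper.
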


To leverage this result in our setting of private empirical risk minimization,
we start by extending it to queries on $\cX = \prod_{j=1}^p \{-L_j; L_j\}$.
Before stating the main theorem of this section (\Cref{thm:lower-bound-one-way-marginals}),
we describe a procedure $\chi_L : (\{0, 1\}^p)^n \rightarrow \cX^{3n}$
(with $L_1, \dots, L_p > 0$), that takes as input a dataset $D \in (\{0,
  1\}^p)^n$ and outputs an augmented and rescaled version. This procedure
is crucial to our proof and is defined as follows.
First, it adds $2n$ rows filled with $1$'s to $D$, which ensures that the sum
of each column of $D$ is $\Theta(n)$ (which gives the lower bound on $M$ in
\Cref{thm:lower-bound-one-way-marginals}).
Then it rescales each of these columns by subtracting $1/2$ to each coefficient
and multiplying the $j$-th column of $D$ ($j \in [p]$) by $2L_j$.
The resulting dataset $D_L^{aug} = \chi_L(D)$ is a set of $3n$ points with values
in $\cX = \prod_{j=1}^p \{-L_j, L_j\}$, with the property that, for all
$j \in [p]$, $3nL_j \ge \sum_{i=1}^n (D_L^{aug})_{i,j} \ge nL_j$.
For $D \in (\{0, 1\}^p)^n$, we show how to reconstruct $q_j(\chi_L(D))$
from $q_j(D)$
in \Cref{claim:qj-qjaug}.

\begin{claim}
  \label{claim:qj-qjaug}
  Let $n \in \NN$, $j \in [p]$, $L_j > 0$ and $q_j$ the $j$-th one-way marginal
  on datasets $D$ with $p$ columns such that for $d_i \in D$, $q_j(d_i) = d_{i,j}$.
  Let $D_L^{aug} = \chi_L(D)$.
  It holds that
  \begin{align}
    q_j(D_L^{aug}) = \frac{2L_j}{3} q_j(D) + \frac{L_j}{3}\enspace,
  \end{align}
  where we use the slight abuse of notation by denoting the one-way marginals
  $q_j : \cX^{3n} \rightarrow \conv(\cX)$
  and $q_j : (\{0, 1\}^p)^n \rightarrow [0, 1]^p$ in the same way.
\end{claim}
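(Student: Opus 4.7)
The plan is to verify the identity by directly unrolling the definition of $\chi_L$ column by column, which is a routine algebraic computation since the transformation is affine and applied coordinate-wise on the $j$-th column.

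First I would fix $j\in[p]$ and describe the $j$-th column of $D_L^{aug}$ explicitly. By construction, $\chi_L$ starts from $D\in(\{0,1\}^p)^n$, appends $2n$ all-ones rows (so the preliminary $j$-th column reads $d_{1,j},\dots,d_{n,j},1,\dots,1$ with $2n$ trailing ones), subtracts $1/2$ from every entry, and then multiplies the $j$-th column by $2L_j$. Thus the $j$-th column of $D_L^{aug}$ consists of $2L_j(d_{i,j}-1/2)$ for $i=1,\dots,n$, followed by $2n$ entries each equal to $2L_j(1-1/2)=L_j$.

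Next, I would apply the definition $q_j(D_L^{aug})=\tfrac{1}{3n}\sum_{i=1}^{3n}(D_L^{aug})_{i,j}$ and compute
\begin{align*}
q_j(D_L^{aug}) &= \frac{1}{3n}\Bigl(\sum_{i=1}^n 2L_j(d_{i,j}-\tfrac12) + 2n\cdot L_j\Bigr) \\
&= \frac{2L_j}{3n}\sum_{i=1}^n d_{i,j} - \frac{L_j}{3} + \frac{2L_j}{3} \\
&= \frac{2L_j}{3}\,q_j(D) + \frac{L_j}{3},
\end{align*}
where the last step uses $q_j(D)=\tfrac1n\sum_{i=1}^n d_{i,j}$. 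This yields the claim.

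There is no real obstacle here: the entire content of the claim is a bookkeeping check that the affine rescaling commutes with averaging, and the only mild care required is to remember that the denominator in $q_j(D_L^{aug})$ is $3n$ (not $n$), so that the contribution of the $2n$ appended rows produces exactly the additive $L_j/3$ term.
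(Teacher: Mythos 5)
Your proposal is correct and follows essentially the same argument as the paper: both simply unroll the definition of $\chi_L$ and average over the $3n$ rows, the only cosmetic difference being that the paper splits the computation into two steps (first the augmentation in $\{0,1\}$, then the affine rescaling by $2L_j(\cdot-1/2)$) whereas you combine them into a single sum. Your arithmetic, including the $3n$ denominator and the $+L_j/3$ contribution from the $2n$ appended rows, matches the paper's result.
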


\begin{proof}
  Let $D \in (\{0, 1\}^p)^n$, and let $D^{aug} \in (\{0, 1\}^p)^{3n}$
  constructed by
  adding $2n$ rows of $1$'s at the end of $D$. Let $D_L^{aug} = \chi_L(D)$.
  We remark that
  \begin{align}
    \label{claim:qj-qjaug-eq1}
    q_j(D^{aug})
    = \frac{1}{3n} \sum_{i=1}^{3n} D^{aug}_{i,j}
    = \frac{1}{3} \left(\frac{1}{n} \sum_{i=1}^n D^{aug}_{i,j} \right)
    + \frac{1}{3n} \sum_{i=n+1}^{3n} 1
    = \frac{1}{3} q_j(D) + \frac{2}{3} \in [0, 1]\enspace.
  \end{align}
  Then, we link $q_j(D^{aug})$ with $q_j(D^{aug}_L)$:
  \begin{align}
    \label{claim:qj-qjaug-eq2}
    q_j(D^{aug}_L)
    = \frac{1}{3n} \sum_{i=1}^{3n} (D_L^{aug})_{i,j}
    = \frac{1}{3n} \sum_{i=1}^{3n} 2L_j((D^{aug})_{i,j} - 1/2)
    = 2L_j (q_j(D^{aug}) - 1/2) \in [-L_j, L_j]\enspace,
  \end{align}
  combining \eqref{claim:qj-qjaug-eq1} and \eqref{claim:qj-qjaug-eq2} gives the
  result.
\end{proof}

\paul{adapt def of counting queries}
\begin{theorem}
  \label{thm:lower-bound-one-way-marginals}
  Let $n, p \in \NN$, and $L_1, \dots, L_p > 0$.
  Assume that for all subsets $\cJ \subseteq [p]$ of size at least $\lceil \frac{p}{75} \rceil$,
  $\sqrt{\sum_{j\in\cJ} L_j^2} = \Omega(\norm{L}_2)$.
  Define $\cX = \prod_{j=1}^p \{-L_j; +L_j\}$, and let $q_j : \cX \rightarrow \{-L_j, L_j\}$
  be the predicate of the $j$-th one-way marginal on $\cX$.
  Take $\epsilon > 0$ and $\delta = o(\frac{1}{n})$.
  There exists a number $M = \Omega\left(\min\left(n \norm{L}_2, \frac{\sqrt{p} \norm{L}_2}{\epsilon}\right)\right)$
  such that for every $(\epsilon,\delta)$-differentially private algorithm $\cA$,
  there exists a dataset $D = \{d_1, \dots, d_n\} \in \cX^n$ with $\norm{\sum_{i=1}^n d_i}_2 \in [M-1, M+1]$
  such that, with probability at least $1/3$ over the randomness of $\cA$:
  \begin{align}
    \norm{\cA(D) - q(D)}_2 = \Omega\left(\min\left(\norm{L}_2, \frac{\sqrt{p} \norm{L}_2}{n \epsilon}\right)\right)\enspace.
  \end{align}
\end{theorem}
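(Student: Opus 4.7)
The plan is a reduction to the fingerprinting-code lower bound of \Cref{lemma:bun-fingerprint-1way-marginal}: starting from any $(\epsilon,\delta)$-DP algorithm $\cA$ on $\cX^n$ whose $\ell_2$-error is hypothetically below the target, I would build an $(\epsilon,\delta)$-DP algorithm $\cA'$ on $(\{0,1\}^p)^m$ that is $(1/3,1/75)$-accurate for one-way marginals, for some $m \le n^*$. Since $\delta = o(1/n)$ implies $\delta = o(1/m)$ whenever $m \le n$, this directly contradicts \Cref{lemma:bun-fingerprint-1way-marginal} and forces the claimed error lower bound.

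The hard datasets are built in two blocks. Given $\widetilde D \in (\{0,1\}^p)^m$, take the rescaled/augmented block $\chi_L(\widetilde D) \in \cX^{3m}$ from \Cref{claim:qj-qjaug}, and fill the remaining $n-3m$ slots with antipodal pairs $(v,-v) \in \cX^2$ (valid because $\cX = \prod_j\{\pm L_j\}$ is closed under negation; any parity issue costs at most one extra record, absorbed by the $\pm 1$ slack on $M$). Since this padding contributes $0$ to $\sum_i d_i$, the Claim yields $\norm{\sum_i d_i}_2 = \Theta(m\norm{L}_2)$ and $q(D) = \tfrac{3m}{n}\, q(\chi_L(\widetilde D))$. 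Choosing $m = \Theta(\min(n/3, n^*))$ produces $M = \Theta(\min(n\norm{L}_2, \sqrt{p}\norm{L}_2/\epsilon))$, as required. The reduction is then obtained by inverting both affine maps in the construction:
\begin{align*}
  \cA'(\widetilde D)_j \;=\; \frac{n}{2mL_j}\,\cA(D)_j \;-\; \frac{1}{2}.
\end{align*}
Because altering one record of $\widetilde D$ alters exactly one row of $D$, $\cA'$ inherits $(\epsilon,\delta)$-DP, and a direct calculation gives $\cA'(\widetilde D)_j - q_j(\widetilde D) = \tfrac{n}{2mL_j}(\cA(D)_j - q_j(D))$, so the per-coordinate bound $|\cA'(\widetilde D)_j - q_j(\widetilde D)| \le 1/3$ is equivalent to $|\cA(D)_j - q_j(D)| \le \tfrac{2mL_j}{3n}$.

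Suppose for contradiction that $\cA$ achieves $\ell_2$-error strictly below $c \cdot m\norm{L}_2/n$ on every dataset with probability $\ge 2/3$, for a small enough constant $c$. If the coordinate-wise accuracy of $\cA'$ failed on a set $\cJ$ with $|\cJ| \ge \lceil p/75\rceil$, then squaring and summing would force $\norm{\cA(D) - q(D)}_2^2 \ge \tfrac{4m^2}{9n^2}\sum_{j\in\cJ} L_j^2 = \Omega(m^2\norm{L}_2^2/n^2)$, where the last step invokes the hypothesis $\sum_{j\in\cJ} L_j^2 = \Omega(\norm{L}_2^2)$. For $c$ small enough this contradicts the error assumption, so $\cA'$ would be $(1/3,1/75)$-accurate with probability $\ge 2/3$, contradicting \Cref{lemma:bun-fingerprint-1way-marginal}. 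The main obstacle will be the bookkeeping across two regimes of $m$: taking $m = \lfloor n/3 \rfloor$ for $n \le 3n^*$ to deliver the $\Omega(\norm{L}_2)$ bound, and $m = n^*$ for $n > 3n^*$ so the padding block dominates and the $\Omega(\sqrt{p}\norm{L}_2/(n\epsilon))$ bound emerges; the absolute constants from the Bun lemma and from the coordinate-to-$\ell_2$ reduction must be aligned in both cases, and one must verify that the assumption $\sum_{j\in\cJ} L_j^2 = \Omega(\norm{L}_2^2)$ is genuinely needed precisely to convert per-coordinate failure into a quantitatively sufficient $\ell_2$-error.
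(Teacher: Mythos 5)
Your proposal is correct and takes essentially the same route as the paper's proof: a reduction to the fingerprinting lower bound of \Cref{lemma:bun-fingerprint-1way-marginal} via the rescaling/augmentation map $\chi_L$ of \Cref{claim:qj-qjaug} (with the same affine inversion of the marginals), antipodal padding to reach the large-$n$ regime, and the hypothesis $\sum_{j\in\cJ} L_j^2 = \Omega(\norm{L}_2^2)$ to convert per-coordinate failure on a $1/75$-fraction of indices into the claimed $\ell_2$ bound. Your uniform handling of both regimes through the single parameter $m = \Theta(\min(n/3, n^*))$, padding so the dataset has size exactly $n$, is only a presentational tidying of the paper's two-case argument.
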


\begin{proof}
  Let $M = \Omega\left(\min\left(n \norm{L}_2, \frac{\sqrt{p} \norm{L}_2}{\epsilon}\right)\right)$,
  and define the set of queries $\cQ$ composed of $p$ queries $q_j(D) = \frac{1}{n}\sum_{i=1}^n d_{i,j}$
  for $j \in [p]$.
  Let $\cA$ be a $(\epsilon, \delta)$-differentially-private randomized
  algorithm.
  Let $\alpha, \beta \in [0, 1]$.
  We will show that there exists a dataset $D$ such that
  $\norm{\sum_{i=1}^n d_i}_2 \in [M-1, M+1]$ for which $\cA(D)$ is not
  $(\alpha, \beta)$-accurate.

  \paragraph{When \boldmath$n \le n^*$.}
  Assume, for the sake of contradiction, that $\cA : \cX^{3n} \rightarrow \conv(\cX)$
  is $(\tfrac{1}{3}\alpha, \beta)$-accurate for $\cQ$.
  Then, for each dataset $D' \in \cX^{3n}$, we have
  \begin{align}
    \label{thm:lower-bound-one-way-marginals:accuracy-A}
    \prob{ \exists \cJ \subseteq [p]~\text{such that }\card{\cJ}
      \ge (1 - \beta)p \text{ and }\forall j \in \cJ,~
      \abs{\cA_j(D') - q_j(D')} < \frac{2L_j}{3} \alpha }
    \ge 2/3\enspace.
  \end{align}
  Importantly, for all $D \in (\{0,1\})^p)^n$, the randomized algorithm $\cA$
  satisfies~\eqref{thm:lower-bound-one-way-marginals:accuracy-A} for the
  dataset $D_L^{aug} = \chi_L(D) \in \cX^{3n}$.
  We now construct the mechanism $\widetilde \cA : (\{0, 1\}^p)^n \rightarrow [0, 1]^p$
  that takes a dataset $D \in (\{0, 1\}^p)^n$, constructs $D_L^{aug} = \chi_L(D)$
  and runs $\cA$ on it.
  It then outputs $\widetilde \cA(D)$ such that, for $j \in [p]$,
  $\widetilde \cA_j(D) = \frac{3}{2L_j} \cA_j(D_L^{aug}) - \frac{L_j}{3}$.
  Using \Cref{claim:qj-qjaug}, the results of $\widetilde \cA$ and be linked to
  the ones of $\cA$, as
  \begin{align}
    \label{thm:lower-bound-one-way-marginals:rewrite-wdA}
    \abs{\widetilde \cA(D) - q_j(D)}
    = \abs{\frac{3}{2L_j} \cA_j(D_L^{aug})- \frac{L_j}{3} - \frac{3}{2L_j} q_j(D_L^{aug}) + \frac{L_j}{3}}
    = \frac{3}{2L_j} \abs{\cA_j(D_L^{aug}) - q_j(D_L^{aug})}\enspace.
  \end{align}

  Therefore, if $\cA$ satisfies~\eqref{thm:lower-bound-one-way-marginals:accuracy-A}
  and \eqref{thm:lower-bound-one-way-marginals:rewrite-wdA}, then
  $\widetilde \cA : (\{0, 1\}^p)^n \rightarrow [0, 1]^p$ satisfies,
  for all $D \in (\{0, 1\}^p)^n$,
  \begin{align}
    \label{thm:lower-bound-one-way-marginals:accuracy-widetildeA}
    \prob{ \exists \cJ \subseteq [p]~\text{such that }\card{\cJ}
      \ge (1 - \beta)p \text{ and }\forall j \in \cJ,~
      \abs{\widetilde \cA_j(D) - q_j(D)} < \alpha }
    \ge 2/3\enspace,
  \end{align}
  which is exactly the definition of $(\alpha, \beta)$-accuracy for $\widetilde\cA$.
  Remark that since $\widetilde\cA$ is only a post-processing of $\cA$, without
  additional access to the dataset itself, $\widetilde\cA$ is itself
  $(\epsilon, \delta)$-differentially-private.
  We have thus constructed an algorithm that is both accurate and private for
  $n \le n^*$, which contradicts the result of
  \Cref{lemma:bun-fingerprint-1way-marginal} when $\beta = \frac{1}{75}$.
  This proves the existence of a dataset $D \in (\{0, 1\}^p)^n$ such that
  for $D_L^{aug} = \chi_L(D)$, $\cA(D_L^{aug})$ is not
  $(\tfrac{1}{3}\alpha, \beta)$-accurate on $\cQ$, which means that with
  probability at least $1/3$, there exists a subset $\cJ \subseteq [p]$ of
  cardinal $\card{\cJ} \ge \lceil \beta p \rceil$ such that
  \begin{align}
    \label{eq:thm:lower-bound-one-way-marginals:upper-bound-normL}
    \norm{\cA(D_L^{aug}) - q(D_L^{aug})}_2
    \overset{\eqref{thm:lower-bound-one-way-marginals:accuracy-A}}{\ge}
    \sqrt{\sum_{j \in \cJ} \frac{4L_j^2}{9}}
    \ge \Omega( \norm{L}_2 )\enspace,
  \end{align}
  where the second inequality comes from the fact that
  $\card{\cJ} \ge  \lceil \beta p \rceil = \lceil \frac{p}{75} \rceil$ and
  our hypothesis on $\sum_{j\in\cJ} L_j^2$.
  Notice that when $L_1 = \cdots = L_p = \frac{1}{\sqrt{p}}$, we recover the
  result of \citet{bassily2014Private}, since $\norm{L}_2 = 1$ it holds
  with probability at least $1/3$ that
  \begin{align}
    \norm{\cA(D_L^{aug}) - q(D_L^{aug})}_2
    \overset{\eqref{thm:lower-bound-one-way-marginals:accuracy-A}}{\ge}
    \sqrt{\sum_{j \in \cJ} \frac{4L_j^2}{9}}
    \ge \sqrt{\frac{4}{9 \times 75}} \norm{L}_2
    \ge \frac{2}{27}\enspace,
  \end{align}
  and in that case, since all $L_j$'s are equal, it indeed holds that
  $\sqrt{\sum_{j \in \cJ} L_j^2} = \Omega(\norm{L}_2)$.
  Finally, we remark that the sum of each column of $D_L^{aug}$ is
  $\sum_{i=1}^n d_{i,j} \ge n L_j$, and as such, we have
  $\norm{\sum_{i=1}^n d_i}_2 = \sqrt{\sum_{j=1}^p (\sum_{i=1}^n d_{i,j})^2}
    \ge \sqrt{\sum_{j=1}^p n^2 L_j^2} = n \norm{L}_2$.

  \paragraph{When \boldmath$n > n^*$.}
  We get the result in that case by augmenting the dataset $D^*$ that we
  constructed in the first part of this proof.
  To do so, we follow the steps described by \citet{bassily2014Private} in the
  proof of their Lemma 5.1.
  The construction consists in choosing a vector $c \in \cX$, and adding
  $\lceil \frac{n-n^*}{2} \rceil$ rows with $c$, and
  $\lfloor \frac{n-n^*}{2} \rfloor$ rows with $-c$ to the dataset $D^*$.
  This results in a dataset $D'$ such that
  $\norm{\sum_{i=1}^n d_i} = \Omega(n^* \norm{L}_2) = \Omega(\frac{\sqrt{p} \norm{L}_2}{\epsilon})$,
  since the contributions of rows $-c$ and $c$ (almost) cancel out.
  The theorem follows from observing that $(\frac{n^*}{n} \alpha,
    \beta)$-accuracy
  on this augmented dataset implies $(\alpha, \beta)$-accuracy on the original
  dataset.
  As such, if an algorithm is both private and $(\frac{n^*}{n} \alpha, \beta)$-accurate
  on the dataset $D'$, we get a contradiction, which gives the theorem as
  $\frac{n^*}{n} = \frac{\sqrt{p}}{n\epsilon}$.
\end{proof}

\begin{remark}
  \label{rmq:lower-bound-one-way-marginals-no-hyp-Lj}
  Without the assumption on the distribution of the $L_j$'s, we can still get
  an inequality that resembles~\eqref{eq:thm:lower-bound-one-way-marginals:upper-bound-normL}:
  $\norm{\cA(D_L^{aug}) - q(D_L^{aug})}_2
    \overset{\eqref{thm:lower-bound-one-way-marginals:accuracy-A}}{\ge}
    \sqrt{\sum_{j \in \cJ} \frac{4L_j^2}{9}}
    \ge \frac{2}{27} \frac{L_{\min}}{L_{\max}} \norm{L}_2$,
  with probability at least $1/3$, and we get a result similar to
  \Cref{thm:lower-bound-one-way-marginals}, except
  with an additional multiplicative factor $L_{\min}/L_{\max}$.
\end{remark}

\subsection{Lower Bound for Convex Functions}
\label{sec:convex-functions}

To prove a lower bound for our problem in the convex case, we let $L_1,
  \cdots, L_p > 0$ and
define a dataset $D = \{d_1, \dots, d_n\}$ taking its values in a set
$\cX = \prod_{j=1}^p \{\pm L_j\}$.
For $\beta > 0$, we consider the problem
\eqref{eq:dp-erm}
with the convex, smooth and $L$-component-Lipschitz loss function
$\ell(w; d) = - \scalar{w}{d}$ and the convex, separable regularizer
$\psi(w) = \frac{\norm{\sum_{i=1}^n d_i}_2}{\beta n} \norm{w}_2^2$:
\begin{align}
  \label{eq:dp-erm-lower-bound-convex}
  w^* = \argmin_{w\in\RR^p} \left\{
  F(w; D) = - \frac{1}{n}\scalar{w}{\textstyle{\sum_{i=1}^n d_i}}
  + \frac{\norm{\sum_{i=1}^n d_i}_2}{\beta n} \norm{w}_2^2 \right\}\enspace,
\end{align}
To find the solution of \eqref{eq:dp-erm-lower-bound-convex}, we look for $w^*$
so that the objective's gradient is zero, that is
\begin{align}
  \label{eq:dp-erm-lower-bound-convex-grad-obj-zero}
  w^* = \frac{\beta}{\norm{\sum_{i=1}^n d_i}_2} \sum_{i=1}^n d_i\enspace,
\end{align}
so that $\norm{w^*}_2 =  \frac{\beta}{\norm{\sum_{i=1}^n d_i}_2} \norm{\sum_{i=1}^n d_i}_2 = \beta$.
To prove the lower bound, we remark that
\begin{align}
  F(w; D) - F(w^*; D)
   & = - \frac{1}{n} \scalar{w - w^*}{\textstyle{\sum_{i=1}^n d_i}}
  + \frac{\norm{\textstyle{\sum_{i=1}^n d_i}}}{2 \beta n} (\norm{w}^2_2 - \norm{w^*}_2^2)      \\
   & = - \frac{1}{n} \scalar{w - w^*}{\frac{\norm{\textstyle{\sum_{i=1}^n d_i}}}{\beta} w^*}
  + \frac{\norm{\textstyle{\sum_{i=1}^n d_i}}}{2\beta n} (\norm{w}^2_2 - \norm{w^*}_2^2)       \\
   & = \frac{\norm{\textstyle{\textstyle{\sum_{i=1}^n d_i}}}}{\beta n}
  \left( \scalar{w^* - w}{w^*} + \frac{1}{2} \norm{w}_2^2 - \frac{1}{2} \norm{w^*}_2^2 \right) \\
   & = \frac{\norm{\textstyle{\textstyle{\sum_{i=1}^n d_i}}}}{\beta n}
  \left( - \scalar{w}{w^*} + \frac{1}{2} \norm{w}_2^2 + \frac{1}{2} \norm{w^*}_2^2 \right)     \\
   & = \frac{\norm{\textstyle{\textstyle{\sum_{i=1}^n d_i}}}}{2 \beta n}
  \norm{w - w^*}_2^2\enspace.
\end{align}

At this point, we can proceed similarly to \citet{bassily2014Private}
to relate this quantity to private estimation of one-way marginals.
We let $M = \Omega(\min(n\norm{L}_2, \norm{L}_2\sqrt{p}/\epsilon))$
and $\cA$ be an $(\epsilon,\delta)$-differentially private mechanism
that outputs a private solution $w^{priv}$ to
\eqref{eq:dp-erm-lower-bound-convex}.  Suppose, for the sake of
contradiction, that for every dataset $D$ with
$\norm{\textstyle{\sum_{i=1}^n d_i}}_2 \in [M-1; M+1]$, it holds with
probability at least $2/3$ that
\begin{align}
  \label{lower-bound-convex-function-inequality-beta}
  \norm{w^{priv} - w^*} \neq \Omega(\beta)\enspace.
\end{align}

We now derive from $\cA$ a mechanism $\widetilde \cA$ to estimate one-way
marginals. To do this, $\widetilde \cA$ runs $\cA$ to obtain $w^{priv}$ and
outputs $\frac{M}{n\beta} w^{priv}$.
We obtain that with probability at least $2/3$,
\begin{align}
  \label{lower-bound-convex-function-inequality-contradiction-with-beta}
  \norm{\widetilde \cA(D) - q(D)}_2
  = \frac{M}{n\beta} \norm{w^{priv} - \frac{\beta}{M} \textstyle{\sum_{i=1}^n d_i}}_2
  \not= \Omega\left(\frac{M}{n}\right)
  = \Omega\left(\min\left(\norm{L}_2, \frac{\norm{L}_2\sqrt{p}}{n\epsilon}\right)\right)\enspace.
\end{align}
where $q(D) = \frac{1}{n} \sum_{i=1}^n d_i$.
This is in contradiction with \Cref{thm:lower-bound-one-way-marginals}.
We thus proved that $\norm{w^{priv} - w^*} = \Omega(\beta)$, with probability at
least $1/3$.
As a consequence, we now obtain that with probability at least $1/3$,
\begin{align}
  F(w^{priv}; D) - F(w^*; D)
  = \frac{\norm{\textstyle{\textstyle{\sum_{i=1}^n d_i}}}}{2 \beta n} \norm{w^{priv} - w^*}_2^2
  = \Omega\left(\min\left(\norm{L}_2\beta, \frac{\beta\norm{L}_2\sqrt{p}}
    {n\epsilon}\right)\right)\enspace,
\end{align}
which gives the desired result on the expectation of $F(w^{priv}; D) - F(w^*; D)$.

Finally, if we do not make any hypothesis on the $L_j$'s distribution, we
can directly use
the non-augmented dataset constructed by \citet{bun2014Fingerprinting} to prove
\Cref{lemma:bun-fingerprint-1way-marginal} (that is the dataset from
\Cref{thm:lower-bound-one-way-marginals}, rescaled but not augmented).
The $\ell_2$-norm of the sum of this dataset is
$\norm{\sum_{i=1}^n d_j}_2 = [M' - 1, M' + 1]$ with
$M' = \Omega\left(\min\left(\frac{L_{\min}}{L_{\max}} n \norm{L}_2, \frac{L_{\min}}{L_{\max}} \frac{\sqrt{p}\norm{L}_2}{\epsilon} \right)\right)$.
This holds since four columns of this dataset out of five have sum of
$\pm n L_j$ (for some $j$'s), but no lower bound on the sum of the remaining
columns can be derived.
Thus, assuming~\eqref{lower-bound-convex-function-inequality-beta} holds,
then~\eqref{lower-bound-convex-function-inequality-contradiction-with-beta} can
be rewritten as
\begin{align}
  \norm{\widetilde \cA(D) - q(D)}_2
  = \frac{M'}{n\beta} \norm{w^{priv} - \frac{\beta}{M} \textstyle{\sum_{i=1}^n d_i}}_2
  \not= \Omega\left(\frac{M'}{n}\right)
  = \Omega\left(\min\left(\frac{L_{\min}}{L_{\max}} \norm{L}_2, \frac{L_{\min}}{L_{\max}} \frac{\norm{L}_2\sqrt{p}}{n\epsilon}\right)\right)\enspace,
\end{align}
with probability at least $1/3$, which is in contradiction with
Remark~\ref{rmq:lower-bound-one-way-marginals-no-hyp-Lj}.
We thus get an additional factor of $L_{\min}/L_{\max}$ in the lower bound:
\begin{align}
  F(w^{priv}; D) - F(w^*; D)
  = \frac{\norm{\textstyle{\textstyle{\sum_{i=1}^n d_i}}}}{2 \beta n} \norm{w^{priv} - w^*}_2^2
  = \Omega\left(\min\left(\frac{L_{\min}}{L_{\max}}\norm{L}_2\beta, \frac{L_{\min}}{L_{\max}}\frac{\beta\norm{L}_2\sqrt{p}}{n\epsilon}\right)\right)\enspace.
\end{align}

\subsection{Lower Bound for Strongly-Convex Functions}
\label{sec:strongly-conv-funct}

To prove a lower bound for strongly-convex functions, we let $\mu_I > 0$,
$L_1, \dots, L_p > 0$, $\cW = \prod_{j=1}^p [-\frac{L_j}{2\mu_I}, +\frac{L_j}{2\mu_I}]$ and
$D = \{ d_1, \dots, d_n \} \in \prod_{j=1}^p \{\pm \frac{L_j}{2\mu_I} \}$.
We consider the following problem, which fits in our setting:
\begin{align}
  \label{eq:dp-erm-lower-bound-strongly-convex}
  w^* = \argmin_{w\in\RR^p} \left\{
  F(w;D) = \frac{\mu_I}{2n} \sum_{i=1}^n \norm{w - d_i}_2^2
  + i_{\cW}(w) \right\}\enspace.
\end{align}
where $i_{\cW}$ is the (separable) characteristic function of the set
$\cW$. Since $\psi = i_{\cW}$ is the characteristic function of a box-set, the
proximal operator is equal to the projection on $\cW$ and DP-CD iterates
are thus guaranteed to remain in $\cW$. Therefore, regularity
assumptions on $f$ only need to hold on $\cW$, as pointed out in \Cref{rmq:constrained-regularity-assumptions}.  The loss
function $\ell(w;d_i)=\frac{\mu_I}{2} \norm{w - d_i}_2^2$ is
$L$-component-Lipschitz on $\cW$ since, for $w \in \cW$ and $j \in [p]$,
the triangle inequality gives:
\begin{align}
  \abs{\nabla_j \ell(w;d_i)}
  \le \mu_I (\abs{w_j} + \abs{d_{i,j}})
  \le \mu_I \left(\frac{L_j}{2\mu_I} + \frac{L_j}{2\mu_I}\right)
  \le L_j\enspace.
\end{align}
This loss is also $\mu_I$-strongly convex \wrt $\ell_2$-norm since
for $w, w' \in \cW$,
\begin{align}
  \ell(w; d_i)
  = \frac{\mu_I}{2} \norm{w - d_i}_2^2
  = \frac{\mu_I}{2} \norm{w' - d_i + w - w'}_2^2
  = \frac{\mu_I}{2} \left(\norm{w' - d_i}_2^2 + 2\scalar{w' - d_i}{w - w'} + \norm{w - w'}_2^2\right)\enspace,
\end{align}
which is exactly $\mu_I$-strong convexity since
$\ell(w';d_i) = \frac{\mu_I}{2} \norm{w' - d_i}_2^2$ and
$\nabla \ell(w'; d_i) = \mu_I(w - d_i)$.
The minimum of the objective function in~\eqref
{eq:dp-erm-lower-bound-strongly-convex} is attained
at $w^* = \frac{1}{n} \sum_{i=1}^n d_i = q(D) \in \cW$.
The excess risk of $F$ is thus
\begin{align}
  F(w; D) - F(w^*)
   & = \frac{\mu_I}{2n} \sum_{i=1}^n \norm{w - d_i}_2^2 - \norm{w^* - d_i}_2^2           \\
   & = \frac{\mu_I}{2n} \sum_{i=1}^n \norm{w}^2 - \norm{w^*}^2 + 2 \scalar{d_i}{w^* - w} \\
   & = \frac{\mu_I}{2}\norm{w}^2 - \frac{1}{2}\norm{w^*}^2 + \scalar{w^*}{w^* - w}       \\
   & = \frac{\mu_I}{2} \norm{w - q(D)}_2^2\enspace.
\end{align}

It remains to apply \Cref{thm:lower-bound-one-way-marginals} to obtain that, with
probability at least $1/3$,
\begin{align}
  F(w^{priv}; D) - F(w^*)
   & = \Omega\left( \min\left(\frac{\norm{L}_2^2}{\mu_I}, \frac{\norm{L}_2^2 p}{\mu_I n^2\epsilon^2}\right)\right)\enspace,
\end{align}
which gives the lower bound on the expected value of $F(w^{priv}; D) - F(w^*)$.
Note that without the additional assumption on the distribution of the $L_j$'s,
Remark~\ref{rmq:lower-bound-one-way-marginals-no-hyp-Lj} directly gives the
result with an additional multiplicative factor $(L_{\min} / L_{\max})^2$:
\begin{align}
  F(w^{priv}; D) - F(w^*)
   & = \Omega\left( \min\left(\frac{L_{\min}^2}{L_{\max}^2}\frac{\norm{L}_2^2}{\mu_I},
    \frac{L_{\min}^2}{L_{\max}^2}\frac{\norm{L}_2^2 p}{\mu_I n^2\epsilon^2}\right)\right)\enspace,
\end{align}
with probability at least $1/3$.

\section{Private Estimation of Smoothness Constants}
\label{sec:priv-estim-smoothn}

In this section, we explain how a fraction $\epsilon'$ of the $\epsilon$ budget
of DP
can be used to estimate the coordinate-wise smoothness constants,
which are essential to the good performance of DP-CD on imbalanced problems.
Let
$f$ be
defined as the average loss over the dataset $D$ as in
problem~\eqref{eq:dp-erm}. We
denote
by $M_j^{(i)}$ the $j$-th component-smoothness constant of
$\ell(\cdot, d_i)$, where $d_i$ is the $i$-th point in
$D$. The $j$-th smoothness constant of the function $f$ is thus the
average of all these constants:
$M_j = \frac{1}{n} \sum_{i=1}^n M_j^{(i)}$.

Assuming that the practitioner knows an approximate upper bound $b_j$
over the $M_j^{(i)}$'s, they can enforce it by clipping $M_j^{(i)}$ to
$b_j$ for each $i\in[n]$. The sensitivity of the average of the clipped $M_j^{
(i)}$'s is
thus $2b_j/n$. One can then compute an estimate of $M_1,\dots,M_p$ under $\epsilon$-DP using the Laplace mechanism as follows:
\begin{align}
  M_j^{priv} = \frac{1}{n} \sum_{i=1}^n \clip(M_j^{(i)}, b_j) + \text{Lap}\left
  (\frac{2b_jp}{n\epsilon'}\right),\quad\text{for each }j\in[p]\enspace,
\end{align}
where the factor $p$ in noise scale comes from using the simple
composition theorem \cite{dwork2013Algorithmic}, and
$\text{Lap}(\lambda)$ is a sample drawn in a Laplace distribution of
mean zero and scale $\lambda$.  The computed constant can then
directly be used in DP-CD, allocating the remaining budget
$\epsilon-\epsilon'$ to the optimization procedure.

\section{Additional Experimental Details and Results}
\label{sec:experimental-setup}

\subsection{Hyperparameter Tuning}
\label{sec:hyperp-tuning}

DP-SGD and DP-CD both depend on three hyperparameters: step size,
clipping threshold and number of passes on data. For DP-CD, step sizes
are adapted from a parameter as described in
\Cref{sec:numerical-experiments}, and clipping thresholds as well (see \Cref
{sub:clipping}). For DP-SGD, the step size is given by $\gamma/\beta$,
where $\gamma$ is the hyperparameter and $\beta$ is the problem's global
smoothness constant  (which we consider given), and the clipping threshold is
used directly to clip gradients along their $\ell_2$-norm.

We simultaneously tune these three hyperparameters for each algorithm across
the following grid:
\begin{itemize}
\item step size: 10 logarithmically-spaced values between $10^{-6}$
  and $1$ for DP-SGD, and between $10^{-2}$ and $10$ for DP-CD.\footnote{Recall that step sizes for CD algorithms are coordinate-wise, and
  thus larger than in SGD algorithms. We empirically verify that
  the best
  step size always lies strictly inside the considered interval for both
  DP-CD and DP-SGD.}
\item clipping threshold: 100 logarithmically-spaced values, between
  $10^{-3}$ and $10^{6}$.
\item number of passes: 5 values (2, 5, 10, 20 and 50).
\end{itemize}
We run each algorithm on each dataset 5~times on each combination of
hyperparameter values.  We then keep the set of hyperparameters that
yield the lowest value of the objective at the last iterate, averaged
across the $5$~runs.

In \Cref{tab:full-tuning-max-iter}, we report the best relative error
(in comparison to optimal objective value) at the last iterate,
averaged over five runs, for each dataset, algorithm, and total number
of passes on the data. As such, each cell of this table corresponds to
the best value obtained after tuning the step size and clipping
hyperparameters for a given number of passes.

\begin{table}[t]
  \centering
  \scriptsize

  \caption{
    Relative error to non-private optimal value of the
    objective function for different number of passes on the
    data. Results are reported for each dataset and for DP-CD and
    DP-SGD, after tuning step size and clipping
    hyperparameters. A star indicates the lowest error in each row.
  }

  \label{tab:full-tuning-max-iter}
  \begin{tabular}{ccccccc}
    \toprule
    & Passes on data &2  	 &5  	 &10  	 &20  	 &50  	 \\
    \midrule
    Electricity (imbalanced) & DP-CD  & $ 0.1458 \pm 6\text{e-}04 $ 	 & $ 0.0842 \pm 1\text{e-}03 $ 	 & $ 0.0436 \pm 2\text{e-}03 $ 	 & $ 0.0147 \pm 2\text{e-}03 $ 	 & $ 0.0020 \pm 1\text{e-}03 $* 	 \\
    $\epsilon=1, \delta=1/n^2$ & DP-SGD  & $ 0.2047 \pm 2\text{e-}02 $ 	 & $ 0.1804 \pm 2\text{e-}02 $ 	 & $ 0.1766 \pm 2\text{e-}02 $ 	 & $ 0.1644 \pm 2\text{e-}02 $ 	 & $ 0.1484 \pm 1\text{e-}02 $* 	 \\
    \midrule
    Electricity (balanced) & DP-CD  & $ 0.0186 \pm 4\text{e-}04 $ 	 & $ 0.0023 \pm 4\text{e-}04 $ 	 & $ 0.0013 \pm 6\text{e-}04 $* 	 & $ 0.0013 \pm 4\text{e-}04 $ 	 & $ 0.0019 \pm 8\text{e-}04 $ 	 \\
    $\epsilon=1, \delta=1/n^2$ & DP-SGD  & $ 0.0391 \pm 1\text{e-}02 $ 	 & $ 0.0189 \pm 5\text{e-}03 $ 	 & $ 0.0123 \pm 4\text{e-}03 $ 	 & $ 0.0106 \pm 3\text{e-}03 $ 	 & $ 0.0040 \pm 2\text{e-}03 $* 	 \\
    \midrule
    California (imbalanced) & DP-CD  & $ 0.1708 \pm 7\text{e-}03 $ 	 & $ 0.1232 \pm 1\text{e-}02 $ 	 & $ 0.0598 \pm 1\text{e-}02 $ 	 & $ 0.0287 \pm 5\text{e-}03 $ 	 & $ 0.0124 \pm 7\text{e-}03 $* 	 \\
    $\epsilon=1, \delta=1/n^2$ & DP-SGD  & $ 0.2799 \pm 9\text{e-}02 $ 	 & $ 0.1863 \pm 2\text{e-}02 $ 	 & $ 0.1476 \pm 2\text{e-}02 $ 	 & $ 0.1094 \pm 2\text{e-}02 $ 	 & $ 0.1068 \pm 2\text{e-}02 $* 	 \\
    \midrule
    California (balanced) & DP-CD  & $ 0.0007 \pm 3\text{e-}04 $* 	 & $ 0.0011 \pm 6\text{e-}04 $ 	 & $ 0.0012 \pm 5\text{e-}04 $ 	 & $ 0.0010 \pm 1\text{e-}04 $ 	 & $ 0.0017 \pm 1\text{e-}03 $ 	 \\
    $\epsilon=1, \delta=1/n^2$ & DP-SGD  & $ 0.0351 \pm 2\text{e-}02 $ 	 & $ 0.0226 \pm 8\text{e-}03 $ 	 & $ 0.0125 \pm 3\text{e-}03 $ 	 & $ 0.0087 \pm 2\text{e-}03 $ 	 & $ 0.0042 \pm 1\text{e-}03 $* 	 \\
    \midrule
    Sparse LASSO & DP-CD  & $ 0.2498 \pm 4\text{e-}02 $* 	 & $ 0.4702 \pm 9\text{e-}02 $ 	 & $ 0.5982 \pm 4\text{e-}02 $ 	 & $ 0.7160 \pm 2\text{e-}02 $ 	 & $ 0.7551 \pm 0\text{e+}00 $ 	 \\
    $\epsilon=10, \delta=1/n^2$ & DP-SGD  & $ 0.7551 \pm 0\text{e+}00 $ 	 & $ 0.7551 \pm 3\text{e-}09 $* 	 & $ 0.7551 \pm 0\text{e+}00 $ 	 & $ 0.7551 \pm 0\text{e+}00 $ 	 & $ 0.7551 \pm 0\text{e+}00 $ 	 \\
    \bottomrule

  \end{tabular}
\end{table}

\subsection{Running Time}
\label{sec:running-time-in}

\begin{figure}[t]
  \captionsetup[subfigure]{justification=centering}
  \centering
  \begin{subfigure}{0.045\linewidth}
    \centering
    \includegraphics[width=\linewidth]{plots/xlegend.pdf}
    \begin{minipage}{.1cm}
      \vfill
    \end{minipage}
  \end{subfigure}%
  \begin{subfigure}{0.3\linewidth}
    \centering
    \includegraphics[width=\linewidth]{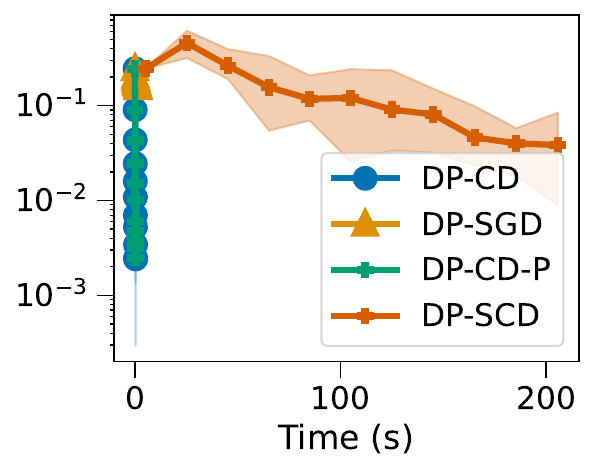}
    \caption{Electricity (logistic).\\ Imbalanced.}
    \label{fig:expe-time-electricity-raw}
  \end{subfigure}%
  \begin{subfigure}{0.3\linewidth}
    \centering
    \includegraphics[width=\linewidth]{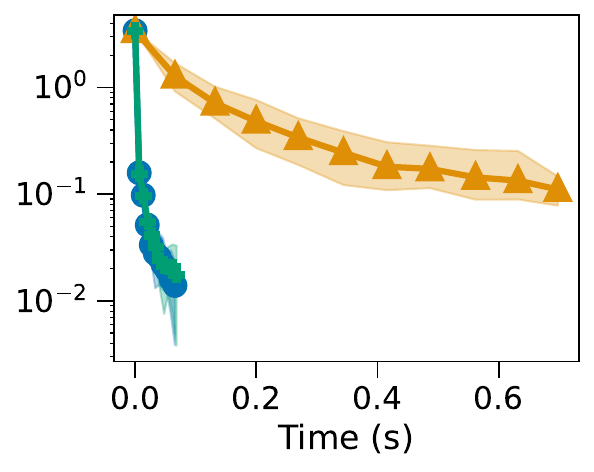}
    \caption{California (LASSO).\\ Imbalanced.}
    \label{fig:expe-time-california-raw}
  \end{subfigure}
  \begin{subfigure}{0.045\linewidth}
    \centering
    \includegraphics[width=\linewidth]{plots/xlegend.pdf}
    \begin{minipage}{.1cm}
      \vfill
    \end{minipage}
  \end{subfigure}%
  \begin{subfigure}{0.3\linewidth}
    \centering
    \includegraphics[width=\linewidth]{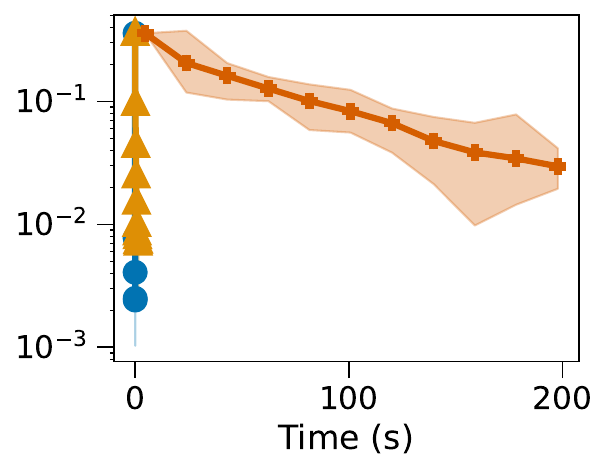}
    \caption{Electricity (logistic).\\ Balanced.}
    \label{fig:expe-time-electricity-norm}
  \end{subfigure}%
  \begin{subfigure}{0.3\linewidth}
    \centering
    \includegraphics[width=\linewidth]{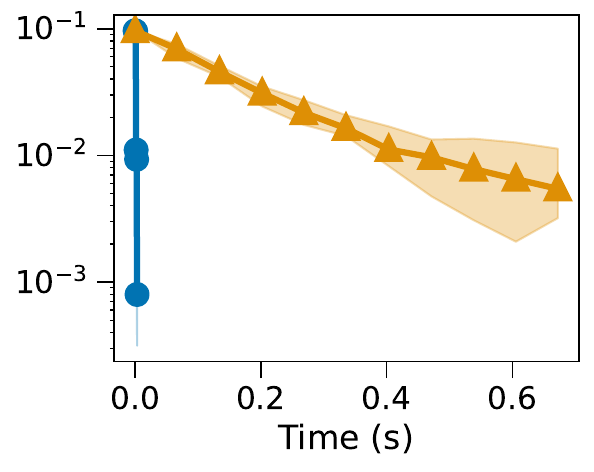}
    \caption{California (LASSO).\\ Balanced.}
    \label{fig:expe-time-california-norm}
  \end{subfigure}
  \begin{subfigure}{0.3\linewidth}
    \centering
    \includegraphics[width=\linewidth]{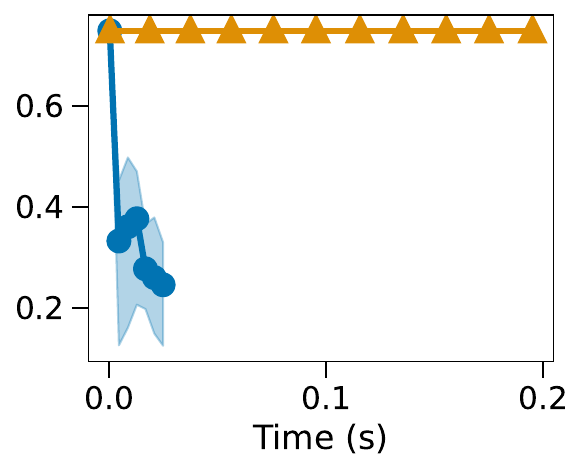}
    \caption{Sparse LASSO.\\ Balanced.}
    \label{fig:expe-time-lasso}
  \end{subfigure}

  \caption{Relative error to non-private optimal for DP-CD (blue,
    round marks), DP-CD with privately estimated coordinate-wise smoothness
    constants (green, + marks) and DP-SGD (orange, triangle marks) on
    five problems. We report average, minimum and maximum values over
    10~runs for each algorithm, as a function of the algorithm running
    time (in seconds).  }
  \label{fig:expe-time}
\end{figure}

\aurelien{say something about the implementation (and recall it is included
as supplementary material), and the machine on which the experiments are run.}

In this section, we report the running times of DP-CD and DP-SGD.  We
implemented DP-CD and DP-SGD in C++, with Python bindings\footnote{The
  code is available at
  \url{https://gitlab.inria.fr/pmangold1/private-coordinate-descent/}.}. The
design matrix and the labels are kept in memory as dense matrices of
the Eigen library. No special code optimization nor tricks is applied
to the algorithms, except for the update of residuals at each
iteration of DP-CD, which prevents from accessing the complete dataset
at each step. All experiments were run on a laptop with 16GB of RAM
and an Intel(R) Core(TM) i7-10610U CPU @ 1.80GHz.

\Cref{fig:expe-time} shows the same experiments as in
\Cref{fig:expe-raw} and \Cref{fig:expe-standardized}, but as a
function of the running time. In our implementation, DP-CD runs about
$4$ times as fast as DP-SGD for a given number of iterations (see
\Cref{fig:expe-time-electricity-raw} and
\Cref{fig:expe-time-california-raw} for $50$ iterations). On the three
other plots, \Cref{fig:expe-time-electricity-norm},
\Cref{fig:expe-time-california-norm} and \Cref{fig:expe-time-lasso},
DP-CD yields better results in less iterations. DP-CD is thus
particularly valuable in these scenarios: combined with its faster
running time, it provides accurate results extremely fast.  For
completeness, we provide in \Cref{tab:full-tuning-runtime} the full
table of running time, corresponding to
\Cref{tab:full-tuning-max-iter} and \Cref{fig:expe-time}. These
results show that, for a given number of passes on the data, DP-CD
consistently runs about $5$ times faster than DP-SGD.

\begin{table}[H]
  \centering
  \scriptsize

  \caption{ Time of execution (in seconds) for different number of
    passes on the data (averaged over 10~runs). Results are reported
    for each dataset and for DP-CD and DP-SGD, after tuning step size
    and clipping hyperparameters.
  }

  \label{tab:full-tuning-runtime}
  \begin{tabular}{ccccccc}
    \toprule
    & Passes on data &2  	 &5  	 &10  	 &20  	 &50  	 \\
    \midrule
    Electricity (imbalanced) & DP-CD & $ 0.0128 \pm 1\text{e-}03 $ 	 & $ 0.0274 \pm 1\text{e-}03 $ 	 & $ 0.0500 \pm 1\text{e-}03 $ 	 & $ 0.0980 \pm 7\text{e-}04 $ 	 & $ 0.2457 \pm 2\text{e-}03 $ 	 \\
    $\epsilon = 1, \delta=1/n^2$ & DP-SGD & $ 0.0663 \pm 2\text{e-}03 $ 	 & $ 0.1722 \pm 1\text{e-}02 $ 	 & $ 0.3321 \pm 1\text{e-}02 $ 	 & $ 0.6729 \pm 1\text{e-}02 $ 	 & $ 1.8588 \pm 2\text{e-}01 $ 	 \\
    \midrule
    Electricity (balanced) & DP-CD & $ 0.0121 \pm 7\text{e-}04 $ 	 & $ 0.0281 \pm 3\text{e-}03 $ 	 & $ 0.0529 \pm 2\text{e-}03 $ 	 & $ 0.1062 \pm 6\text{e-}03 $ 	 & $ 0.2577 \pm 2\text{e-}03 $ 	 \\
    $\epsilon = 1, \delta=1/n^2$ & DP-SGD & $ 0.0686 \pm 4\text{e-}03 $ 	 & $ 0.1768 \pm 1\text{e-}02 $ 	 & $ 0.3578 \pm 2\text{e-}02 $ 	 & $ 0.6787 \pm 2\text{e-}02 $ 	 & $ 1.6766 \pm 2\text{e-}02 $ 	 \\
    \midrule
    California (imbalanced) & DP-CD & $ 0.0029 \pm 9\text{e-}05 $ 	 & $ 0.0065 \pm 8\text{e-}05 $ 	 & $ 0.0130 \pm 1\text{e-}04 $ 	 & $ 0.0258 \pm 1\text{e-}04 $ 	 & $ 0.0647 \pm 2\text{e-}04 $ 	 \\
    $\epsilon = 1, \delta=1/n^2$ & DP-SGD & $ 0.0269 \pm 1\text{e-}03 $ 	 & $ 0.0665 \pm 1\text{e-}03 $ 	 & $ 0.1318 \pm 2\text{e-}03 $ 	 & $ 0.2628 \pm 3\text{e-}03 $ 	 & $ 0.6476 \pm 8\text{e-}03 $ 	 \\
    \midrule
    California (balanced) & DP-CD & $ 0.0031 \pm 2\text{e-}04 $ 	 & $ 0.0065 \pm 2\text{e-}04 $ 	 & $ 0.0132 \pm 1\text{e-}04 $ 	 & $ 0.0262 \pm 2\text{e-}04 $ 	 & $ 0.0649 \pm 3\text{e-}04 $ 	 \\
    $\epsilon = 1, \delta=1/n^2$ & DP-SGD & $ 0.0261 \pm 7\text{e-}04 $ 	 & $ 0.0641 \pm 5\text{e-}04 $ 	 & $ 0.1295 \pm 2\text{e-}03 $ 	 & $ 0.2592 \pm 4\text{e-}03 $ 	 & $ 0.6469 \pm 7\text{e-}03 $ 	 \\
    \midrule
    Sparse LASSO & DP-CD & $ 0.0244 \pm 6\text{e-}04 $ 	 & $ 0.0760 \pm 6\text{e-}04 $ 	 & $ 0.1614 \pm 4\text{e-}03 $ 	 & $ 0.3213 \pm 5\text{e-}04 $ 	 & $ 0.6598 \pm 1\text{e-}02 $ 	 \\
    $\epsilon = 10, \delta=1/n^2$ & DP-SGD & $ 0.0718 \pm 3\text{e-}03 $ 	 & $ 0.1788 \pm 4\text{e-}03 $ 	 & $ 0.3654 \pm 7\text{e-}03 $ 	 & $ 0.7292 \pm 2\text{e-}02 $ 	 & $ 1.8110 \pm 3\text{e-}02 $ 	 \\
    \bottomrule
  \end{tabular}
\end{table}

\end{document}